\newcommand\eqdef{\stackrel{\mathclap{\tiny\mbox{def}}}{=}}
\DeclareMathOperator*{\argmax}{argmax}
\DeclareMathOperator*{\argmin}{argmin}
\theoremstyle{plain}
\theoremstyle{plain}
\newenvironment{proof}[1][\protect\proofname]{\par
\normalfont\topsep6\p@\@plus6\p@\relax
\trivlist
\itemindent\parindent
\item[\hskip\labelsep\scshape #1]\ignorespaces
}{%
\endtrivlist\@endpefalse
}
\providecommand{\proofname}{Proof}
\newtheorem{theorem}{Theorem}
\newtheorem{definition}{Definition}
\newtheorem{example}{Example}
\newtheorem{lemma}{Lemma}
\newtheorem{remark}{Remark}
\providecommand{\propositionname}{Proposition}
\begin{document} 

\twocolumn[
\icmltitle{Contrastive Principal Component Analysis}
\icmlsetsymbol{equal}{*}
 
\begin{icmlauthorlist}
\icmlauthor{Abubakar Abid}{equal,stanford}
\icmlauthor{Martin J. Zhang}{equal,stanford}
\icmlauthor{Vivek K. Bagaria}{stanford}
\icmlauthor{James Zou}{stanford}
\end{icmlauthorlist}

\icmlaffiliation{stanford}{Stanford University, CA, USA}

\icmlcorrespondingauthor{James Zou}{jamesz@stanford.edu}

\icmlkeywords{PCA, unsupervised, contrastive}

\vskip 0.3in

]
\printAffiliationsAndNotice{\icmlEqualContribution} 

\begin{abstract} 
We present a new technique called contrastive principal component analysis (cPCA) that is designed to discover low-dimensional structure that is unique to a dataset, or enriched in one dataset relative to other data. The technique is a generalization of standard PCA, for the setting where multiple datasets are available -- e.g. a treatment and a control group, or a mixed versus a homogeneous population -- and the goal is to explore patterns that are specific to one of the datasets. We conduct a wide variety of experiments in which cPCA identifies important dataset-specific patterns that are missed by PCA, demonstrating that it is useful for many applications: subgroup discovery, visualizing trends, feature selection, denoising, and data-dependent standardization. We provide geometrical interpretations of cPCA and show that it satisfies desirable theoretical guarantees.
We also extend cPCA to nonlinear settings in the form of kernel cPCA. We have released our code as a python package\textsuperscript{$\dagger$} and documentation is on Github\textsuperscript{$\ddagger$}.
\end{abstract} 

\section{Introduction}
\label{section:intro}
The principal component analysis (PCA) is one of the most widely-used methods for data exploration and visualization \cite{hotelling1933analysis}. PCA projects the data onto low dimensions and is especially powerful as an approach to visualize patterns, such as clusters and clines, in a dataset \citep{jolliffe2002principal}. In this paper, we extend PCA to the setting where we have multiple datasets and are interested in discovering patterns that are specific to, or enriched in, one dataset relative to another. 
We illustrate why this is useful via two examples.

\vspace{3mm}

\textbf{Demographically-Diverse Cancer Patients.} 
Suppose we have gene-expression measurements from individuals of different ethnicities and sexes. This data includes gene-expression levels of cancer patients $\{X_{i}\}$, which we are interested in analyzing. We also have control data, which corresponds to the gene-expression levels of healthy patients $\{Y_{i}\}$ from a similar demographic background. Our goal is to find trends and variations within cancer patients  (e.g. to identify molecular subtypes of cancer).

If we directly apply PCA to $\{X_{i}\}$, however, the top principal components may correspond to the demographic variations of the individuals instead of the subtypes of cancer because the genetic variations due to the former are likely to be larger than that of the latter \citep{garte1998role}. As we show, we can overcome this problem by noting that the healthy patients also contain the variation associated with demographic differences, but not the variation corresponding to subtypes of cancer. Thus, we can search for components in which $\{X_{i}\}$ has high variance but $\{Y_{i}\}$ has \textit{low} variance.

\textbf{Handwritten Digits on Complex Backgrounds.}
As another example, consider a dataset $\{X_{i}\}$ that consists of handwritten digits on a complex background, such as different images of grass (see Fig. \ref{fig:mnist_on_grass}a). A typical unsupervised learning task may be to cluster the data according to the digits in the image. However, if we perform standard PCA on these images, we find that the top principal components do not represent features related to the handwritten digits, but reflect the dominant variation in features related to the image background (Fig. \ref{fig:mnist_on_grass}b).

We will show that it is possible to correct for this by using a reference dataset $\{Y_{i}\}$ that consists solely of images of the grass (not necessarily the same images used in $\{X_{i}\}$ but having similar covariance between features (see Fig. \ref{fig:mnist_on_grass}c), and looking for the subspace of \textit{higher} variance in $\{X_{i}\}$ compared to $\{Y_{i}\}$. By projecting onto this subspace, we can actually visually separate the images based on the value of the handwritten digit, as shown in Fig. \ref{fig:mnist_on_grass}d. 

\begin{figure*}[]
\label{fig:mnist_on_grass}
\begin{center}
\subfigure[]{\includegraphics[width=0.18\textwidth]{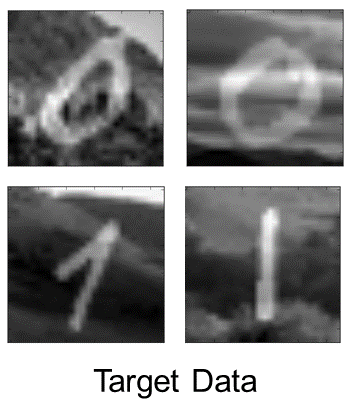}} \hspace{2mm}
\subfigure[]{\includegraphics[width=0.23\textwidth]{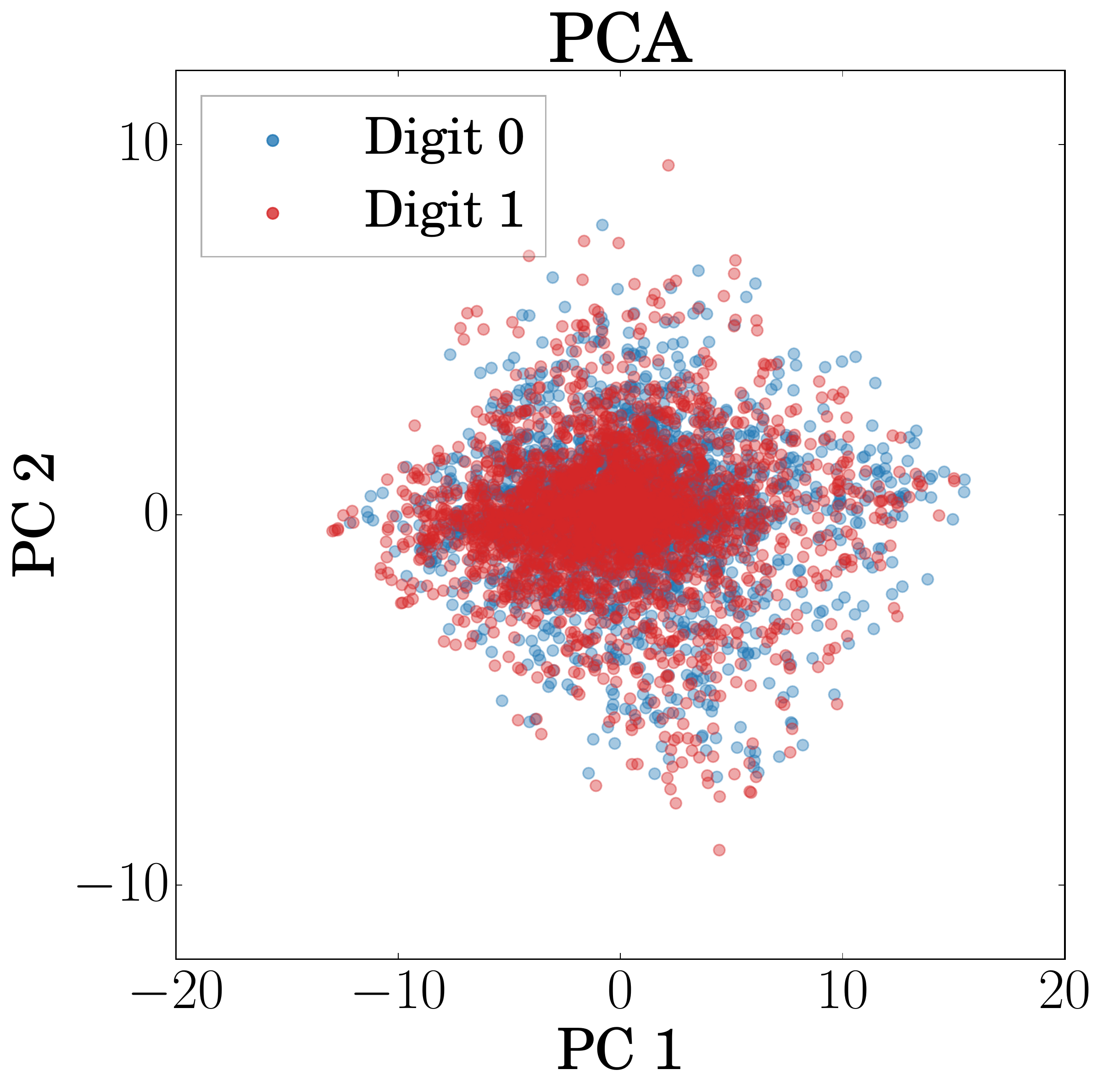}} \hspace{2mm}
\subfigure[]{\includegraphics[width=0.18\textwidth]{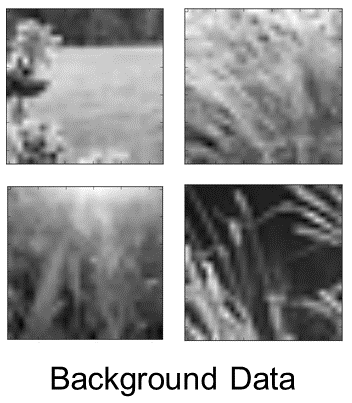}} \hspace{2mm}
\subfigure[]{\includegraphics[width=0.22\textwidth]{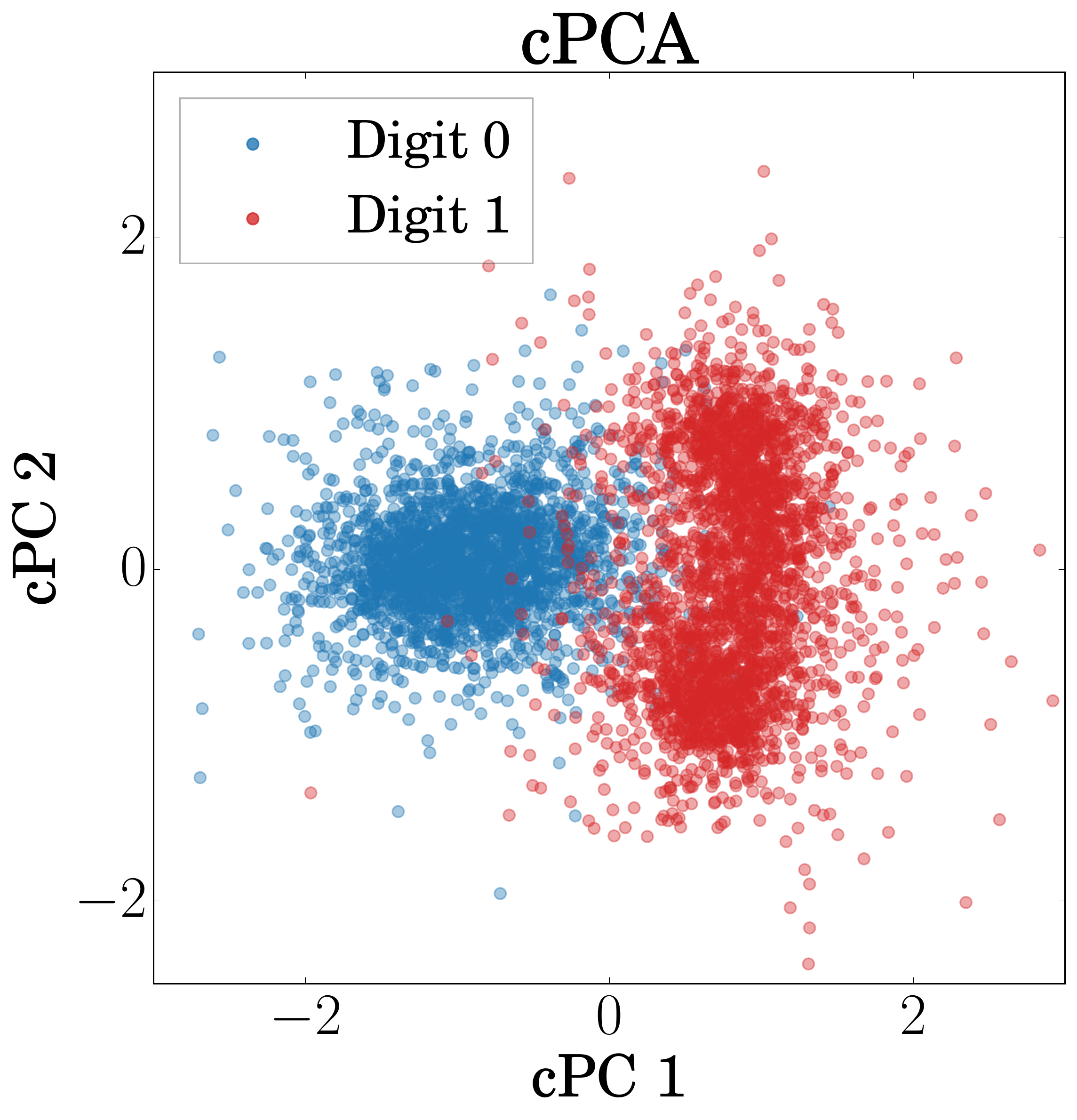}}
\end{center}
\caption{\textbf{Contrastive PCA on Synthetic Images.} (a) We create a target dataset of 5,000 synthetic images by randomly superimposing images of handwritten digits 0 and 1 from the MNIST dataset \citep{lecun1998gradient} on top of images of grass taken from ImageNet dataset \citep{russakovsky2015imagenet} belonging to the synset \textit{grass}. The images of grass are converted to grayscale, resized to be 100x100, and then randomly cropped to be the same size as the MNIST digits, 28x28. (b) Here, we plot the result of embedding the synthetic images onto their first two principal components using standard PCA. We see that the lower-dimensional embeddings of the images with 0s and images with 1s are hard to distinguish. (c) A background dataset is then introduced consisting solely of images of grass belonging to the same synset, but we use images that are different than those used to create the target dataset. (d) Using cPCA on the target and background datasets, (with a value of the contrast parameter $\alpha$ [see section \ref{section:methods}] set to $2.0$), two clusters emerge in the lower-dimensional representation of the target dataset, one consisting of images with the digit 0 and the other of images with the digit 1.}
\end{figure*}

In the above examples, $\{Y_{i}\}$ can be viewed as the \textit{background dataset} that has the universal but uninteresting features, and $\{X_{i}\}$ can be viewed as the \textit{target dataset} that carries not only the universal features but some additional interesting features as well. Then the goal can be informally stated as finding directions in which the target data varies significantly, but the background data does not. 

\vspace{2mm}

Despite this being a simple-to-state and ubiquitous problem, we currently lack a principled framework to identify such contrastive dimensions in the literature. In this work, we develop contrastive PCA, referred to as cPCA, which takes as input datasets $X$ and $Y$ and efficiently identifies lower-dimensional subspaces that capture structure specific to $X$. These directions correspond to features carried uniquely by the target dataset, and are hence are more likely to lead to meaningful discovery of the additional structure of the target data compared to the background. Indeed, through multiple experiments, we demonstrate that cPCA can be a powerful tool for data exploration, lending itself to a variety of unsupervised learning applications.
These experiments are supported by certain theoretical guarantees that we prove for cPCA.
Moreover, we extend our algorithm to nonlinear settings in the form of kernel cPCA.

\subsection{Related works}
PCA is a linear dimensionality-reduction technique that is most commonly used to visualize and explore a single dataset. There is a large number of related visualization methods; for example, t-SNE \citep{maaten2008visualizing} and multi-dimensional scaling (MDS) \citep{cox2008multidimensional} allow for nonlinear data projections and may better capture nonlinear patterns than PCA. Yet, all of these methods are designed to explore one dataset at a time. When the analyst has multiple datasets, which is often the case, then the current state-of-practice is to perform PCA (or t-SNE, MDS, etc.) on each dataset separately, and then manually compare the various projections to explore if there are interesting similarities and differences across data. 
cPCA is designed to fill in this gap in data exploration and visualization by automatically identifying the projections that exhibit the most interesting differences across datasets. 
 
We note that the primary usages of cPCA are exactly in settings where PCA is popularly deployed: efficiently reduce dimensions to enable visualization and exploratory data analysis. cPCA is a tool for unsupervised learning. This separates cPCA from a large class of supervised learning methods whose primary goal is to classify or discriminate between the various datasets. The standard supervised learning approach is to learn a classifier to predict whether a given data point comes from the target or the background set. cPCA does not try to classify individual datum; instead it seeks to visualize patterns that are specific to the target. The two-group differential statistics methods -- Fisher's discriminant analysis, two-sample t-test, Wilcoxon signed-rank test, Mann-Whitney U test -- aim to identify features that differ in their means (or other statistic) between the target and background groups \citep{du2010choosing}. While these differential features and statistics capture significant differences between the two datasets, they do not try to discover patterns in the target data itself, unlike cPCA. 

In Section \ref{section:applications}, we will demonstrate several example applications of cPCA. In a specific application domain, there may be specialized tools in that domain with similar goals as cPCA. For example, in Section \ref{subsection:ancestry}, we will show how cPCA applied on genotype data visualizes geographical ancestry within Mexico. Exploring fine-grained clusterings of genetic ancestries is an important problem in population genetics, and researchers have recently developed an algorithm to specifically visualize such ancestry clusters \citep{moreno2014genetics}. While cPCA performs well here, the expert-crafted algorithm might perform even better for a specific dataset. However, the specialized algorithm requires substantial domain knowledge to design, is more computationally expensive, and can be challenging to use. The goal of cPCA is not to replace all these specialized state-of-the-art methods in each of their domains, but to provide a general method for exploring arbitrary datasets. Whenever PCA is used to identify patterns in related datasets, cPCA can be used with essentially the same computational efficiency.

\section{The cPCA Algorithm}
\label{section:methods}

Let $\{X_i\}_{i=1}^n$ and $\{Y_j\}_{j=1}^m$ be two datasets where $X_i, Y_i \in \mathbb{R}^d$. 
We are interested in finding patterns that are enriched in the $X_i$'s relative to the $Y_i$'s.
As before, we refer to $\{X_i\}$ as the target data and $\{Y_j\}$ as the background data.
Without loss of generality, we assume the data have been centered and 
use $C_X$ and $C_Y$ to denote their respective empirical covariance matrices.

Let $\mathbb{R}^d_{\text{unit}}$ be the set of vectors in $\mathbb{R}^d$ with unit norm. 
For any direction $\mathbf{v}\in \mathbb{R}^d_{\text{unit}}$, its corresponding variances in the target and in the background can be written as 
\begin{align*}
& \text{Target variance:}\quad\lambda_X(\mathbf{v}) \eqdef \mathbf{v}^T C_X \mathbf{v}.\\
&\text{Background variance:}\quad\lambda_Y(\mathbf{v}) \eqdef \mathbf{v}^T C_Y \mathbf{v}.
\end{align*}

The goal of cPCA is to identify directions $\mathbf{v}$ which account for large variances in the target and small variances in the background. 
Specifically, cPCA solves the following optimization problem:
\begin{align}\label{eq:opt_obj}
\text{argmax}_{\mathbf{v}\in\mathbb{R}^d_{\text{unit}}} \lambda_X(\mathbf{v}) - \alpha \lambda_Y(\mathbf{v}) 
\end{align}
where $\alpha \in [0, \infty]$ is a parameter discussed below. This optimization problem is equivalent to $ \text{argmax}_{\mathbf{v}\in \mathbb{R}^d_{\text{unit}}} \mathbf{v}^T (C_X - \alpha C_Y) \mathbf{v}$, which can be efficiently solved by conducting an eigenvalue decomposition on the matrix $C \eqdef (C_X - \alpha C_Y)$ and returning the eigenvectors corresponding to the leading eigenvalues. Analogously to PCA, we call the leading eigenvectors the \emph{contrastive principal components} (cPCs) and we return the subspace spanned by the first few (typically two) orthogonal cPCs, as outlined in Algorithm \ref{alg:cpca1}. For a suitable $\alpha$, projecting $\{X_i\}$ onto this subspace provides insight into structure specific to data.

The contrast parameter $\alpha$ represents the trade-off between maximizing the target variance and minimizing the background variance. 
When $\alpha=0$, cPCA selects the directions that only maximize the target variance, and hence reduces to PCA applied on $\{X_i\}$. 
As $\alpha$ increases, directions that reduce the background variance become more optimal and the contrastive principal components are driven towards the null space of the covariance matrix of $\{Y_i\}$.
When $\alpha=\infty$, any direction not in the null space of the background data receives a infinite penalty. In this case, cPCA is reduced to first projecting the target data on the null space of the background, and then performing PCA on the projected data. 
Therefore, each value of $\alpha$ yields a direction with a different trade-off between target and background variance. 


Instead of choosing a single value of $\alpha$, in practice, we automatically select a few distinct values in such a way that the subspaces corresponding to each value of $\alpha$ lie far apart from one another, as characterized by the principal angle between the subspaces \citep{Miao1992}. This allows us to present the user with a few scatterplots, one for each selected value of $\alpha$ (that the user can quickly scan), which represent the behavior of cPCA for a wide range of $\alpha$ values, making the overall algorithm effectively hyperparameter-free. In some cases, each selected value of $\alpha$ reveals different structure within the target dataset. In Appendix \ref{supp:synthetic}, we show an example with synthetic data, where our algorithm automatically discovers various valid ways to subgroup data within the target.

The process of selecting values of $\alpha$ automatically is based on spectral clustering \citep{ng2002spectral} of an affinity matrix, where the affinity is the product of the cosine of the principal angles between the subspaces, as described in Algorithm \ref{alg:cpca2}. As it includes Algorithm \ref{alg:cpca1} as a subroutine, Algorithm \ref{alg:cpca2} is the complete algorithm for cPCA. We denote Algorithm \ref{alg:cpca2} as \texttt{cPCA} in this paper. \texttt{cPCA} selects from a list of potential values of $\alpha$ the few that yield the most representative subspaces for projecting the target data across the entire range of values of $\alpha$. We have found that beginning with $40$ values of $\alpha$ that are logarithmically spaced between 0.1 and 1000 yields good subspaces on a variety of datasets. Unless noted otherwise, these are the parameters used to perform the experiments in this paper. 

\begin{algorithm}[]
   \caption{Contrastive PCA For a Given $\alpha$}
   \label{alg:cpca1}
\begin{algorithmic}
   \STATE {\bfseries Inputs:} target and background data: $\{X_i \}_{i=1}^n$, $\{Y_i \}_{i=1}^m$; contrast parameter, $\alpha$; the \# of components, $k$
   \vspace{2mm}
   \STATE Centering the data $\{X_i \}_{i=1}^n$, $\{Y_i \}_{i=1}^m$.
   \STATE Calculate the empirical covariance matrices: 
   \vspace{-2mm}
   $$ C_X = \frac{1}{n} \sum_{i=1}^n X_i X_i^T, C_Y = \frac{1}{m} \sum_{i=1}^m Y_i Y_i^T $$
   \vspace{-2mm}
   \STATE Perform eigenvalue decomposition on 
   \vspace{-2mm}
   $$C = (C_X-\alpha C_Y)$$
   \STATE Compute the the subspace $V \in\mathbb{R}^k$ spanned by the top $k$ eigenvectors of $C$
   \STATE{\bfseries Return:} the subspace $V$ 
\end{algorithmic}
\end{algorithm}

	\begin{algorithm}[]
   \caption{Contrastive PCA with Auto Selection of $\alpha$}
   \label{alg:cpca2}
\begin{algorithmic}
   \STATE {\bfseries Inputs:} target and background data: $\{X_i \}_{i=1}^n$, $\{Y_i \}_{i=1}^m$; list of possible $\{\alpha_i\}$; the \# of components, $k$; $p$, the number of $\alpha$'s to present.
   \vspace{2 mm}
   \FOR{each $\alpha_i$}
   \STATE Compute the subspace $V_i$ using Algorithm \ref{alg:cpca1} with the contrast parameter set to $\alpha_i$.
   \ENDFOR
   \FOR{each pair $V_i,V_j$}
   \STATE Compute the principal angles $\theta_1 \ldots \theta_k$ between $V_i, V_j$ 
   \STATE Define the affinity $d(V_i,V_j) = \prod_{h=1}^{k}{\cos{\theta_h}}$ 
   \ENDFOR
   \STATE With $D_{ij} = d(V_i,V_j)$ as an affinity matrix between subspaces, do spectral clustering on $D$ to produce $p$ clusters
   \FOR{each cluster of subspaces $\{c_i\}_{i=1}^p$}
   \STATE Compute its \text{medoid}, $V^*_i$ the subspace defined as
   $$V_i^* \eqdef \arg \max_{V \in c_i} \sum_{V' \in c_i} d(V,V')$$
   \STATE Let $\alpha^*_{i}$ be the contrast parameter corresponding to $V^*_i$
   \ENDFOR
   \STATE{\bfseries Return:} $\alpha^*_{1} \cdots \alpha^*_{p}$ and the subspaces $V^*_{1} \cdots V^*_{p}$ 
\end{algorithmic}
\end{algorithm}

\subsection{Choosing the background dataset} 
The additional insight that cPCA reveals about a target dataset is based on characteristics of the background, leading us to ask: how do we choose a good contrastive background dataset? In general, the background should be chosen to have the structure that we want to \textit{remove} from the target data. This structure may correspond to variables that we are not interested in but may have significant variation in the target data. As examples, we list several possible background datasets and illustrate some of these examples through experiments in Section \ref{section:applications}.

\begin{itemize}
\itemsep0em 
\item A \textbf{control group} $\{Y_i\}$ contrasted with a diseased population $\{X_i\}$ because the control group contains similar population-level variation but not the subtle variation due to different subtypes of the disease (Fig. \ref{fig:mice_and_single_cell}).
\item A \textbf{homogeneous group} $\{Y_i\}$ contrasted with a mixed group $\{X_i\}$ because both have intra-population variation and measurement noise, but the former does not have inter-population variation (Fig. \ref{fig:single_cell}). 
\item A \textbf{`before-treatment'} dataset $\{Y_i\}$ contrasted with `after-treatment' data $\{X_i\}$ to remove measurement noise but preserve variation caused by treatment.
\item A \textbf{mixed group} $\{Y_i\}$ contrasted with another mixed group $\{X_i\}$ to remove general batch or mixing effects but not the variation due to the specific subclasses in the latter group (Fig. \ref{fig:ancestry}).
\item A set of \textbf{signal-free} recordings $\{Y_i\}$ or images that contain only noise, contrasted with measurements that consist of signal and noise $\{X_i\}$ (Figs. \ref{fig:mnist_on_grass} \& \ref{fig:mhealth}).
\end{itemize}

It is worth adding that the background data does \textit{not} need to have \textit{exactly} the same covariance structure as what we would like to remove from the target dataset. As an example, in the experiment shown in Fig. \ref{fig:mnist_on_grass}, it turns out that we do not need to use a background dataset that consists of images of grass. In fact, similar results are obtained even if instead of images of grass, images of bikes or the sky are used as the background dataset. As the structure of covariance matrices are similar enough, cPCA removes this background structure from the target. In the next section, we observe that this flexibility allows contrastive PCA to be useful in discovering patterns in a variety of real-world settings.

\section{Applications of Contrastive PCA}
 \label{section:applications}

As a general unsupervised learning technique, cPCA (like PCA) has a variety of uses in data exploration and visualization. Here, we show four applications where cPCA can provide additional insight into the structure of a dataset that is missed by standard PCA. The datasets that we employ in our experiments are summarized in Table \ref{table:experiments}.

In each of the examples that follow, we perform cPCA with three values of $\alpha$ that are automatically selected according to Algorithm \ref{alg:cpca2}. For simplicity, we show only the result of one of the values of $\alpha$ in the figures, and in a few cases, omit a small number of points to make the figures more illustrative. (see Appendix \ref{supp:full_results} for complete results).
\vspace{-3mm}
\begin{table}[h]
\caption{Characteristics of the datasets used in experiments: the target size ($n$), the background size ($m$), the dimensionality ($d$), and the number of subgroups within the target dataset ($sg$).}
\label{table:experiments}
\begin{center}
\begin{small}
\begin{sc}
\begin{tabular}{cccccr}
\hline
\abovespace\belowspace
Experiment (Fig \#)& $n$ & $m$ & $d$ & $sg$ \\
\hline
\abovespace
MNIST on Grass (\ref{fig:mnist_on_grass}) & 5,000 & 5,000 & 784 & 2 \\
Mice Protein Exp. (\ref{fig:mice_and_single_cell}) & 270 & 135 & 77 & 2 \\
Leuk. RNA-Seq 1\footnotemark[1] (\ref{fig:single_cell}) & 7,898 & 1,985 & 32,738 & 2 \\
Leuk. RNA-Seq 2\footnotemark[1] (\ref{fig:single_cell}) & 12,399 & 1,985 & 32,738 & 4 \\
Mexican Ancestry\footnotemark[1] (\ref{fig:ancestry}) & 241 & 507 & 3 $\cdot$ 10$^6$ & 5 \\
\belowspace
MHealth Sensors (\ref{fig:mhealth}) & 6,451 & 3,072 & 23 & 2 \\
\hline
\end{tabular}
\end{sc}
\end{small}
\end{center}
\vskip -0.1in
\end{table}
\vspace{-3mm}

\footnotetext[1]{As described in Sections \ref{subsection:subgroups} and \ref{subsection:ancestry}, these datasets were preprocessed to reduce dimensionality before cPCA was applied.}

\subsection{Discovering Subgroups in Complex Data}
\label{subsection:subgroups}

Researchers in the life sciences have noted that PCA is often ineffective at discovering subgroups within biological data, at least in part because ``dominant principal components \ldots correlate with artifacts" rather than with features that are of interest to the researcher \citep{ringner2008principal}. As described earlier, in many biomedical datasets, the principal components (PCs) reflect dominant but uninteresting variation that is present within a cohort, such as demographic variation, which overshadows the more subtle variation, e.g. markers associated with subtypes of cancer, that is of interest in a particular analysis. As a result, when data is projected onto the PCs, even if clusters form, they reflect less relevant groupings.

How can cPCA can be used to detect the more significant subgroups? By using a background dataset to cancel out the uninteresting variation as the target, we can search for structure that is unique to the target dataset. Indeed, as we show in the following two examples, when we project onto the contrastive principal components (cPCs) instead of the PCs, we discover subgroups in complex data that correspond to important differences within our target population.

\textbf{Mice Protein Expression.} Our first target dataset, adapted from a public dataset \citep{higuera2015self}, consists of protein expression measurements of mice that have received shock therapy. We assume that unbeknownst to the analyst, some of the mice have developed Down Syndrome. We would like to see if we detect any significant differences within the shocked mice population in an unsupervised manner (the presence or absence of Down Syndrome being a key example!). In Fig. \ref{fig:mice_and_single_cell}a, we show the result of applying PCA to the target dataset: the transformed data does not reveal any significant clustering within the population of mice. The major sources of variation within mice are likely natural, such as sex or age.

Next, we apply cPCA using a background dataset that consists of protein expression measurements from set of mice that have not been exposed to shock therapy or any similar experimental conditions. They are control mice that likely have similar natural variation as the experimental mice, but without the differences that result from the shock therapy. With this dataset as a background, we are able to resolve two different groups in the transformed target data, one corresponding to mice that do not have Down Syndrome and one corresponding (mostly) to mice that have Down Syndrome, as illustrated in Fig. \ref{fig:mice_and_single_cell}b.

\begin{figure}[!tb]

\subfigure[]{\includegraphics[width=0.23\textwidth]{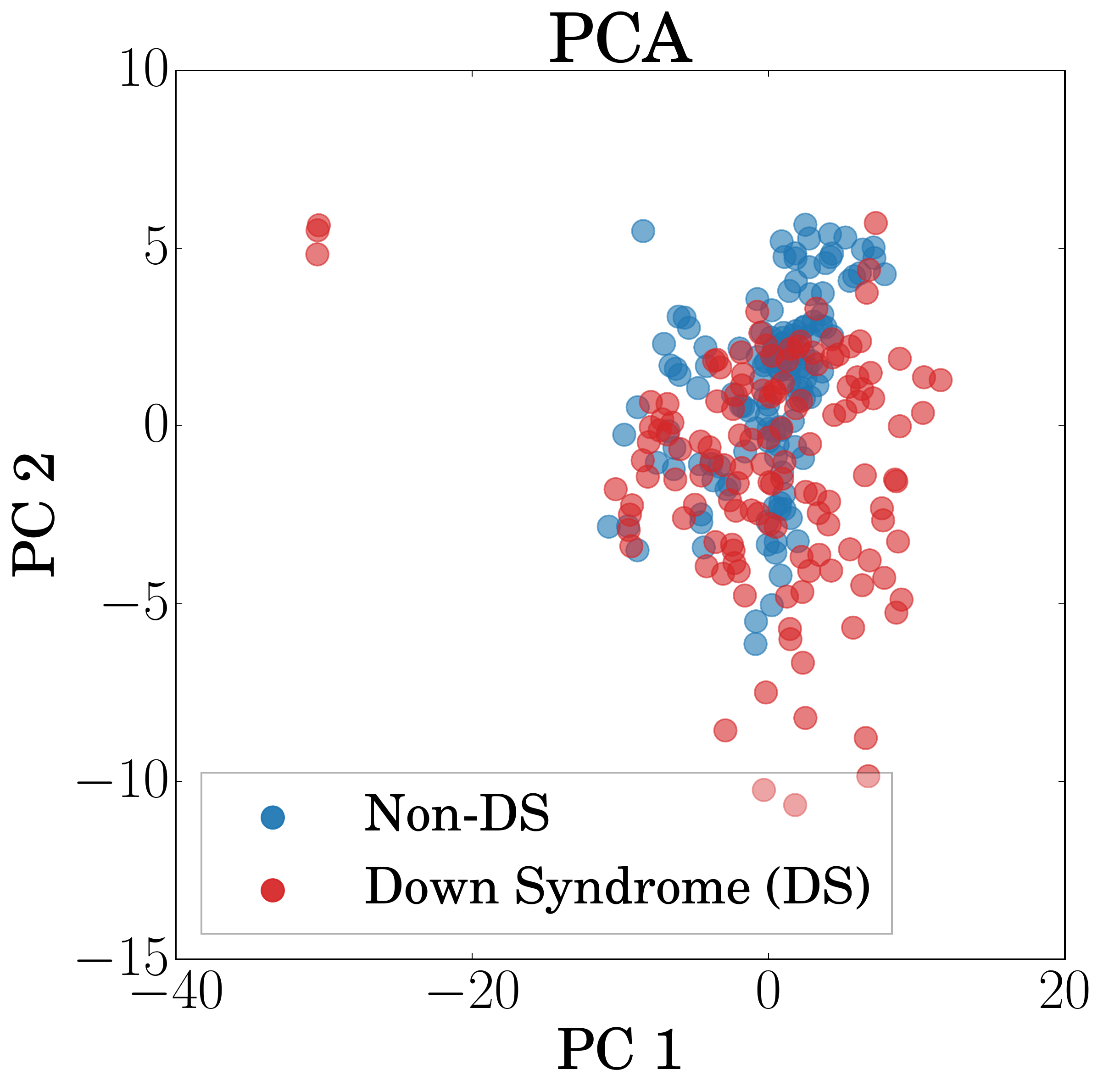}}
\subfigure[]{\includegraphics[width=0.224\textwidth]{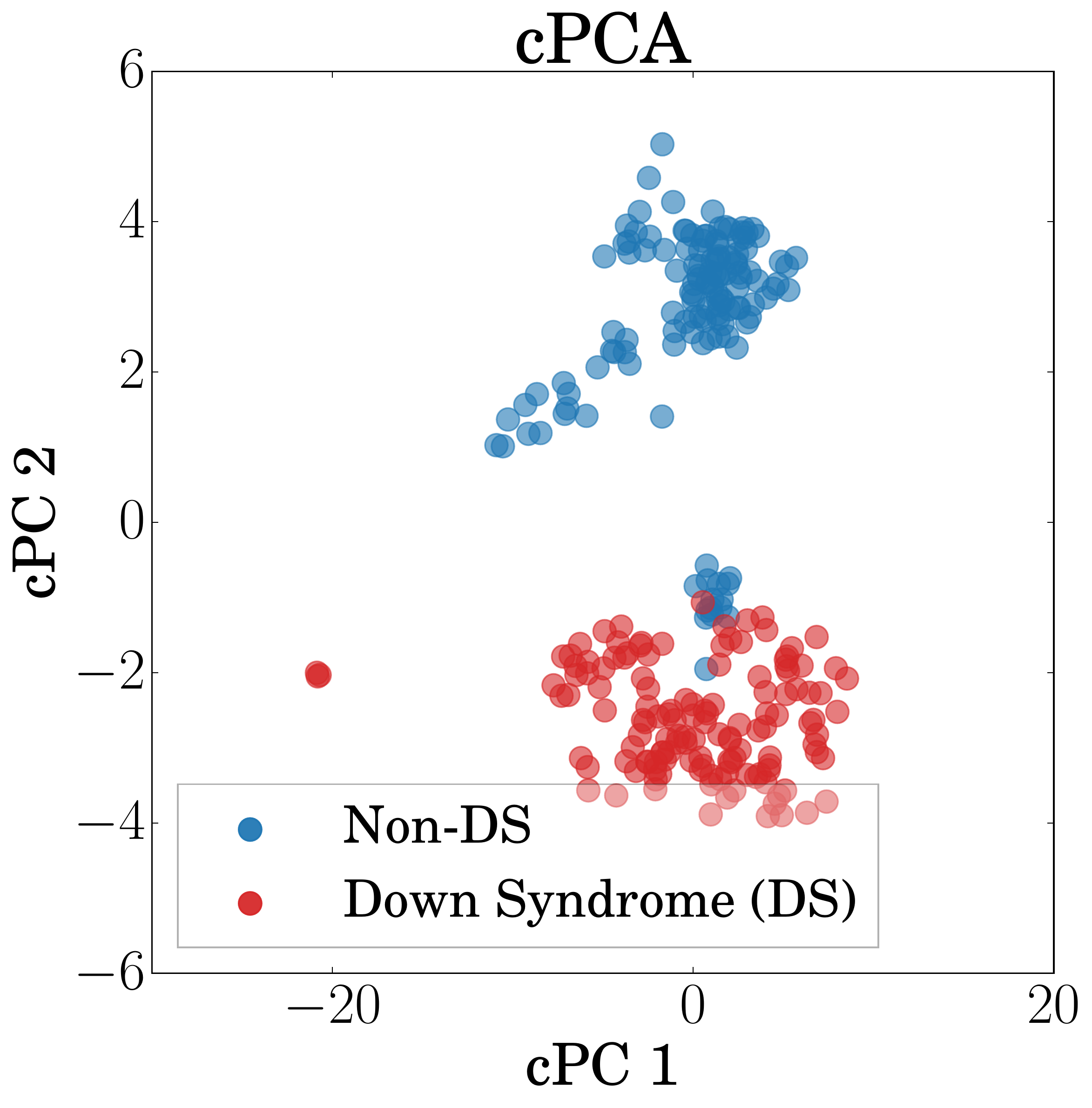}}

\caption{\textbf{Discovering Subgroups in Mice Protein Data.} (a) We use PCA to project a protein expression dataset of mice with and without Down Syndrome (DS) into its first two PCs. The lower-dimensional representation of protein expression measurements from mice with and without DS are seen to be distributed similarly. (b) We use cPCA to project the dataset onto its first two cPCs, discovering a lower-dimensional representation that clusters mice with and without DS separately.}
\label{fig:mice_and_single_cell}
\end{figure}

\begin{figure}[!tb]

\subfigure[]{\includegraphics[width=0.23\textwidth]{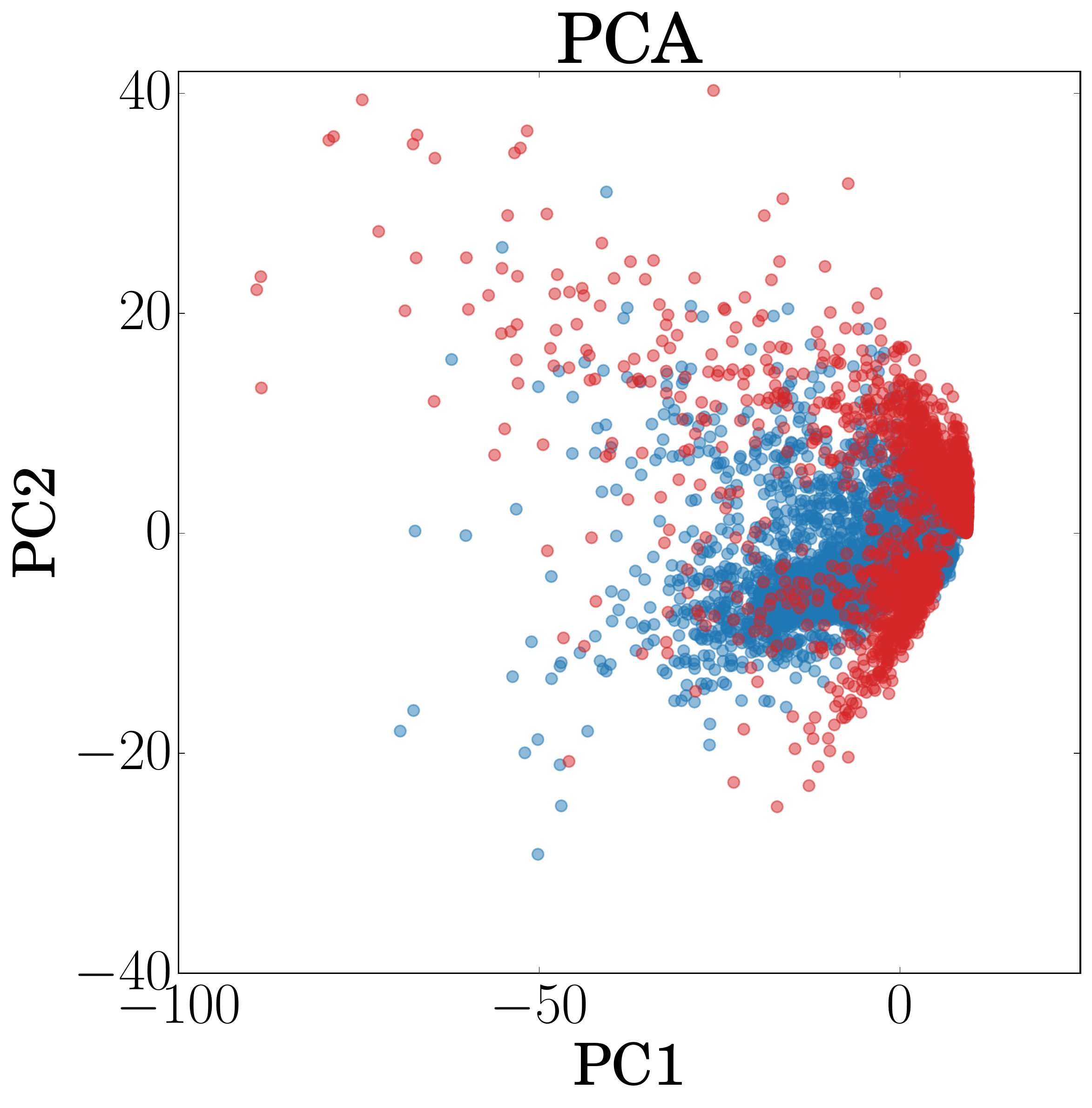}}
\subfigure[]{\includegraphics[width=0.224\textwidth]{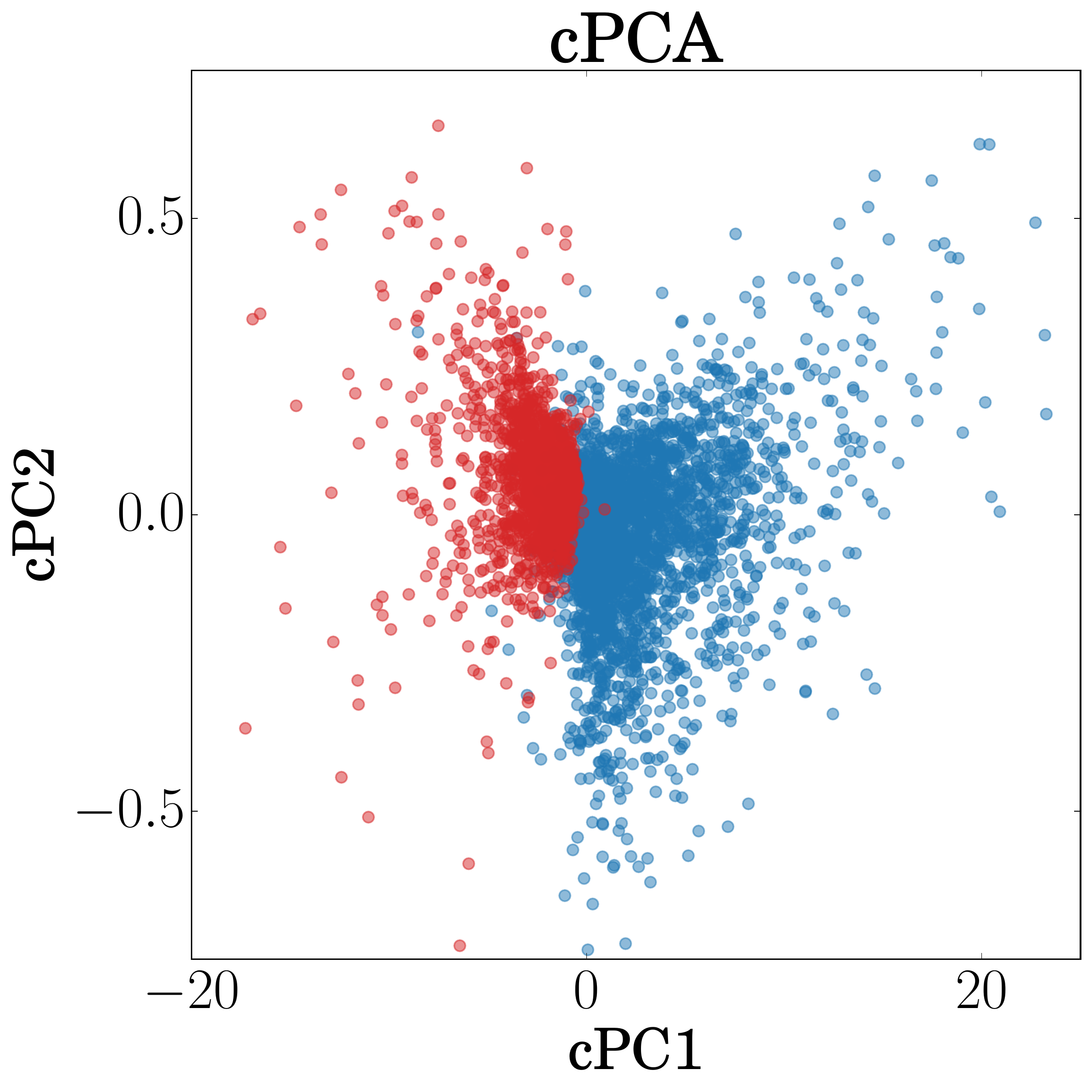}} \\
\subfigure[]{\includegraphics[width=0.23\textwidth]{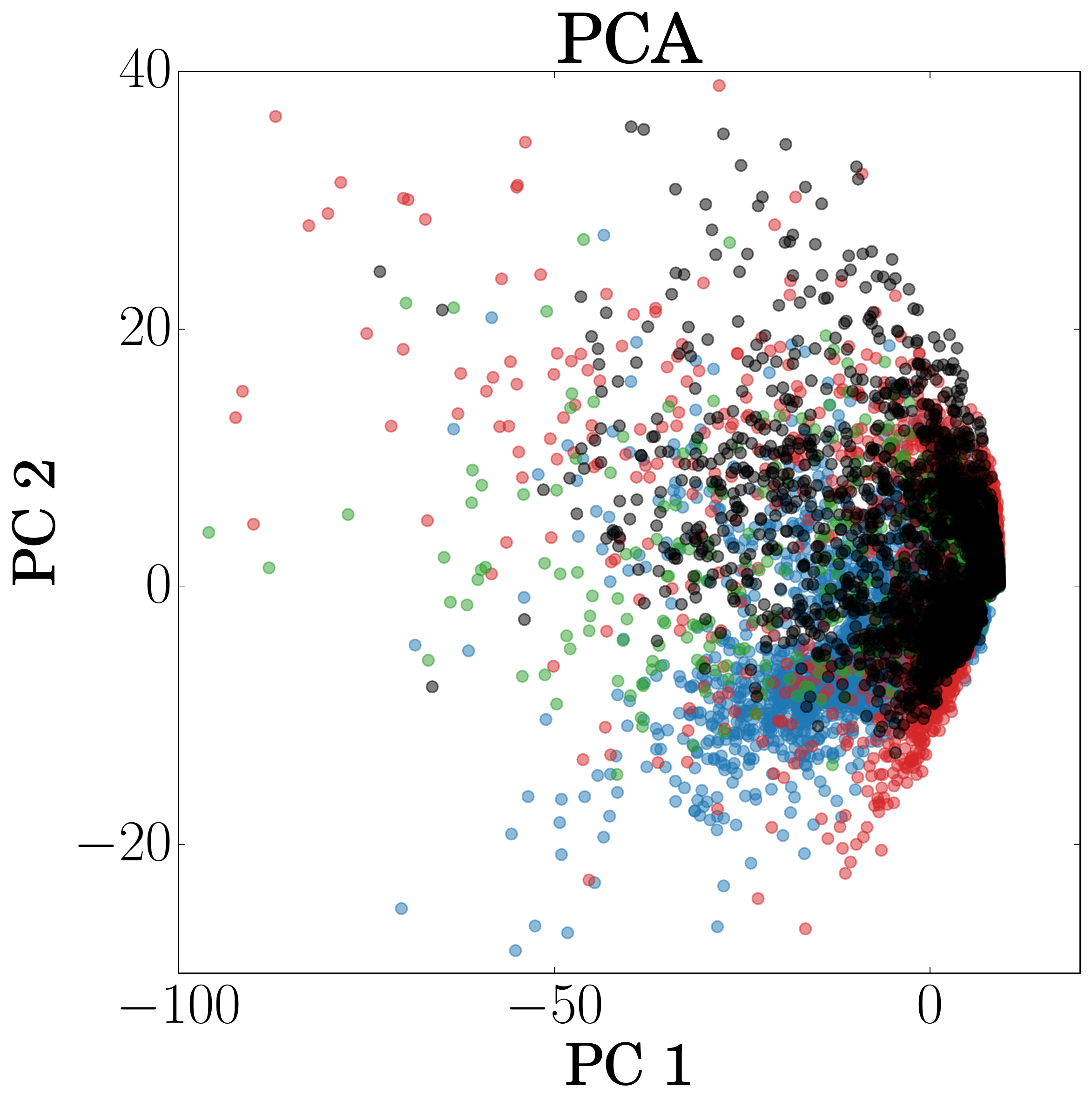}}
\subfigure[]{\includegraphics[width=0.224\textwidth]{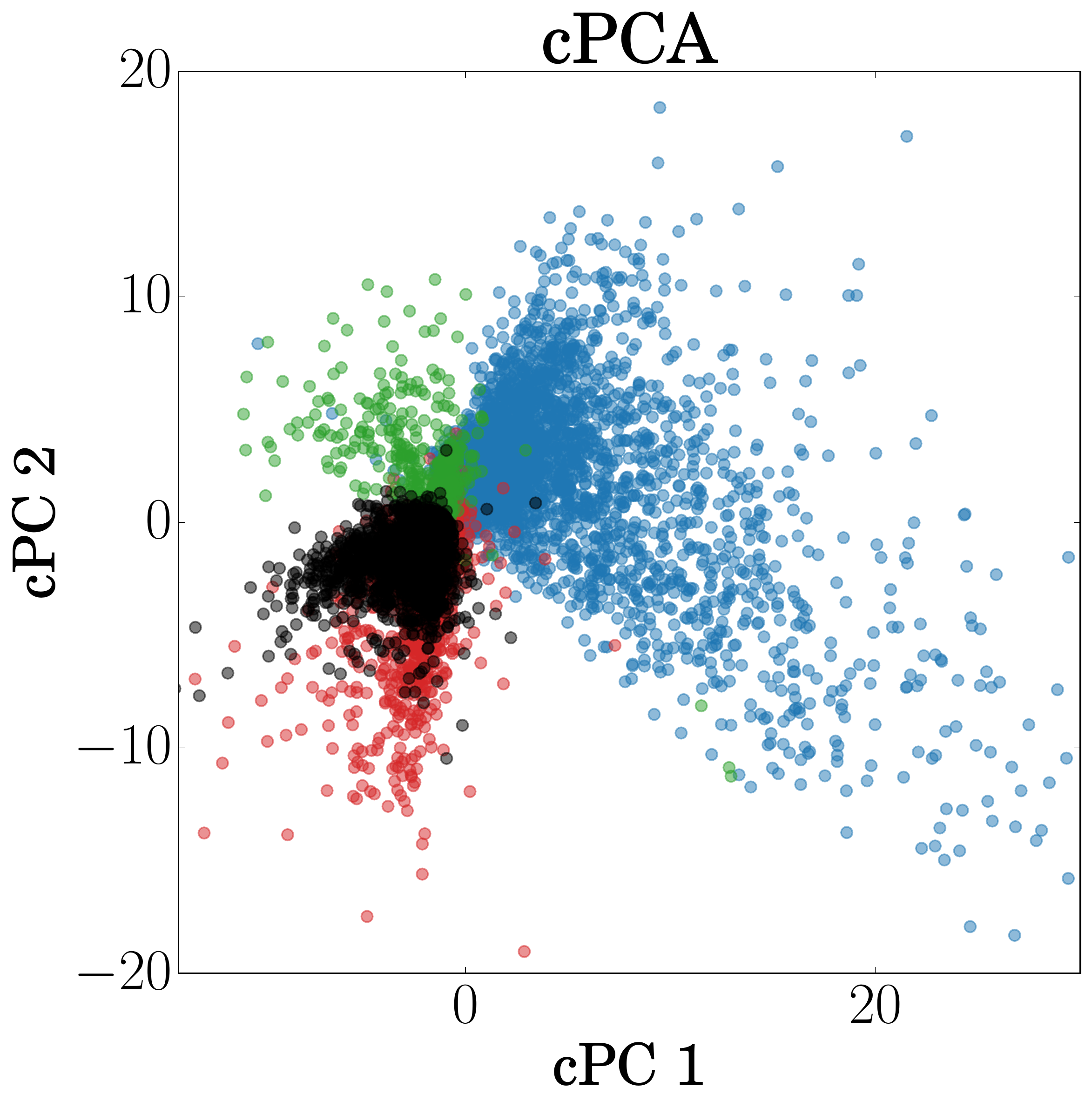}} 

\caption{\textbf{Discovering Subgroups in scRNA-Seq Data.} Here, we use PCA and cPCA to visualize a high-dimensional single cell RNA-Seq dataset in two dimensions. The dataset consists of four cell samples from two leukemia patients: a pre-transplant sample from patient 1 (\textbf{\textcolor{blue}{blue}}), a post-transplant from patient 1 (\textcolor{red}{\textbf{red}}), a pre-transplant sample from patient 2 (\textcolor{ForestGreen}{\textbf{green}}), and a post-transplant sample from patient 2 (\textcolor{black}{\textbf{black}}). (a) and (b) show the results using only the samples from patient 1, which demonstrate that cPCA more effectively separates the samples than PCA. When the cells from the second patient are included, in (c) and (d), again cPCA is more effective than PCA at separating the samples, although the post-transplant cells from both patients are similarly-distributed. We show plots of each sample separately in Appendix \ref{supp:full_results}, where it is easier to see the overlap between different samples.}
\label{fig:single_cell}
\end{figure}


\paragraph*{Single Cell RNA-Seq of Leukemia Patients} We then analyze a higher-dimensional public dataset consisting of single-cell RNA expression levels of a mixture of bone marrow mononuclear cells (BMMCs) taken from a leukemia patient before stem cell transplant and BMMCs from the same patient after stem cell transplant (all single cell RNA-Seq data is obtained from \citet{Zheng2017}, and preprocessed using similar methods as described by the authors. In particular, before applying PCA or cPCA, all datasets are reduced to 500 genes, which are selected on the basis of highest dispersion [variance divided by mean] within the target data). Again, we perform PCA to see if we can visually discover the two samples in the transformed data. As shown in Fig. \ref{fig:single_cell}a, both cell types follow a similar distribution in the space spanned by the first two PCs. This is likely because the differences between the samples is small and the PCs instead reflect the heterogeneity of various kinds of cells within each sample or even variations in experimental conditions, which can have a significant effect on single-cell RNA-seq measurements \citep{Bhargava2014}. 

So we apply cPCA using a background dataset that consists of RNA-Seq measurements from a healthy individual's BMMC cells. We expect that this background dataset to contain the variation due to heterogeneous population of cells as well as variations in experimental conditions. We may hope, then, that contrastive PCA might be able to recover directions that are enriched in the target data, corresponding to pre- and post-transplant differences. Indeed, that is what we find, as shown in Fig. \ref{fig:single_cell}b. 

Next, we augment our target dataset with BMMC samples from a second leukemia patient, again before and after stem cell transplant. Thus, there are a total of four subpopulations of cells. Application of PCA on this data shows that the four subpopulations are not separable in the subspace spanned by the top 2 principal components (PCs), as shown in Fig. \ref{fig:single_cell}c. Again, however, when contrastive PCA is applied with the same background dataset, at least three of the subpopulations show much stronger separation (Fig. \ref{fig:single_cell}d). The cPCA embedding also suggests that the cell samples from both patients are more similar to each other after stem-cell transplant (red and black dots) than before the transplant (blue and green dots), a reasonable hypothesis which can be tested by the investigator. We see that cPCA can be a useful tool to infer the relationship between subpopulations, a topic we return to in Section \ref{subsection:ancestry}.

\subsection{Visualizing Important Relationships Within Data}
\label{subsection:ancestry}
\setcounter{footnote}{1}
In previous examples, we have seen that contrastive PCA allows the user to discover subclasses within a target dataset that are not labeled \textit{a priori}. However, even when subclasses are known ahead of time, dimensionality reduction can be a useful way to visualize the relationship within groups. For example, PCA is often used to visualize the relationship between ethnic populations based on genetic variants, because projecting the genetic variants into two dimensions often produces ``maps" that offer striking visualizations of geographic and historic trends  \citep{cavalli1998dna, Novembre2008}. But again, standard PCA is limited to identifying the most dominant structure; when this represents universal or uninteresting variation, cPCA can be more effective at visualizing trends.

\paragraph*{Mexican Ancestry.} The dataset that we use for this example consists of single nucleotide polymorphisms (SNPs) from the genomes of individuals from 5 states in Mexico, collected by the authors \citet{SilvaZolezzi2009}. Mexican ancestry is challenging to analyze using PCA since the PCs usually do not reflect geographic origin within Mexico; instead, they reflect the \textit{proportion} of European/Native American heritage of each Mexican individual, which dominates and obscures differences due to geographic origin within Mexico (see Fig. \ref{fig:ancestry}a). To overcome this problem, population geneticists prune SNPs, removing those known to derive from Europeans ancestry, before applying PCA. However, this procedure is of limited applicability since it requires knowing the origin of the SNPs and that the source of background variation to be very different from the variation of interest, which are often not the case.

As an alternative, we use contrastive PCA with a background dataset that consists of individuals from Mexico and from Europe\footnote{To avoid storing large covariance matrices in memory, dimensionality reduction using PCA was performed before the cPCA step. All explained variance was preserved because $n \ll d$.}. This background is dominated by Native American/European variation\footnote{It may be even more effective to use a background of Native Americans and Europeans, but this data was not available.}, allowing us to isolate the intra-Mexican variation in the target dataset. The results of applying cPCA are shown in Fig. \ref{fig:ancestry}b. We find that individuals from the same state in Mexico are embedded closer together. Furthermore, the two groups that are the most divergent are the Sonorans and the Mayans from Yucatan, which are also the most geographically distant within Mexico, while Mexicans from the other three states are close to each other, both geographically as well as in the embedding captured by contrastive PCA (see Fig. \ref{fig:ancestry}c).  

\begin{figure}[!tb]
\begin{center}
\subfigure[]{\includegraphics[width=0.23\textwidth]{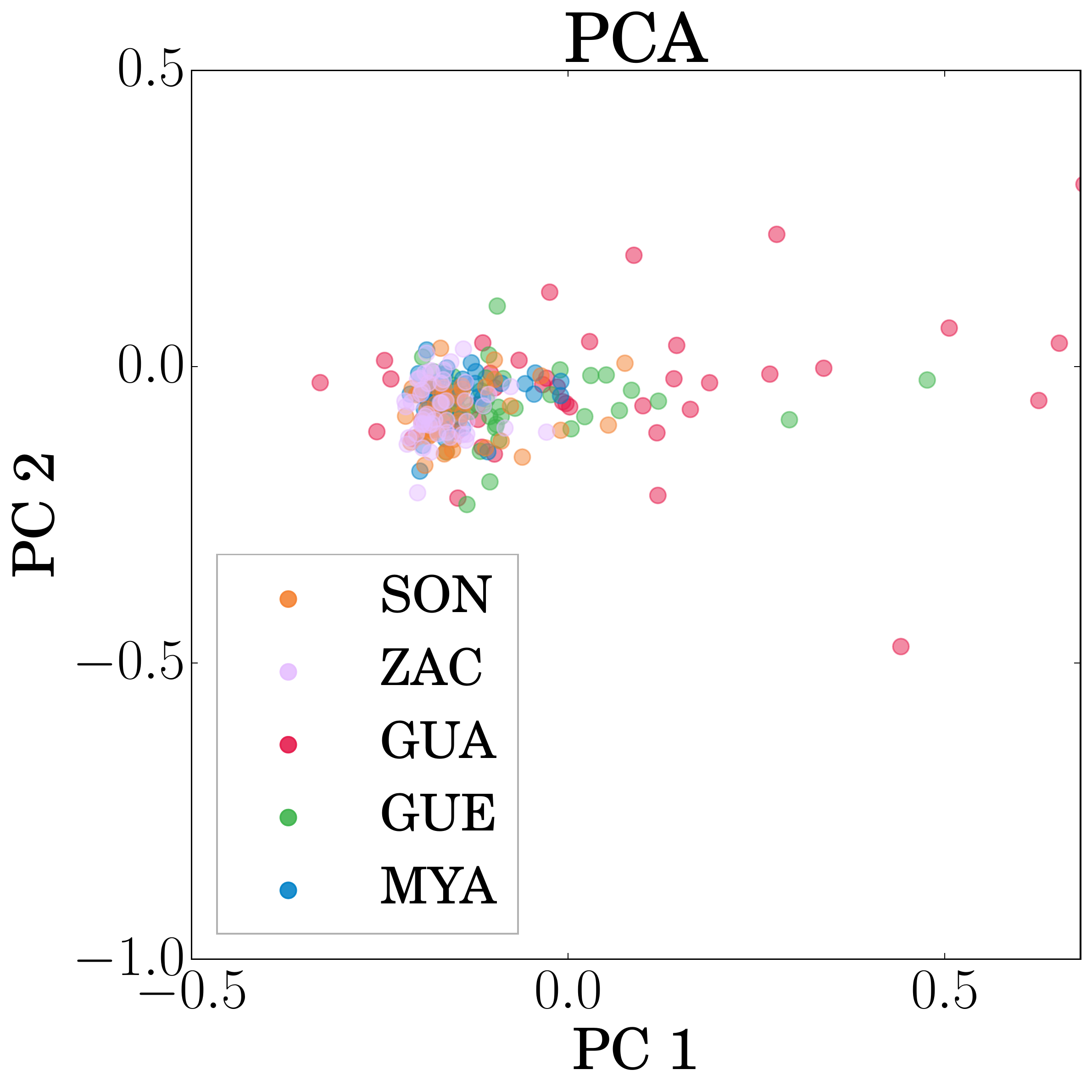}} 
\subfigure[]{\includegraphics[width=0.23\textwidth]{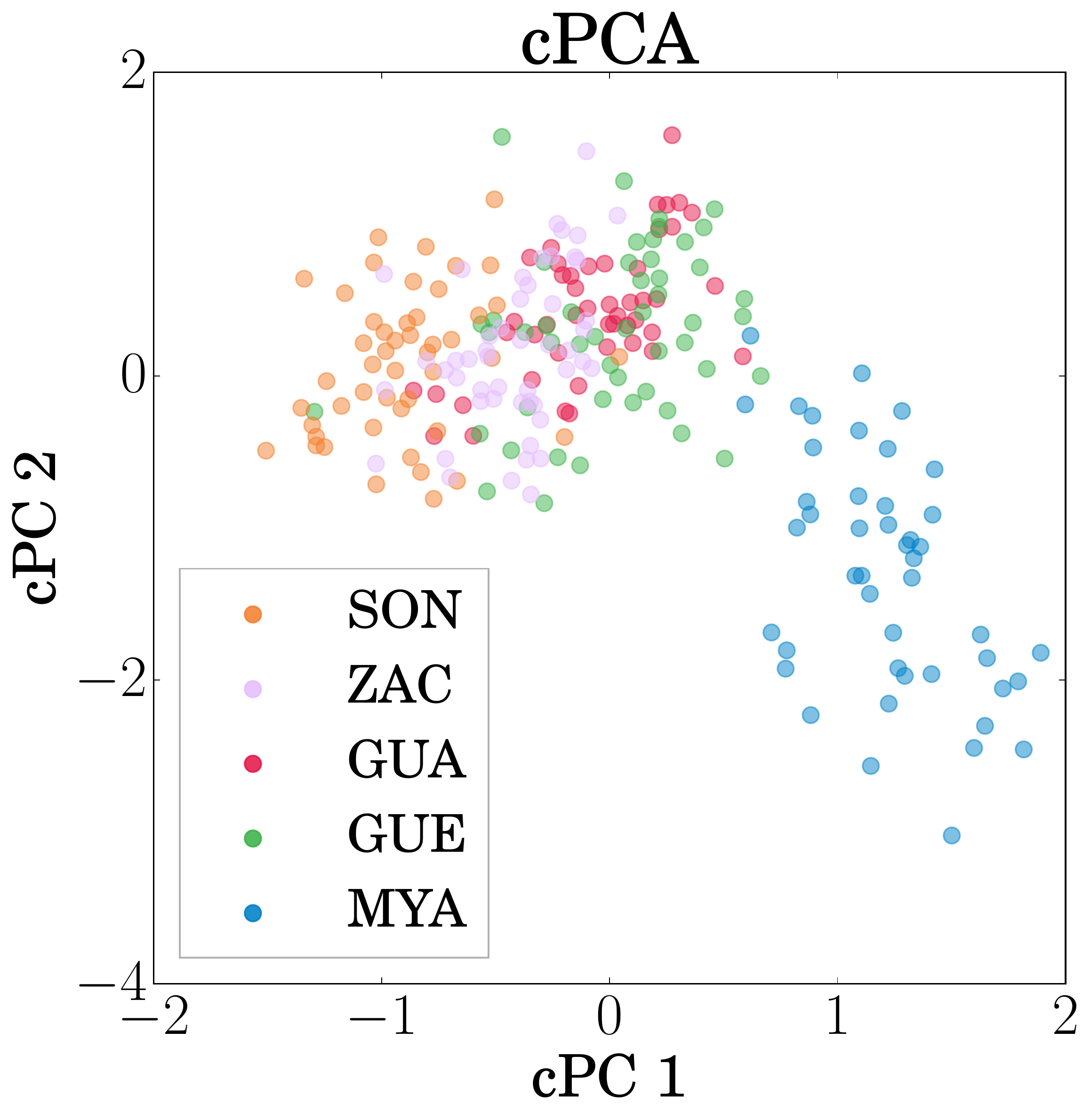}} \\
\subfigure[]{\includegraphics[width=0.35\textwidth]{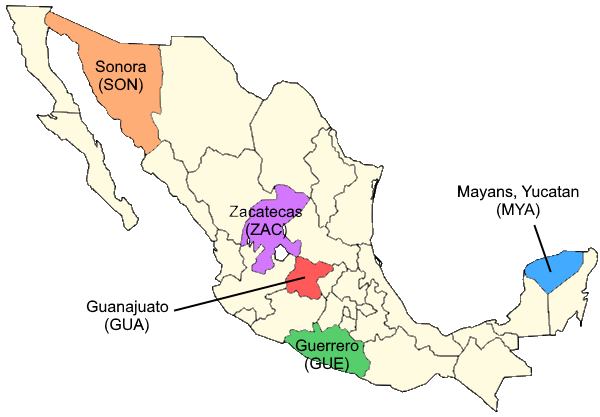}} 
\end{center}
\caption{\textbf{Relationship Between Mexican Ancestry Groups.} (a) Here, we find that PCA applied to genetic data from individuals from 5 Mexican states does not reveal any visually discernible patterns in the embedded data. (b) Contrastive PCA applied to the same dataset reveals patterns in the data: individuals from the same state are clustered closer together in the cPCA embedding. (c) Furthermore, the distribution of the points reveals relationships between the groups that matches the geographic location of the different states: for example, individuals from geographically-adjacent states are adjacent in the embedding. 
\label{fig:ancestry}
}

\end{figure}

\subsection{Feature Selection and Denoising}

We have seen that the cPCs are a useful bases for embedding data to discover subgroups. In addition, the cPCs can be examined directly, often yielding insight into what combination of features are the most relevant source of variation within the target.

We return to our first example, with the target dataset consisting of images of handwritten digits superimposed on grassy backgrounds (refer to Fig. \ref{fig:mnist_on_grass} for details). The contribution of each pixel to the first PC or cPC is indicated by the absolute value of the weight of each pixel in the component. We normalize the weights by squaring each weight and rescaling the squared weights to have a maximum value of 1. Fig. \ref{fig:denoise_mnist} (top) is a plot of the resulting weights for both PCA and cPCA. We find that PCA tends to emphasize pixels in the far left of the image and de-emphasize pixels in the center of the image, indicating that the source of most of the variance in the first PC is not due to the superimposed digits, but due to certain features in the background. On the contrary, cPCA tends to emphasize the pixels in the center, which allows it to distinguish images containing the digit 0 from images containing 1 quite easily.

Furthermore, we can use dimensionality reduction to denoise image in the target dataset by discarding all but the first few components, under the basic assumption that the non-leading components mostly consist of noise \citep{wold1987principal}. The result of both PCA and cPCA denoising of a representative image is shown Fig. \ref{fig:denoise_mnist} (bottom).

\begin{figure}[!tb]
\begin{center}
\subfigure{\includegraphics[width=0.40\textwidth]{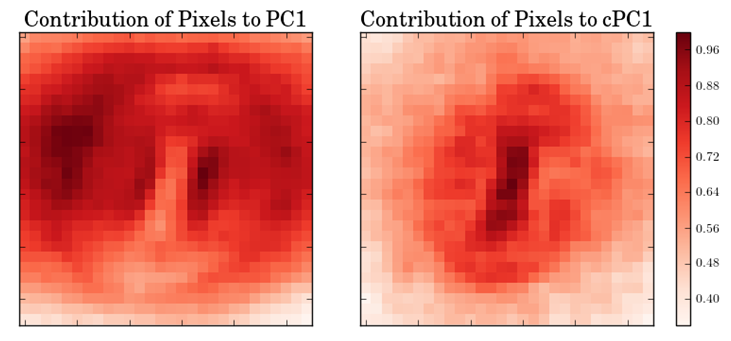}} \\
\subfigure{\includegraphics[width=0.45\textwidth]{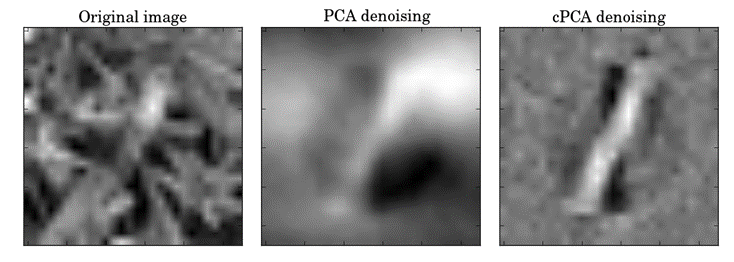}} 
\end{center}
\caption{\textbf{Features Captured by cPCA.} (Top) We look at the relative contribution of each pixel to the first PC and first cPC in the synthetic dataset described in Fig. \ref{fig:mnist_on_grass}. PCA tends to emphasize pixels in the far left of the image and de-emphasize pixels in the center of the image, indicating that most of the variance is due to background features. cPCA tends to emphasize the pixels in the center, suggesting that it identifies features corresponding to the superimposed digits. (Bottom) Thus, if we consider background features to be noise, we can use cPCA to denoise each image by discarding all but the first few (in this example, 10) cPCs. We do this on a representative image that consists of the digit 1 superimposed on a grassy background, recovering the digit clearly and discarding the grass in the background almost entirely. 
}
\label{fig:denoise_mnist}
\end{figure}

\subsection{Standardizing in a Data-Dependent Manner}

Because the PCA calculates the directions in a dataset with the highest covariance, it is highly sensitive to the units used to measure each feature. As a consequence, when different units are used to measure different features, it is common to standardize the data by dividing each column of the data matrix by its standard deviation, thereby ensuring that each feature has unit variance \citep{joliffe1992principal, wold1987principal}. However, this procedure has a drawback: noisy features with low variance are inflated to have the same variance as the most significant features; in fact, some sources suggest that standardization should not be used unless low-variance features are removed first \citep{vandenBerg2006}.

As an alternative, contrastive PCA can be used as a dimensionality-reduction technique directly, without standardization, in cases when a reference, signal-free dataset is available as a background. By searching for features that contrast between the target and background, cPCA automatically provides a data-dependent standardization by eliminating those features that are equally noisy in both the target and background. We illustrate this with an example.

\paragraph*{MHealth Measurements.} The MHealth public dataset \citep{Banos2015} consists of measurements from a variety of sensors (e.g. accelerometers, EKG, and gyroscopes) when subjects perform a series of different activities. In this example, our target dataset consists of sensor readings from a subject who is, at times, jogging and, at times, performing squats -- two very different activities. We may wonder whether the sensor data can be used to visually distinguish these two activities. In Fig. \ref{fig:mhealth}a, we show the result of applying PCA on the \textit{unstandardized} data: the two activities cannot be distinguished visually.

We then take as a background dataset sensor readings from  the subject when the subject is lying still. We assume this to be a signal-free reference, because most sensor readings will reflect their baseline  noise levels. By performing cPCA, we see the two activities resolve clearly into two separate subgroups, as shown in Fig. \ref{fig:mhealth}b -- with no standardization needed. For this experiment, a larger range of initial values of $\alpha$ was used as a parameter for Algorithm \ref{alg:cpca2} (see Appendix \ref{supp:full_results} for details).

\begin{figure}[!tb]
\subfigure[]{\includegraphics[width=0.23\textwidth]{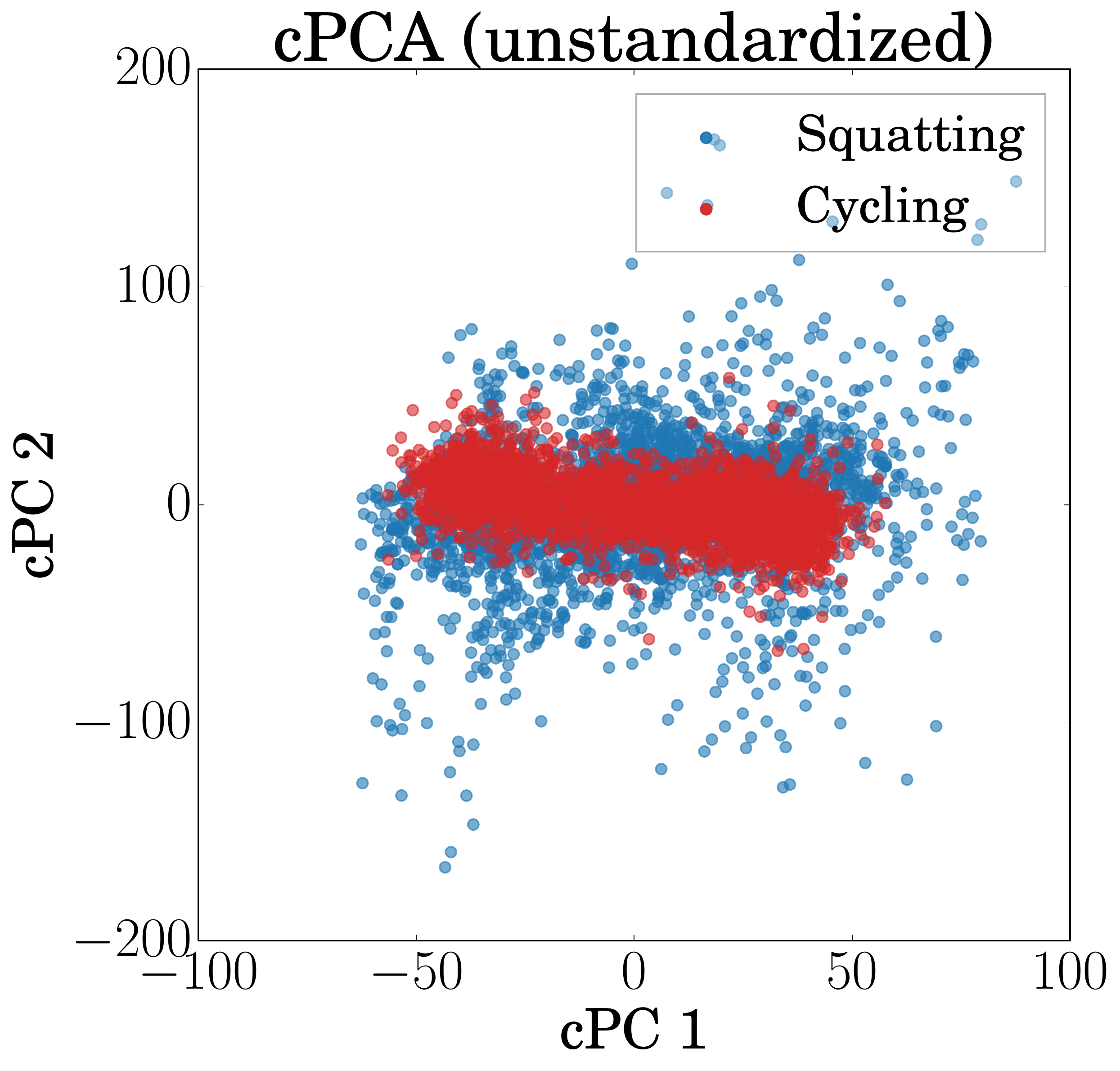}}
\subfigure[]{\includegraphics[width=0.224\textwidth]{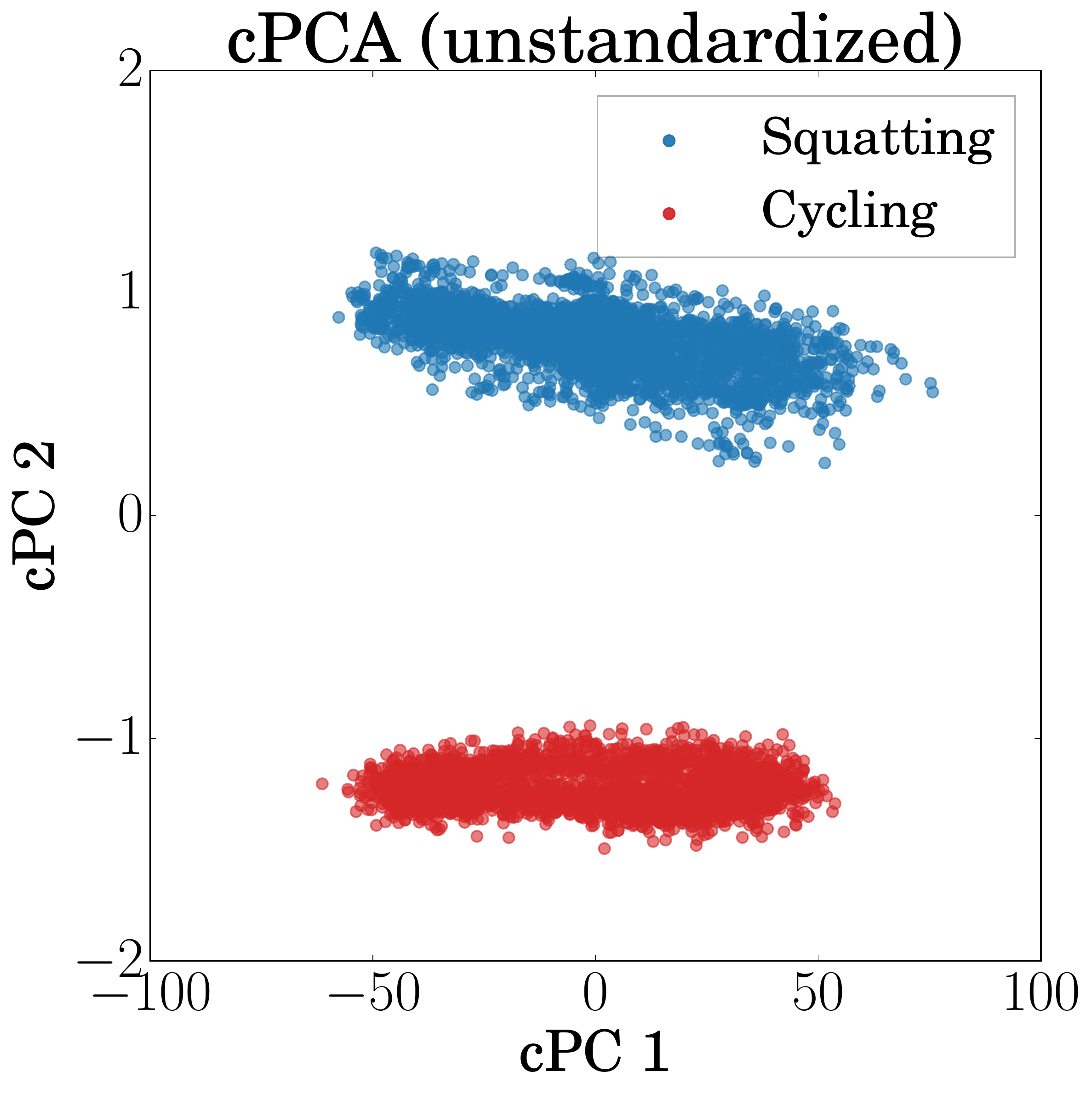}}

\caption{\textbf{Data-Dependent Standardization.} (a) Here, we see that PCA applied to the unstandardized MHealth dataset produces components that do not separate two very distinct activities, squatting and cycling, from each other. Further analysis, not shown here, reveals that is because the PCs are dominated by a few very noisy features. (b). However, contrastive PCA applied also to the unstandardized dataset is able to find a component along which the two activities are quite distinct.}
\label{fig:mhealth}
\end{figure}

\section{Theoretical Analysis}
\label{section:theory}

\begin{figure}
\centering
\subfigure[]{\includegraphics[width=0.23\textwidth]{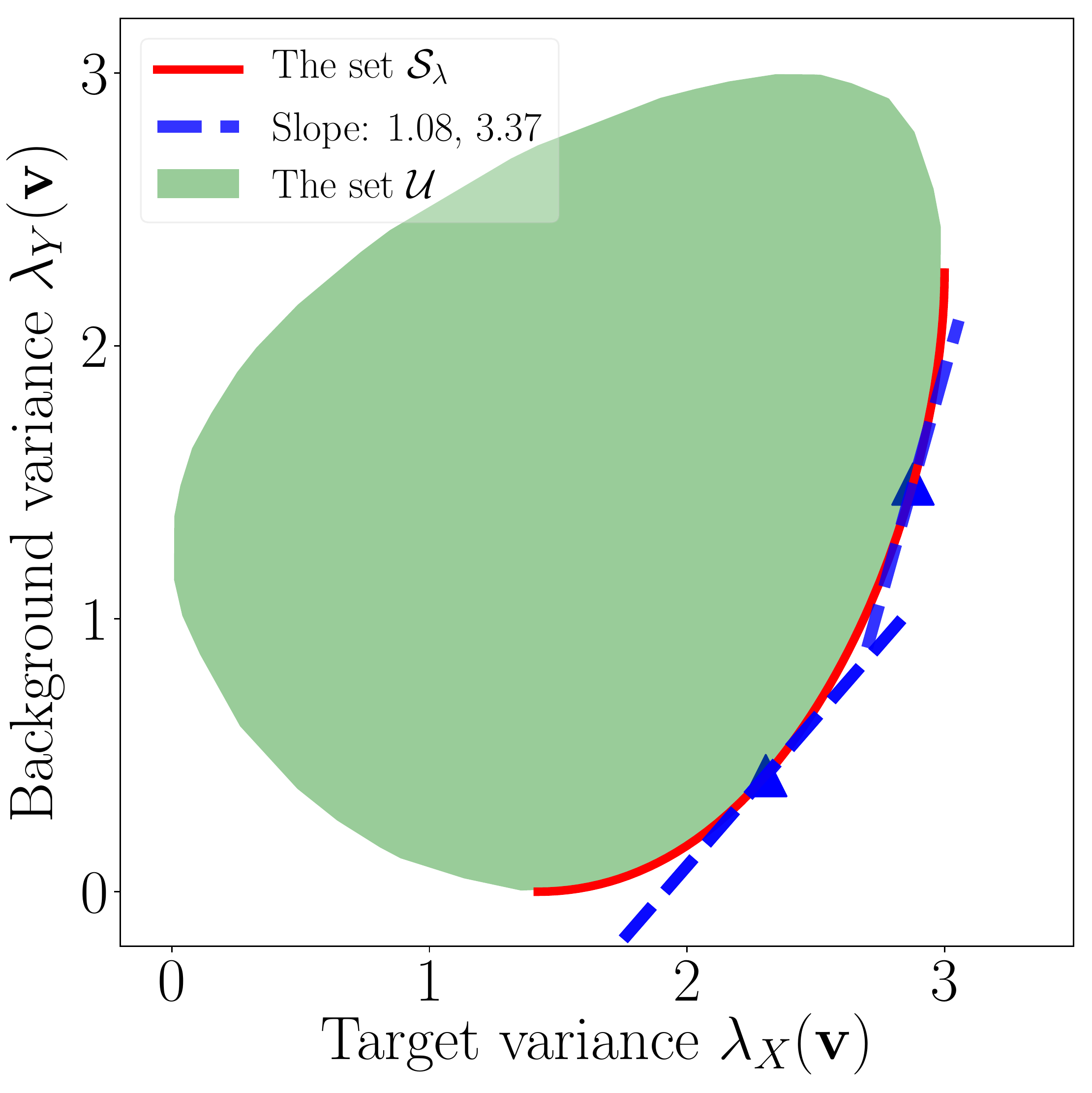}}
\subfigure[]{\includegraphics[width=0.23\textwidth]{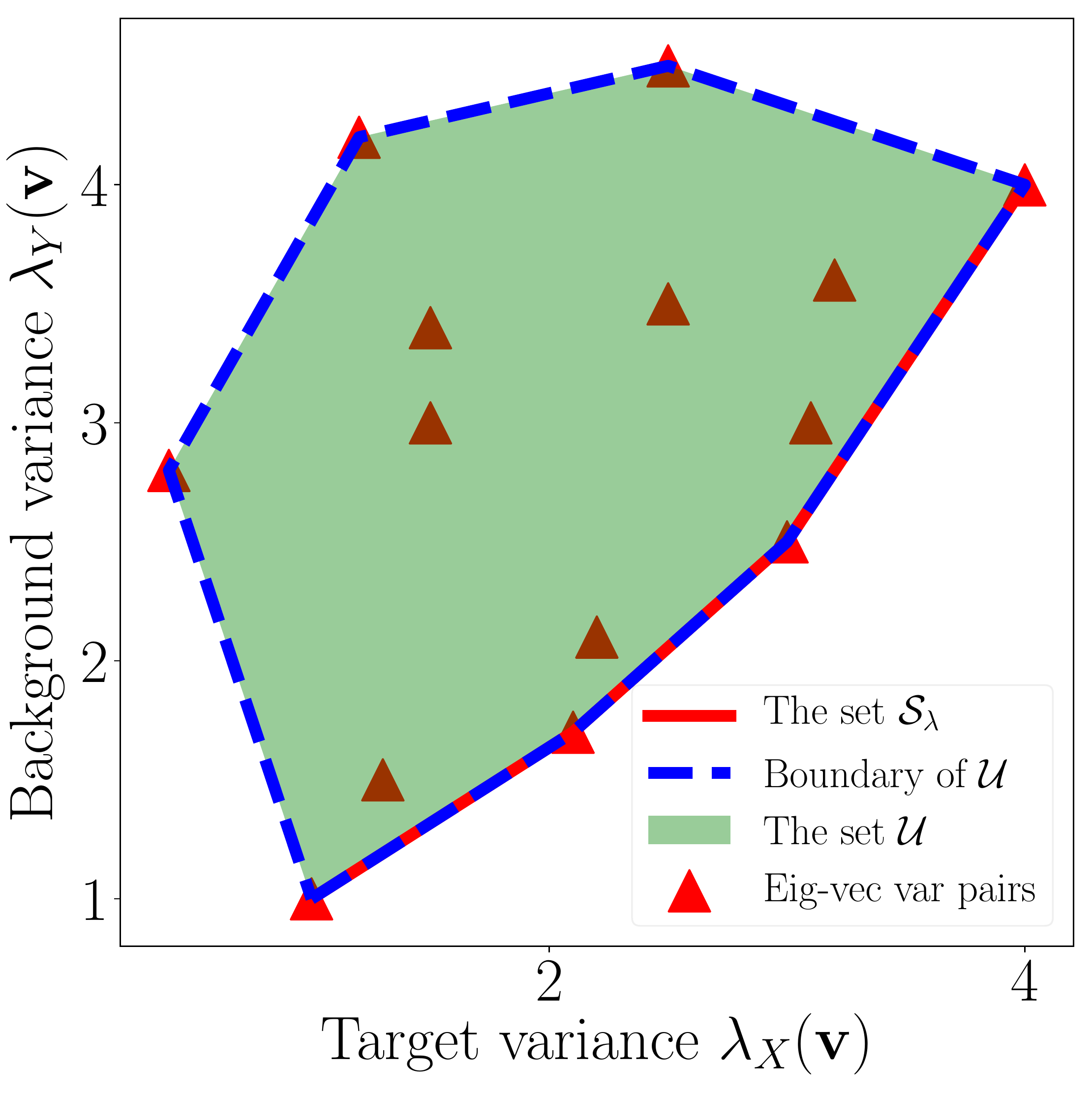}}
\caption{(a) Target-background variance pairs for randomly generated $C_X$ and $C_Y$. The green dots are samples from $\mathcal{U}$, whose lower-right boundary is the red curve $\mathcal{S}_\lambda$. 
The blue triangles are variance pairs for directions selected by \texttt{cPCA} with $\alpha=0.92,0.29$.
They are the points of tangency of the red curve and the tangent lines with slope $\frac{1}{\alpha}=0.08,3.37$ respectively.
(b) Target-background variance pairs for randomly generated simultaneously diagonalizable $C_X$ and $C_Y$. 
The green dots, samples from $\mathcal{U}$, consist the convex hull of the red triangles, which are variance pairs for the common eigenvectors of $C_X$ and $C_Y$. 
$\mathcal{S}_\lambda$ is the lower-right red line, and the boundary of $\mathcal{U}$ is the blue dashed line.}
\label{fig:theo}
\end{figure}

For any direction $\mathbf{v}\in \mathbb{R}^d_{\text{unit}}$, its target-background variance pair $(\lambda_X(\mathbf{v}), \lambda_Y(\mathbf{v}))$ fully determines its significance for contrastive PCA. Intuitively, we might say that for any two directions $\mathbf{v}_1, \mathbf{v}_2 \in \mathbb{R}^d_{\text{unit}}$, $\mathbf{v}_1$ is a better contrastive direction than $\mathbf{v}_2$ if it has a larger target variance and a smaller background variance. Let us formalize this notion: 
\begin{definition}
(Contrastiveness\label{def:contrastiveness})
For two directions $\mathbf{v}_{1}, \mathbf{v}_{2} \in \mathbb{R}^d_{\text{unit}}$, 
$\mathbf{v}_{1}$ is more contrastive than $\mathbf{v}_{2}$ w.r.t. the target and the background covariance matrices $C_X$ and $C_Y$, written as $\mathbf{v}_{1} \succ \mathbf{v}_{2} $, if one of the following is true:
\begin{align*}
& (1)~~~~\lambda_X(\mathbf{v}_1)\geq \lambda_X(\mathbf{v}_2),~\text{\textup{and}}~\lambda_Y(\mathbf{v}_1)< \lambda_Y(\mathbf{v}_2) \\
& (2)~~~~\lambda_X(\mathbf{v}_1)> \lambda_X(\mathbf{v}_2),~\textup{and}~\lambda_Y(\mathbf{v}_1)\leq \lambda_Y(\mathbf{v}_2).
\end{align*}
\end{definition}
We should note that the above definition provides a partial order of the directions in $\mathbb{R}_{unit}^d$. 
Then it is natural to say a direction $\mathbf{v}$ is most contrastive if there are no other directions more contrastive than $\mathbf{v}$.
Formally,
\begin{definition} 
\label{def:most_contrastive}
Define the set of most contrastive directions $\mathcal{S}_\mathbf{v}$ and the corresponding set of target-background variance pairs $\mathcal{S}_\lambda$ to be:
\begin{align*}
& \mathcal{S}_\mathbf{v} \eqdef \{\mathbf{v}\in \mathbb{R}^k_{\text{unit}} :  ~\nexists~\mathbf{v}'\in \mathbb{R}^d_{\text{unit}} ,~s.t.~\mathbf{v}' \succ \mathbf{v}\},\\
& \mathcal{S}_\lambda \eqdef \{ (\lambda_X(\mathbf{v}), \lambda_Y(\mathbf{v})): \mathbf{v} \in \mathcal{S}_\mathbf{v}\}.
\end{align*}
\end{definition}

It is also convenient to define $\mathcal{U}$ to be the set of target-background variance pairs for all directions in $\mathbb{R}_{\text{unit}}^d$, i.e. $\mathcal{U} \eqdef \{ (\lambda_X(\mathbf{v}), \lambda_Y(\mathbf{v})): \mathbf{v} \in \mathbb{R}^d_{\text{unit}}\}$. 
In order to illustrate the quantities defined above, we provide a toy example in Fig. \ref{fig:theo}a by randomly generating the matrices $C_X$ and $C_Y$.
In Fig. \ref{fig:theo}a, the green dots are samples of $\mathcal{U}$, and the red curve corresponds to elements in $\mathcal{S}_\lambda$. The reader will notice that $\mathcal{S}_\lambda$ forms the lower-right boundary of $\mathcal{U}$, which can also be inferred from the above definition. 

Now let us consider directions that are returned by \texttt{cPCA}. Without loss of generality, we will focus our attention on the top cPC selected by \texttt{cPCA} (for different values of $\alpha$).\footnote{This is because, after selecting the first $k$ contrastive components, the $(k+1)$-th contrastive component is obtained by maximizing $\mathbf{v}^T(C_X-\alpha C_Y)\mathbf{v}$ over the space orthogonal to the first $k$ components. By rotating the space such that the first $k$ components correspond to the first $k$ dimensions, and then truncating the first $k$ dimensions, the problem of selecting the $(k+1)$-th contrastive component is reduced to the same problem as finding the top contrastive component but with dimensionality $k-d$.}
For any contrastive analysis method to be reasonable, one would require that the directions it generates lie in $\mathcal{S}_\mathbf{v}$. We show that this is indeed the case for $\texttt{cPCA}$. Furthermore, we show that the set of top cPCs with different values of $\alpha$ is actually identical to $\mathcal{S}_\mathbf{v}$. In other words, \texttt{cPCA} recovers all contrastive directions, yielding its optimality. This is stated as below (with proof provided in Appendix \ref{supp:proof}):

\begin{theorem}\label{thrm:opt_cPCA}
Let $\mathcal{S}^{cPCA}_\mathbf{v}$ be the set of top contrastive components of \texttt{cPCA} and let $\mathcal{S}^{cPCA}_\lambda$ be the corresponding set of target-background variance pairs:
\begin{align*}
&\mathcal{S}^{cPCA}_\mathbf{v} \eqdef \{ \mathbf{v}: \exists \alpha>0~s.t.~\mathbf{v}=\argmax_{\mathbf{v}'\in\mathbb{R}^d_{\text{unit}}} \lambda(\mathbf{v}')-\alpha \sigma(\mathbf{v}')\}, \\
&\mathcal{S}^{cPCA}_\lambda\eqdef \{ (\sigma(\mathbf{v}), \lambda(\mathbf{v})): \mathbf{v} \in \mathcal{S}^{cPCA}_\mathbf{v}\}.
\end{align*}
For $\mathcal{S}_\mathbf{v}$, $\mathcal{S}_\lambda$ in Def. \ref{def:most_contrastive}, we have
\begin{align*}
\mathcal{S}^{cPCA}_\mathbf{v} = \mathcal{S}_\mathbf{v}, ~~~~\mathcal{S}^{cPCA}_\lambda=\mathcal{S}_\lambda.
\end{align*}
\end{theorem}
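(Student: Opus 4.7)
The plan is to prove $\mathcal{S}^{cPCA}_\mathbf{v} = \mathcal{S}_\mathbf{v}$ by establishing the two inclusions separately; the corresponding equality $\mathcal{S}^{cPCA}_\lambda = \mathcal{S}_\lambda$ then follows immediately by applying the image map $\mathbf{v}\mapsto(\lambda_X(\mathbf{v}),\lambda_Y(\mathbf{v}))$ to both sides.

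For the inclusion $\mathcal{S}^{cPCA}_\mathbf{v}\subseteq\mathcal{S}_\mathbf{v}$, I would argue by contradiction. Suppose $\mathbf{v} = \argmax_{\mathbf{v}'\in\mathbb{R}^d_{\text{unit}}}\lambda_X(\mathbf{v}') - \alpha\lambda_Y(\mathbf{v}')$ for some $\alpha>0$, and yet some $\mathbf{v}''\succ\mathbf{v}$. Either branch of Definition~\ref{def:contrastiveness} provides one weak and one strict inequality pointing in the ``good'' directions; combined with $\alpha>0$ this forces $\lambda_X(\mathbf{v}'')-\alpha\lambda_Y(\mathbf{v}'') > \lambda_X(\mathbf{v})-\alpha\lambda_Y(\mathbf{v})$, contradicting the optimality of $\mathbf{v}$. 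This direction is essentially a routine bookkeeping check.

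For the harder inclusion $\mathcal{S}_\mathbf{v}\subseteq\mathcal{S}^{cPCA}_\mathbf{v}$, the strategy is to use the convex geometry of the joint numerical range $\mathcal{U}$ in the $(\lambda_X,\lambda_Y)$-plane, guided by Figure~\ref{fig:theo}a. I would proceed in two steps. First, establish that $\mathcal{U}$ is convex: for $d\geq 3$ this is Brickman's theorem on convexity of the joint numerical range of two real symmetric matrices, and the $d\leq 2$ case is handled by parametrizing the unit circle as $(\cos\theta,\sin\theta)$ and checking directly that the lower-right boundary of the image curve coincides with the lower-right boundary of its convex hull (which suffices because \texttt{cPCA} optimizes a linear functional). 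Second, given a Pareto-optimal $\mathbf{v}^*\in\mathcal{S}_\mathbf{v}$, its image $p^* := (\lambda_X(\mathbf{v}^*),\lambda_Y(\mathbf{v}^*))$ lies on the lower-right boundary of the convex set $\mathcal{U}$, so by the supporting hyperplane theorem there is a nonzero normal $(a,-b)$ with $a,b\geq 0$ such that $a p_1 - b p_2 \leq a p^*_1 - b p^*_2$ for every $p\in\mathcal{U}$. Pareto-optimality of $\mathbf{v}^*$ rules out the degenerate cases $a=0$ and $b=0$ (which would allow a strict dominator in the other coordinate), so $a,b>0$ and $\alpha := b/a > 0$ certifies $\mathbf{v}^*\in\mathcal{S}^{cPCA}_\mathbf{v}$.

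The main obstacle is the convexity of $\mathcal{U}$: without it, \texttt{cPCA}'s linear-functional maximization only recovers extreme points of $\mathrm{conv}(\mathcal{U})$, and we could miss Pareto-optimal points in the nonconvex ``dents'' of $\mathcal{U}$. Invoking Brickman's theorem handles this cleanly for $d\geq 3$, and the small-dimensional case needs to be checked by hand. A secondary technical point is strict positivity of $\alpha$ at the two extreme endpoints of $\mathcal{S}_\lambda$, where the supporting line becomes vertical or horizontal; these correspond to directions that simultaneously extremize one of $\lambda_X,\lambda_Y$ (i.e., the standard top PC of $C_X$, or a top eigenvector of $C_X$ restricted to the null space of $C_Y$), and they are naturally incorporated by allowing $\alpha$ limits as in the algorithm's discussion of $\alpha\in[0,\infty]$.
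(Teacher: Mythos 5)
Your proposal is correct in its essentials but takes a genuinely different route from the paper's proof. For the easy inclusion $\mathcal{S}^{cPCA}_\mathbf{v}\subseteq\mathcal{S}_\mathbf{v}$ you and the paper do the same bookkeeping. For the converse, the paper stays elementary and argues by contradiction: using compactness of $\mathcal{S}^{cPCA}_\mathbf{v}$ (Lemma \ref{lm:compactness}) it brackets a hypothetical missed Pareto-optimal direction $\mathbf{v}$ between the nearest cPCA solutions $\mathbf{v}_l,\mathbf{v}_u$ on either side in $\lambda_X$, shows via the slope conditions of Lemma \ref{lm:cPCA_select_cond} that both are top eigenvectors of $C_X-\alpha' C_Y$ for the secant slope $1/\alpha'$, and then rotates within that shared eigenspace to produce a cPCA solution with intermediate $\lambda_X$, contradicting the bracketing. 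You instead import convexity of the joint numerical range $\mathcal{U}$ (Brickman's theorem for $d\geq 3$, plus a direct check of the ellipse for $d\leq 2$) and run a supporting-hyperplane argument at the Pareto point, reading off the certificate $\alpha=b/a$ from the normal. Your route is shorter and more conceptual once Brickman is granted: it isolates exactly the convexity property that makes linear scalarization recover the entire Pareto frontier, which is the substantive content the paper's eigenspace-interpolation step re-derives implicitly and in a self-contained way.

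One caveat, which you partly flag and which afflicts the paper's proof equally: at the two endpoints of the Pareto frontier the only supporting line can be vertical or horizontal, so Pareto-optimality does \emph{not} by itself rule out the degenerate normals $a=0$ or $b=0$ as your third paragraph asserts. For instance, when $\mathcal{U}$ is strictly convex the direction maximizing $\lambda_X$ is Pareto-optimal yet is attained only in the limit $\alpha\to 0$, so it lies in $\mathcal{S}_\mathbf{v}$ but not in $\mathcal{S}^{cPCA}_\mathbf{v}$ as literally defined with $\alpha>0$. Interior points of the frontier are fine (a vertical or horizontal support there would contradict the existence of Pareto points on both sides), and the endpoints require either admitting $\alpha\in\{0,\infty\}$ or excising them from the claim; the paper's construction of $\mathbf{v}_l$ and $\mathbf{v}_u$ silently assumes both bracketing points exist and therefore carries the same boundary gap.
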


\begin{remark}(A geometrical interpretation of $\alpha$)

\normalfont
For the direction $\mathbf{v}$ selected by \texttt{cPCA} with the contrast parameter set to $\alpha$, its variance pair $(\lambda_X(\mathbf{v}),\lambda_Y(\mathbf{v}))$ corresponds to the point of tangency of $\mathcal{S}_\lambda$ with a line of slope $1/\alpha$. 
For example, the left blue triangle in Fig. \ref{fig:theo}a corresponds to the \texttt{cPCA} direction with $\alpha=0.92$, and it is the point of tangency of the red curve $\mathcal{V}(\mathcal{S})$ and the blue line with slope $1.08$. As a result, by varying $\alpha$ from zero to infinity, \texttt{cPCA} selects directions with variance pairs traveling from the lower-left to the upper-right end of $\mathcal{S}_\lambda$.

This interpretation can be derived from the following observation. Consider any sequence $\alpha_n \downarrow \alpha$. 
Then there exists a sequence $\mathbf{v}_n$ such that $\mathbf{v}_n$ is the solution to \eqref{eq:opt_obj} with alpha value $\alpha_n$, and $\lambda_X(\mathbf{v}_n)\uparrow \lambda_X(\mathbf{v})$, $\lambda_Y(\mathbf{v}_n)\uparrow
\lambda_Y(\mathbf{v})$.
By Lemma \ref{lm:cPCA_select_cond}, 
\begin{align*}
\frac{1}{\alpha_n} \leq \frac{\lambda_Y(\mathbf{v}_n)-\lambda_Y(\mathbf{v})}{\lambda_X(\mathbf{v}_n)-\lambda_X(\mathbf{v})} \leq \frac{1}{\alpha},
\end{align*}
giving 
\begin{align*}
\lim_{n\to\infty} \frac{\lambda_Y(\mathbf{v}_n)-\lambda_Y(\mathbf{v})}{\lambda_X(\mathbf{v}_n)-\lambda_X(\mathbf{v})}  = \frac{1}{\alpha}.
\end{align*}
This implies that $(\lambda_X(\mathbf{v}),\lambda_Y(\mathbf{v}))$ is the point of tangency of $\mathcal{S}_\lambda$ and the slope-$\frac{1}{\alpha}$ tangent line.
\end{remark}

\begin{example} (Simultaneously diagonalizable matrices)

\normalfont
A closed form representation of $\mathcal{S}_\lambda$ can be derived for the special case where the matrices $C_X$ and $C_Y$ are simultaneously diagonalizable. We derive it here to provide some intuition for the topology of target-background variance pairs.

Let $Q$ be the unitary matrix that diagonalize $C_X$ and $C_Y$, i.e.
\begin{align*}
C_X = Q \Lambda_X Q^T, ~~~~~~ C_Y=Q \Lambda_Y Q^T,
\end{align*}
where $\Lambda_X = \textrm{diag}(\lambda_{X,1},\cdots,\lambda_{X,d})$, $\Lambda_Y = \textrm{diag}(\lambda_{Y,1},\cdots,\lambda_{Y,d})$. 
Let $\mathbf{q}_1,\cdots,\mathbf{q}_d$ be the eigenvectors. 
Any unit vector can be written as $\mathbf{v} = \sum_{i} \sqrt{c_i} \mathbf{q}_i$, for $c_1,\cdots,c_d \geq 0$, $\sum_i c_i=1$. 
Then the target and the background variances can be written as 
\begin{align*}
& \lambda_X(\mathbf{v}) = \mathbf{v}^T C_X \mathbf{v} = \sum_i c_i \lambda_{X,i},\\
&\lambda_Y(\mathbf{v})= \mathbf{v}^T C_Y \mathbf{v} = \sum_i c_i \lambda_{Y,i}.
\end{align*}
Since the variance pair $(\lambda_X(\mathbf{v}), \lambda_Y(\mathbf{v}))$ is a convex combination of the variance pairs of eigenvectors $\{(\lambda_{X,i},\lambda_{Y,i})\}_{i=1}^d$, the set of variance pairs $\{(\lambda_X(\mathbf{v}), \lambda_Y(\mathbf{v})): \mathbf{v}\in\mathbb{R}^d_{\text{unit}}\}$ is the convex hull of $\{(\lambda_{X,i},\lambda_{Y,i})\}_{i=1}^d$. 
Also $\mathcal{S}_\lambda$ is the lower-right boundary of the convex hull of $\{(\lambda_{X,i},\lambda_{Y,i})\}_{i=1}^d$. 
We visualize this n Fig. \ref{fig:theo} (b) using randomly generated the simultaneously diagonalizable matrices $C_X$ and $C_Y$. 

As a result, $\mathcal{S}_\mathbf{v}$ can be written as follows.
Let $\mathbf{q}_{(1)},\cdots, \mathbf{q}_{(k)} \in \{\mathbf{q}_i\}_{i=1}^d$ be the eigenvectors whose variance pairs $(\lambda_{X,(j)},\lambda_{Y,(j)})$ lie on the lower-right boundary of the convex hull of $\{(\lambda_{X,i},\lambda_{Y,i})\}_{i=1}^d$, indexed in the ascending order of $\lambda_{X,(j)}$.
Then 
\begin{align*}
& \mathcal{S}_\mathbf{v} = \{ \mathbf{v}: \mathbf{v} = \sqrt{c} \mathbf{q}_{(j)} + \sqrt{1-c} \mathbf{q}_{(j+1)},\\&~for~0\leq c\leq 1, 1\leq j \leq k-1 \}.
\end{align*}
This implies that $\mathcal{S}_\mathbf{v}$ is a union of $(k-1)$ curved line segments of the form $\sqrt{c} \mathbf{q}_{(j)} + \sqrt{1-c} \mathbf{q}_{(j+1)}$, which is itself a curved line segment in the $k$ dimensional subspace spanned by $\mathbf{q}_{(1)},\cdots, \mathbf{q}_{(k)}$.
\end{example}
 
\section{Extensions: Kernel cPCA} 
We extend cPCA to Kernel cPCA, following the analogous extension of PCA to kernel PCA \cite{scholkopf1997kernel}. Full details are in Appendix \ref{suppsec: kernelcPCA}.

Consider the nonlinear transformation $\Phi: \mathbb{R}^d \mapsto F$ that maps the data to some feature space $F$. We assume that the mapped data, $\Phi(X_1),\cdots,\Phi(X_n)$, $\Phi(Y_1),\cdots,\Phi(Y_m)$, is centered, i.e. $\sum_{i=1}^n \Phi(X_i)=\sum_{j=1}^m \Phi(Y_j)=0$. (The general case is considered in Supp. \ref{suppsec: kernelcPCA}.)

The covariance matrices for the target and the background can be written as 
\begin{align*}
\bar{C}_X = \frac{1}{n} \sum_{i=1}^n \Phi(X_i)\Phi(X_i)^T,~~~~\bar{C}_Y = \frac{1}{m} \sum_{j=1}^m \Phi(Y_j)\Phi(Y_j)^T. 
\end{align*}
Contrastive PCA on the transformed data solves for the eigenvectors of $(\bar{C}_X-\alpha \bar{C}_Y) \mathbf{v}$, where the $k$-th eigenvector is the $k$-th contrastive component, but this is in efficient if the dimensionality of $F$ is large. 

We next describe the kernel cPCA algorithm, which allows us to efficiently perform contrastive analysis on the transformed data. 

Let $N=n+m$ and denote the data as $(Z_1,\cdots,Z_N)=(X_1,\cdots,X_n,Y_1,\cdots,Y_m)$.
Define the kernel matrix $K$ to have the $ij$-th element $K_{ij}=\Phi(Z_i) \cdot \Phi(Z_j)$, and write it in form of a block matrix as 
\begin{align}
K=\left[\begin{array}{cc}
K_{X}&K_{XY} \\ K_{YX} &K_{Y}
\end{array}\right],
\label{eq:kercPCA5}
\end{align}
where $K_{X}\in\mathbb{R}^{n\times n}$, $K_{Y}\in\mathbb{R}^{m\times m}$ are the sub-kernels corresponding to $X_1,\cdots,X_n$, and $Y_1,\cdots,Y_m$, respectively. 

As derived in Appendix \ref{suppsec: kernelcPCA}, instead of directly calculating the eigenvectors of $(\bar{C}_X-\alpha \bar{C}_Y) \mathbf{v}$, we can consider its dual representation $\mathbf{v}=\sum_{i=1}^N a_i \Phi(Z_i)$, and solve $a_i$'s via the following eigenvalue problem for non-zero eigenvalues:
\begin{align}\label{eq:kercPCA5}
\lambda \mathbf{a} = \tilde{K} \mathbf{a},
\end{align}
where the first eigenvector $\mathbf{a}^{(1)}$ corresponds to the first contrastive component, and
\begin{align*}
\tilde{K}=\left[\begin{array}{cc}
\frac{1}{n} K_{X}& \frac{1}{n} K_{XY} \\ -\frac{\alpha}{m} K_{YX}  & -\frac{\alpha}{m} K_{Y}
\end{array}\right].
\end{align*}
To make $\Vert \mathbf{v} \Vert=1$, we require $\mathbf{a}^T K \mathbf{a} =1$.
Finally, we can project the data onto the $k$-th contrastive component by
\begin{align*}
[\mathbf{v}^{(k)} \cdot \Phi(Z_1),\cdots,\mathbf{v}^{(k)} \cdot \Phi(Z_N)] = K \mathbf{a}^{(k)}.
\end{align*}
Note that in the above calculation, the kernel can be constructed via some kernel function $h(\cdot,\cdot)$ as $K_{ij}=h(Z_i,Z_j)$, and the projected data can be computed as $K \mathbf{a}^{(k)}$.
As a result, by Kernel cPCA, we can actually perform cPCA in the feature space without explicitly computing the non-linearly transformed data.

\begin{example} (Kernel cPCA: a toy example)

\normalfont
In this dataset, $d=10$, and the first two dimensions $X_1, X_2$ contain the subgroup structure in the target data.  As shown in Fig. \ref{fig:toy_kernelcPCA_data}a, the two subgroups can not be linearly separated directly. However, Fig. \ref{fig:toy_kernelcPCA_data}b shows that they can be linearly separated if we project the data on the non-linear features $\phi(X_1)=X_1^2$ and $\phi(X_2)=X_2^2$. 

\begin{figure}[]
\centering
\subfigure[]{\includegraphics[width=0.23\textwidth]{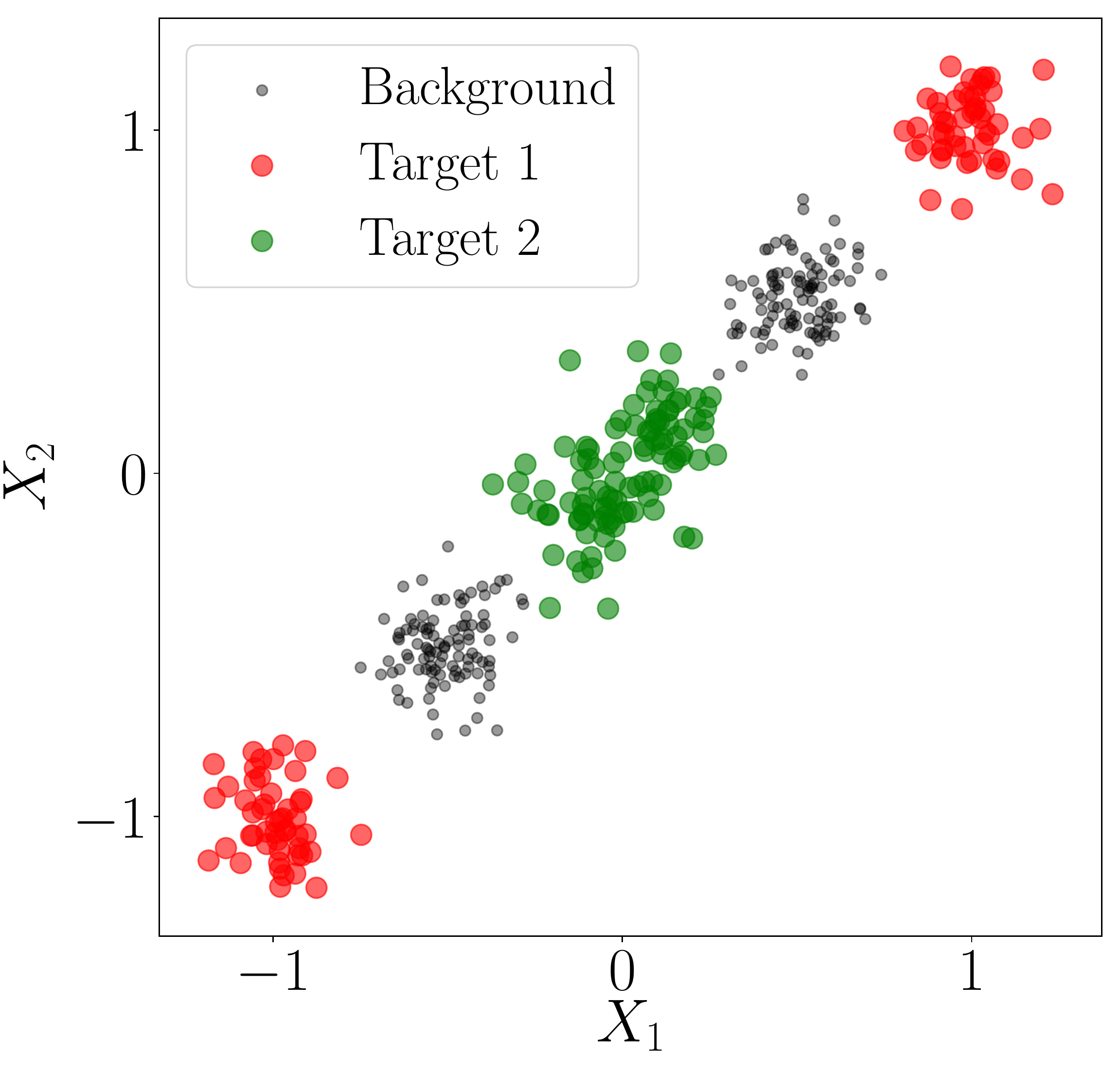}}
\subfigure[]{\includegraphics[width=0.23\textwidth]{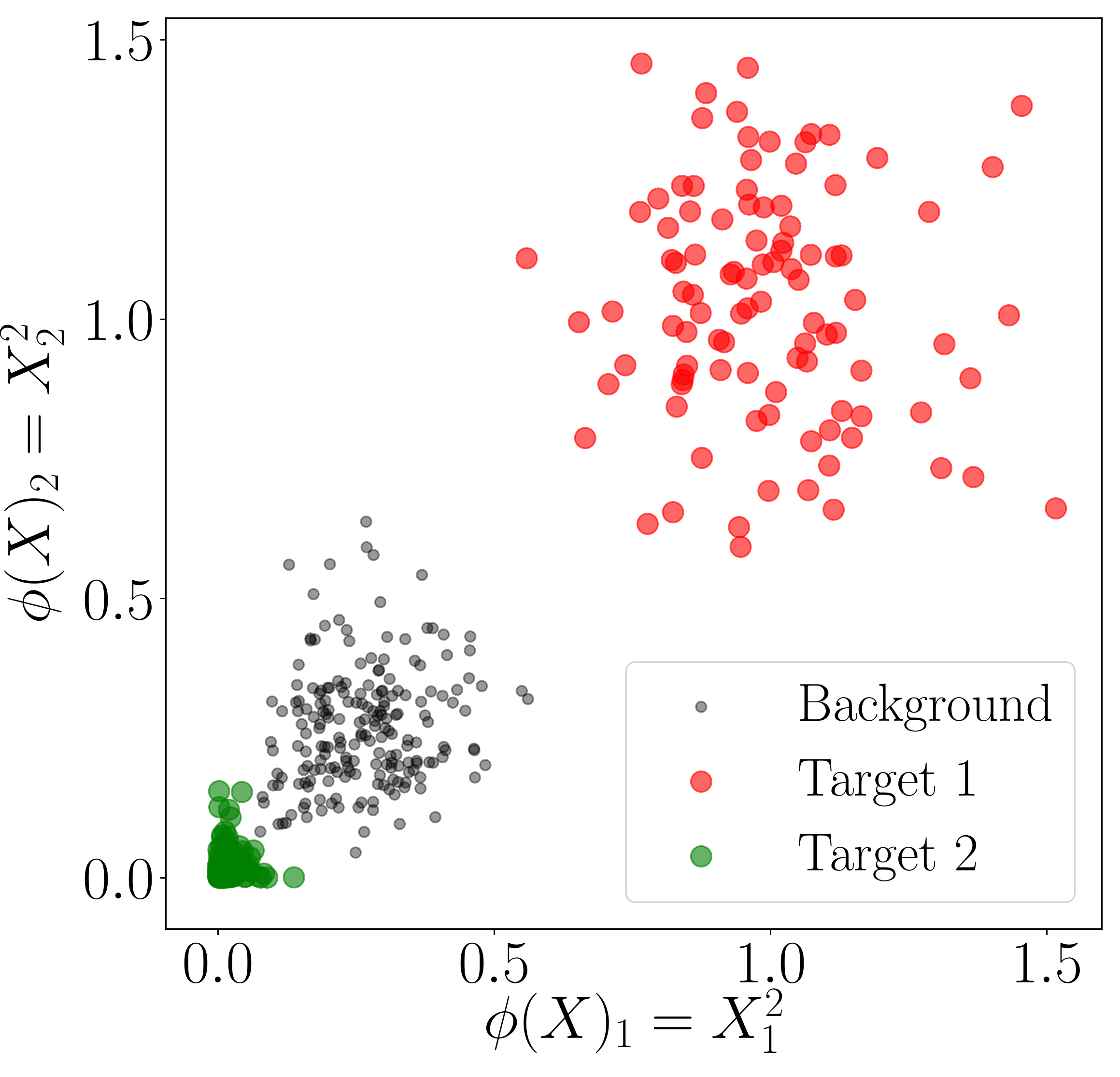}}
\caption{A toy example for kernel cPCA. (a) the data on the first two dimension $X_1,X_2$, and the two subgroups in the target data (red/ green) are not linearly separable. (b) the data on two non-linear features $\phi(X_1)=X_1^2, \phi(X_2)=X_2^2$, where the two subgroups become linearly separable. \label{fig:toy_kernelcPCA_data}}
\end{figure}

We tested PCA, cPCA, kernel PCA, kernel cPCA, using the polynomial kernel $K(\mathbf{X},\mathbf{Y})=(\mathbf{X}^T \mathbf{Y}+1)^2$ for the latter two to address the non-linear mapping. As shown in Fig. \ref{fig:toy_kernelcPCA_result}, both cPCA and kernel cPCA recover the subspace that contains the subgroup structure, but only Kernel cPCA produces a subspace where the two subgroups are linearly separable.
\end{example}

\begin{figure}
\centering
\subfigure[]{\includegraphics[width=0.23\textwidth]{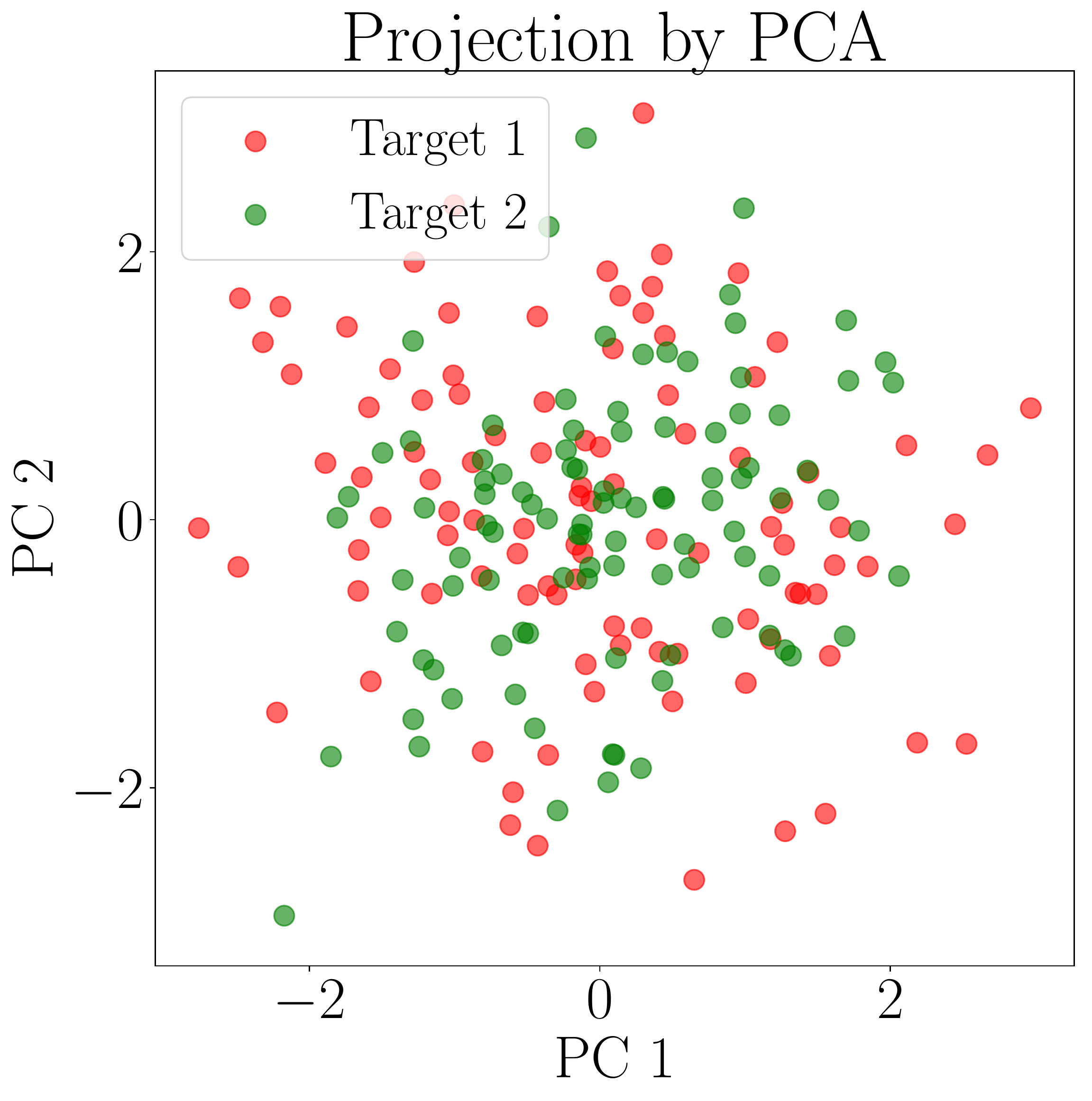}}
\subfigure[]{\includegraphics[width=0.23\textwidth]{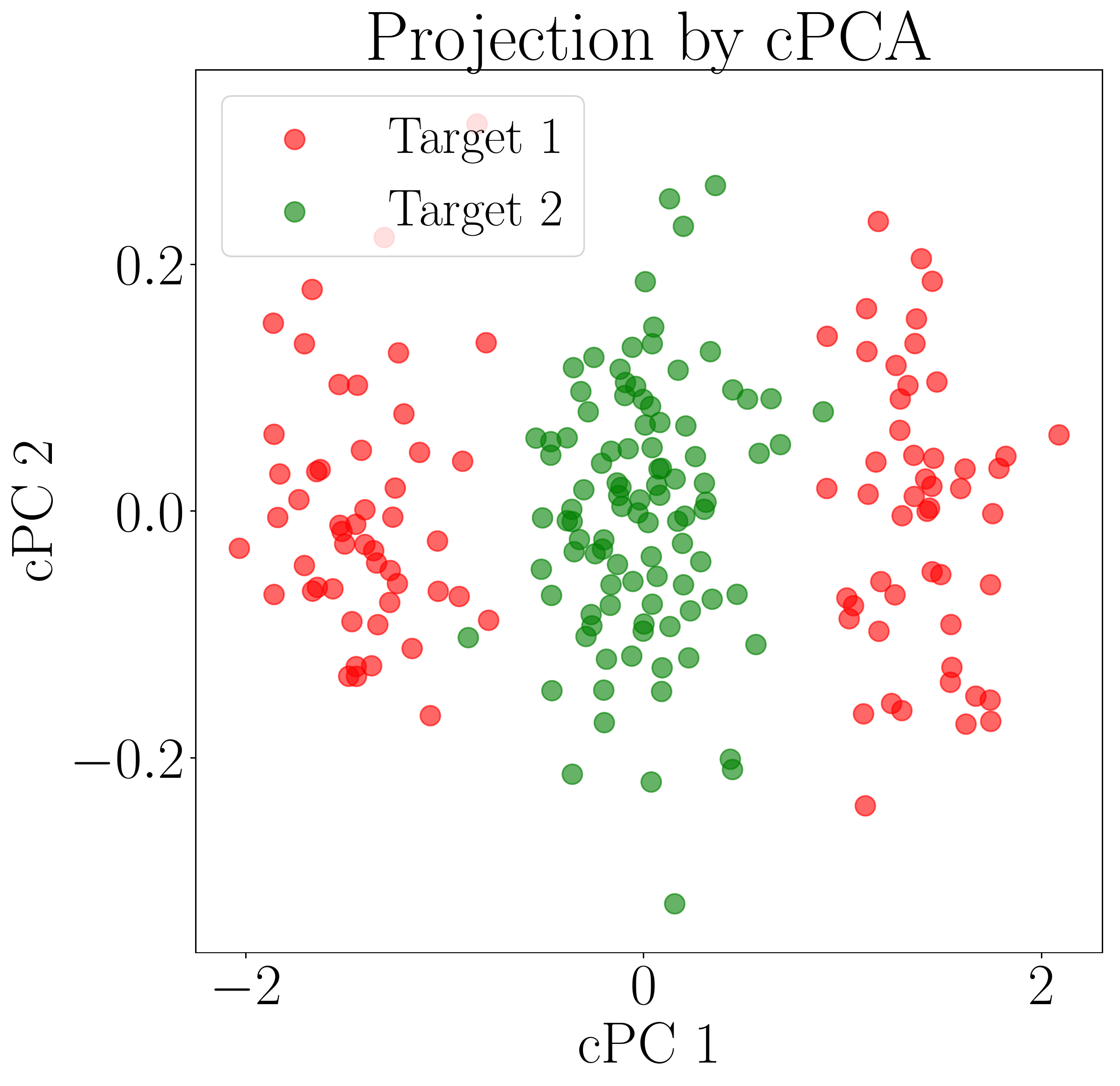}}\\
\subfigure[]{\includegraphics[width=0.23\textwidth]{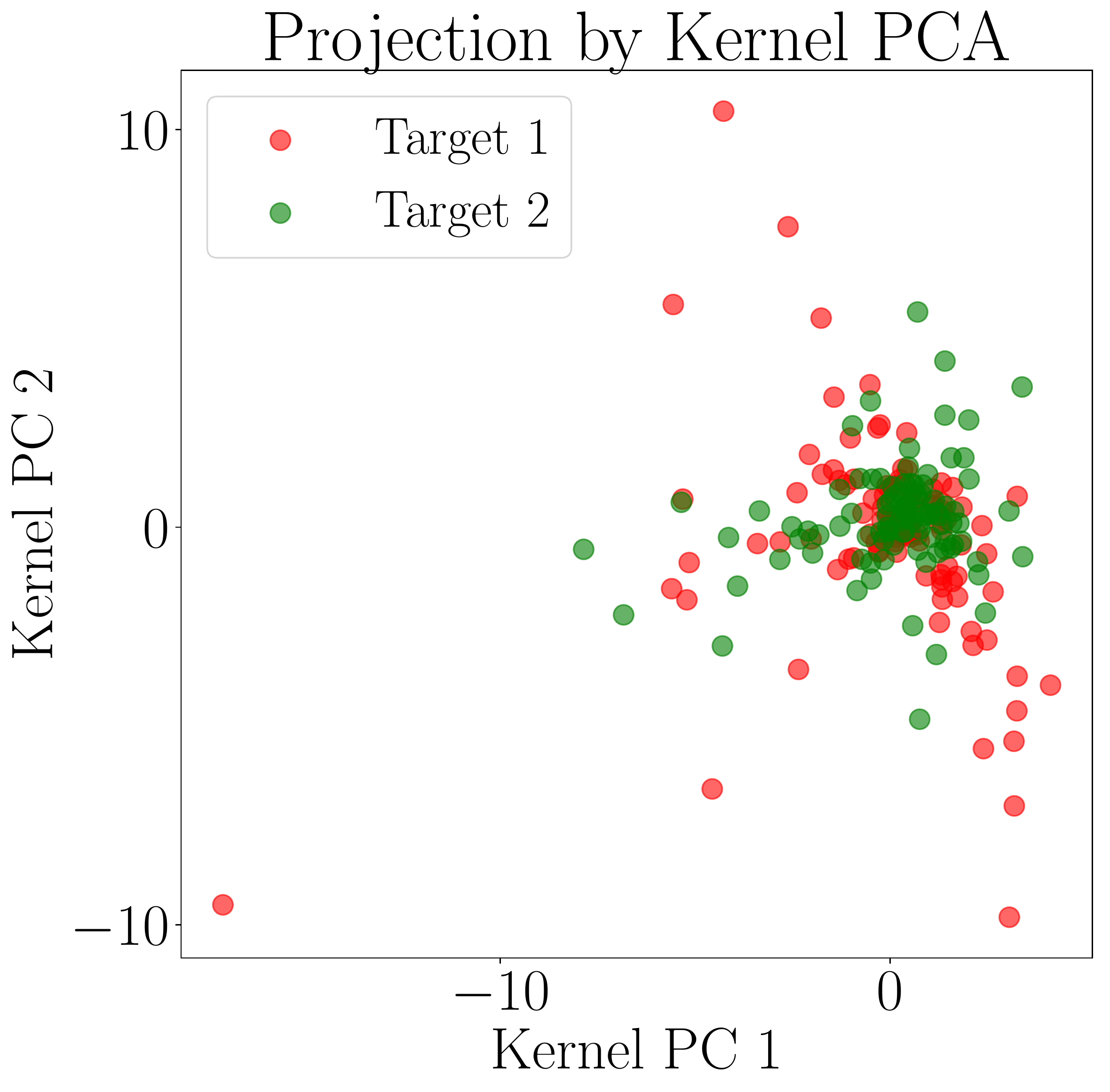}}
\subfigure[]{\includegraphics[width=0.23\textwidth]{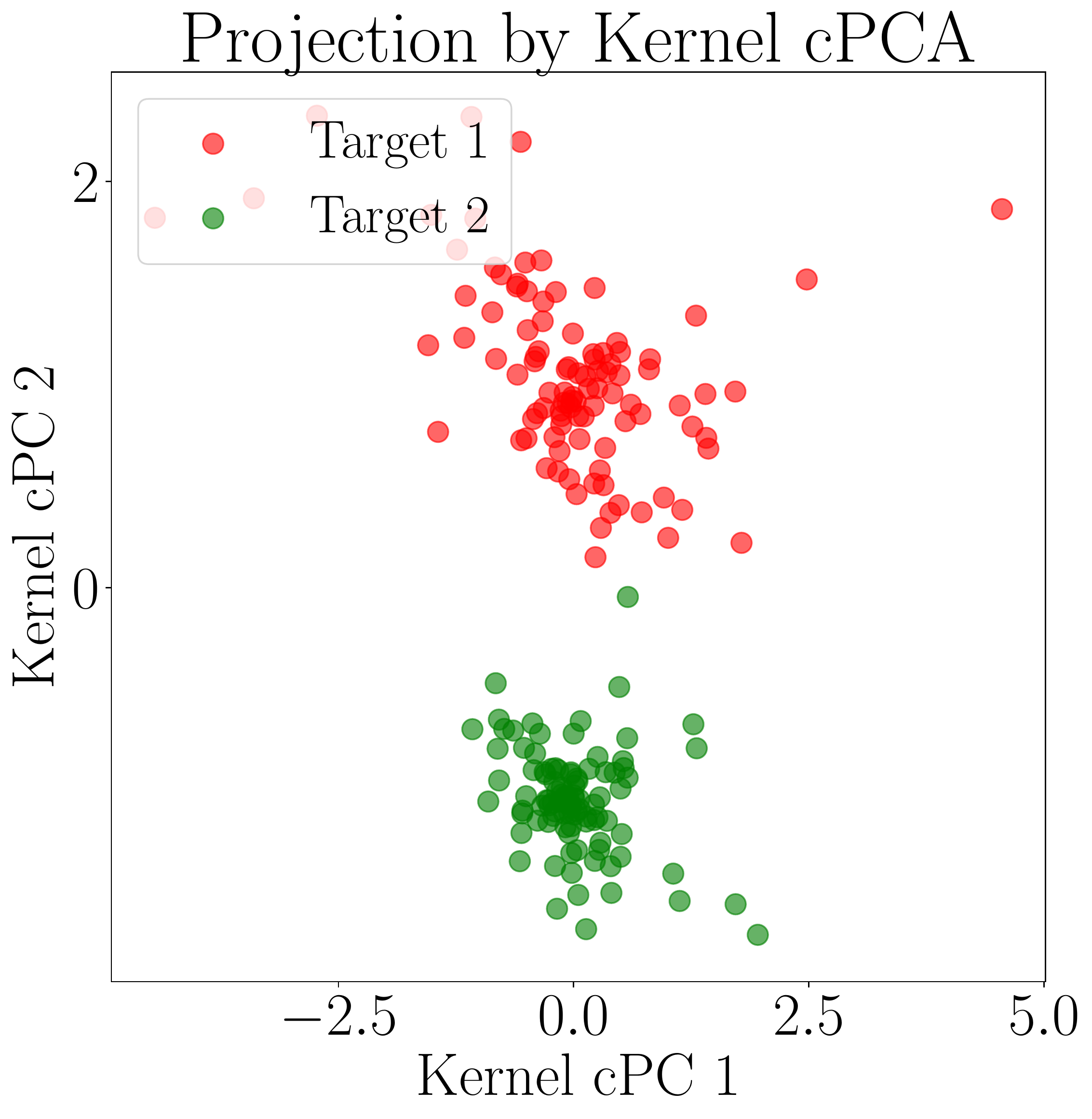}}
\caption{The results by (a) PCA, (b) cPCA, (c) kernel PCA, (d) kernel cPCA. \label{fig:toy_kernelcPCA_result}}
\end{figure}

\begin{remark} \normalfont
It is often challenging to get kernel cPCA work effectively in practice. This is because kernel cPCA is indirectly performing cPCA in the transformed feature space. However, the kernel generally induces a feature space with many correlated features, creating a large null space in the background data. Since \texttt{cPCA} does not have a penalty for directions in this null space and this null space is large, the background dataset will not be very effective at canceling out directions in the target. We plan to address this issue in the future work. 
\end{remark}

\section{Discussion}
In many data science settings, we are interested in visualizing and exploring patterns that are enriched in one dataset relative to other data. We have presented \texttt{cPCA} as a general tool for performing such contrastive exploration, and we have illustrated its usefulness in a diverse range of applications from visualizing contrastive clusters and trends to data driven normalization and denoising. The main advantages of \texttt{cPCA} is its generality and its easiness of use. Computing a particular contrast PCA takes essentially the same amount of time as computing a regular PCA. This computational efficiency enables \texttt{cPCA} to be useful for interactive data exploration, where each operation should ideally be almost immediate. Moreover any data where PCA can be usefully applied, \texttt{cPCA} can also be applied.   

For clarity of presentation, this paper focused on the setting when we want to perform contrastive analysis between one target dataset and one background dataset. In some scenarios, analyst might be interested to find the contrast between one target and \emph{multiple} background datasets. A simple approach would be to aggregate the several backgrounds into one dataset and then apply the two-population \texttt{cPCA}. More sophisticated models for simultaneous contrast across multiple datasets is an interesting direction of future work. 

The only free parameter of contrastive PCA is the contrast strength $\alpha$. In our default algorithm, we developed an automatic scheme based on clusterings of subspaces for selecting the most informative values of $\alpha$. All of the experiments performed for this paper use the automatically generated $\alpha$ values, and we believe this default will be sufficient in many applications of \texttt{cPCA}. The user may also input specific values for $\alpha$ if more fine-grained exploration is desired. 

\texttt{cPCA}, like regular PCA and other dimensionality reduction methods, does not give $p$-values or statistical significance quantifications. The patterns discovered through \texttt{cPCA} need to be validated through hypothesis testing or additional analysis using relevant domain knowledge. We have released the code for \texttt{cPCA} as a python package along with documentation and examples on Github. Links are provided in the footnotes to the abstract of this paper.




\bibliography{ref}

\begin{thebibliography}{23}
\providecommand{\natexlab}[1]{#1}
\providecommand{\url}[1]{\texttt{#1}}
\expandafter\ifx\csname urlstyle\endcsname\relax
  \providecommand{\doi}[1]{doi: #1}\else
  \providecommand{\doi}{doi: \begingroup \urlstyle{rm}\Url}\fi

\bibitem[Banos et~al.(2015)Banos, Villalonga, Garcia, Saez, Damas,
  Holgado-Terriza, Lee, Pomares, and Rojas]{Banos2015}
Banos, Oresti, Villalonga, Claudia, Garcia, Rafael, Saez, Alejandro, Damas,
  Miguel, Holgado-Terriza, Juan~A, Lee, Sungyong, Pomares, Hector, and Rojas,
  Ignacio.
\newblock Design, implementation and validation of a novel open framework for
  agile development of mobile health applications.
\newblock \emph{{BioMedical} Engineering {OnLine}}, 14\penalty0 (Suppl
  2):\penalty0 S6, 2015.

\bibitem[Bhargava et~al.(2014)Bhargava, Head, Ordoukhanian, Mercola, and
  Subramaniam]{Bhargava2014}
Bhargava, Vipul, Head, Steven~R., Ordoukhanian, Phillip, Mercola, Mark, and
  Subramaniam, Shankar.
\newblock Technical variations in low-input {RNA}-seq methodologies.
\newblock \emph{Scientific Reports}, 4\penalty0 (1), 2014.

\bibitem[Cavalli-Sforza(1998)]{cavalli1998dna}
Cavalli-Sforza, Luca~L.
\newblock The dna revolution in population genetics.
\newblock \emph{Trends in Genetics}, 14\penalty0 (2):\penalty0 60--65, 1998.

\bibitem[Cox \& Cox(2008)Cox and Cox]{cox2008multidimensional}
Cox, Michael~AA and Cox, Trevor~F.
\newblock Multidimensional scaling.
\newblock \emph{Handbook of data visualization}, pp.\  315--347, 2008.

\bibitem[du~Prel et~al.(2010)du~Prel, R{\"o}hrig, Hommel, and
  Blettner]{du2010choosing}
du~Prel, Jean-Baptist, R{\"o}hrig, Bernd, Hommel, Gerhard, and Blettner, Maria.
\newblock Choosing statistical tests: part 12 of a series on evaluation of
  scientific publications.
\newblock \emph{Deutsches {\"A}rzteblatt International}, 107\penalty0
  (19):\penalty0 343, 2010.

\bibitem[Garte(1998)]{garte1998role}
Garte, Seymour.
\newblock The role of ethnicity in cancer susceptibility gene polymorphisms:
  the example of cyp1a1.
\newblock \emph{Carcinogenesis}, 19\penalty0 (8):\penalty0 1329--1332, 1998.

\bibitem[Higuera et~al.(2015)Higuera, Gardiner, and Cios]{higuera2015self}
Higuera, Clara, Gardiner, Katheleen~J, and Cios, Krzysztof~J.
\newblock Self-organizing feature maps identify proteins critical to learning
  in a mouse model of down syndrome.
\newblock \emph{PLOS ONE}, 10\penalty0 (6):\penalty0 e0129126, 2015.

\bibitem[Hotelling(1933)]{hotelling1933analysis}
Hotelling, Harold.
\newblock Analysis of a complex of statistical variables into principal
  components.
\newblock \emph{Journal of Educational Psychology}, 24\penalty0 (6):\penalty0
  417, 1933.

\bibitem[Joliffe \& Morgan(1992)Joliffe and Morgan]{joliffe1992principal}
Joliffe, IT and Morgan, BJT.
\newblock Principal component analysis and exploratory factor analysis.
\newblock \emph{Statistical methods in medical research}, 1\penalty0
  (1):\penalty0 69--95, 1992.

\bibitem[Jolliffe(2002)]{jolliffe2002principal}
Jolliffe, Ian.
\newblock \emph{Principal Component Analysis}.
\newblock Wiley Online Library, 2002.

\bibitem[LeCun et~al.(1998)LeCun, Bottou, Bengio, and
  Haffner]{lecun1998gradient}
LeCun, Yann, Bottou, Leon, Bengio, Yoshua, and Haffner, Patrick.
\newblock Gradient-based learning applied to document recognition.
\newblock \emph{Proceedings of the IEEE}, 86\penalty0 (11):\penalty0
  2278--2324, 1998.

\bibitem[Maaten \& Hinton(2008)Maaten and Hinton]{maaten2008visualizing}
Maaten, Laurens van~der and Hinton, Geoffrey.
\newblock Visualizing data using t-sne.
\newblock \emph{Journal of Machine Learning Research}, 9\penalty0
  (Nov):\penalty0 2579--2605, 2008.

\bibitem[Miao \& Ben-Israel(1992)Miao and Ben-Israel]{Miao1992}
Miao, Jianming and Ben-Israel, Adi.
\newblock On principal angles between subspaces in rn.
\newblock \emph{Linear Algebra and its Applications}, 171:\penalty0 81--98,
  1992.

\bibitem[Moreno-Estrada et~al.(2014)Moreno-Estrada, Gignoux,
  Fern{\'a}ndez-L{\'o}pez, Zakharia, Sikora, Contreras, Acu{\~n}a-Alonzo,
  Sandoval, Eng, Romero-Hidalgo, et~al.]{moreno2014genetics}
Moreno-Estrada, Andr{\'e}s, Gignoux, Christopher~R, Fern{\'a}ndez-L{\'o}pez,
  Juan~Carlos, Zakharia, Fouad, Sikora, Martin, Contreras, Alejandra~V,
  Acu{\~n}a-Alonzo, Victor, Sandoval, Karla, Eng, Celeste, Romero-Hidalgo,
  Sandra, et~al.
\newblock The genetics of mexico recapitulates native american substructure and
  affects biomedical traits.
\newblock \emph{Science}, 344\penalty0 (6189):\penalty0 1280--1285, 2014.

\bibitem[Ng et~al.(2002)Ng, Jordan, and Weiss]{ng2002spectral}
Ng, Andrew~Y, Jordan, Michael~I, and Weiss, Yair.
\newblock On spectral clustering: Analysis and an algorithm.
\newblock In \emph{Advances in neural information processing systems}, pp.\
  849--856, 2002.

\bibitem[Novembre et~al.(2008)Novembre, Johnson, Bryc, Kutalik, Boyko, Auton,
  Indap, King, Bergmann, Nelson, Stephens, and Bustamante]{Novembre2008}
Novembre, John, Johnson, Toby, Bryc, Katarzyna, Kutalik, Zoltan, Boyko,
  Adam~R., Auton, Adam, Indap, Amit, King, Karen~S., Bergmann, Sven, Nelson,
  Matthew~R., Stephens, Matthew, and Bustamante, Carlos~D.
\newblock Genes mirror geography within europe.
\newblock \emph{Nature}, 456\penalty0 (7218):\penalty0 98--101, 2008.

\bibitem[Ringn{\'e}r(2008)]{ringner2008principal}
Ringn{\'e}r, Markus.
\newblock What is principal component analysis?
\newblock \emph{Nature Biotechnology}, 26\penalty0 (3):\penalty0 303, 2008.

\bibitem[Russakovsky et~al.(2015)Russakovsky, Deng, Su, Krause, Satheesh, Ma,
  Huang, Karpathy, Khosla, Bernstein, et~al.]{russakovsky2015imagenet}
Russakovsky, Olga, Deng, Jia, Su, Hao, Krause, Jonathan, Satheesh, Sanjeev, Ma,
  Sean, Huang, Zhiheng, Karpathy, Andrej, Khosla, Aditya, Bernstein, Michael,
  et~al.
\newblock Imagenet large scale visual recognition challenge.
\newblock \emph{International Journal of Computer Vision}, 115\penalty0
  (3):\penalty0 211--252, 2015.

\bibitem[Sch{\"o}lkopf et~al.(1997)Sch{\"o}lkopf, Smola, and
  M{\"u}ller]{scholkopf1997kernel}
Sch{\"o}lkopf, Bernhard, Smola, Alexander, and M{\"u}ller, Klaus-Robert.
\newblock Kernel principal component analysis.
\newblock In \emph{International Conference on Artificial Neural Networks},
  pp.\  583--588. Springer, 1997.

\bibitem[Silva-Zolezzi et~al.(2009)Silva-Zolezzi, Hidalgo-Miranda, Estrada-Gil,
  Fernandez-Lopez, Uribe-Figueroa, Contreras, Balam-Ortiz, del Bosque-Plata,
  Velazquez-Fernandez, Lara, Goya, Hernandez-Lemus, Davila, Barrientos, March,
  and Jimenez-Sanchez]{SilvaZolezzi2009}
Silva-Zolezzi, I., Hidalgo-Miranda, A., Estrada-Gil, J., Fernandez-Lopez,
  J.~C., Uribe-Figueroa, L., Contreras, A., Balam-Ortiz, E., del Bosque-Plata,
  L., Velazquez-Fernandez, D., Lara, C., Goya, R., Hernandez-Lemus, E., Davila,
  C., Barrientos, E., March, S., and Jimenez-Sanchez, G.
\newblock Analysis of genomic diversity in mexican mestizo populations to
  develop genomic medicine in mexico.
\newblock \emph{Proceedings of the National Academy of Sciences}, 106\penalty0
  (21):\penalty0 8611--8616, 2009.

\bibitem[van~den Berg et~al.(2006)van~den Berg, Hoefsloot, Westerhuis, Smilde,
  and van~der Werf]{vandenBerg2006}
van~den Berg, Robert~A, Hoefsloot, Huub~CJ, Westerhuis, Johan~A, Smilde, Age~K,
  and van~der Werf, Mariët~J.
\newblock \emph{{BMC} Genomics}, 7\penalty0 (1):\penalty0 142, 2006.

\bibitem[Wold et~al.(1987)Wold, Esbensen, and Geladi]{wold1987principal}
Wold, Svante, Esbensen, Kim, and Geladi, Paul.
\newblock Principal component analysis.
\newblock \emph{Chemometrics and intelligent laboratory systems}, 2\penalty0
  (1-3):\penalty0 37--52, 1987.

\bibitem[Zheng et~al.(2017)Zheng, Terry, Belgrader, Ryvkin, Bent, Wilson,
  Ziraldo, Wheeler, and P.]{Zheng2017}
Zheng, Grace X.~Y., Terry, Jessica~M., Belgrader, Phillip, Ryvkin, Paul, Bent,
  Zachary~W., Wilson, Ryan, Ziraldo, Solongo~B., Wheeler, Tobias~D., and P.,
  Geoff.
\newblock Massively parallel digital transcriptional profiling of single cells.
\newblock \emph{Nature Communications}, 8:\penalty0 14049, 2017.

\end{thebibliography}
\bibliographystyle{icml2017}

\clearpage
\onecolumn
\appendix

\begin{center}
\textbf{\large Appendices}
\end{center}

\section{The cPCA Algorithm on Synthetic Data}
\label{supp:synthetic}

We create a toy dataset that provides some intuition for settings in which \texttt{cPCA} is able to resolve subgroups, and the role of the contrast parameter $\alpha$. Consider a target dataset, $\{X_i\}$, that consists of 400 data points in 30-dimensional feature space. There are four subgroups within this dataset (\textbf{\textcolor{red}{red}}, \textbf{\textcolor{blue}{blue}}, \textbf{\textcolor{olive}{yellow}}, \textbf{\textcolor{black}{black}}), each of 100 points. The first 10 features of two subgroups (\textbf{\textcolor{red}{red}}, \textbf{\textcolor{blue}{blue}}) are sampled from $N(0,1)$ while the other two subgroups (\textbf{\textcolor{black}{black}}, \textbf{\textcolor{olive}{yellow}}) are sampled from $N(6,1)$. The next 10 dimensions of the subgroups \textbf{\textcolor{red}{red}} and \textbf{\textcolor{olive}{yellow}} are sampled from $N(0,1)$ while those for the \textbf{\textcolor{black}{black}} and \textbf{\textcolor{blue}{blue}} are sampled from $N(3,1)$. The last 10 dimensions of all 400 data points are sampled from $N(0,10)$.

In this setting, classical PCA is unable to resolve the subgroups because the variance along the the last 10 dimensions is significantly larger than in any other direction, so some combination of those dimensions are selected by PCA (Panel 1, the leftmost). However, now suppose that we have a background set, ${Y_i}$ that is sampled from $N(0,3)$ along its first 10 dimensions, from $N(0,1)$ along its second 10 dimensions, and $N(0,10)$ along its final third. 

By choosing different values for $\alpha$, different subgroups can be resolved: for small values of $\alpha$, cPCA is identical to PCA, and thus the last dimensions of ${X_i}$ are selected. When $\alpha$ is increased slightly, the last dimensions are no longer selected, because they also have high variance in the background dataset. Instead, the first dimensions of ${X_i}$ are selected by contrast, allowing us to discriminate between \textbf{\textcolor{red}{red}}/\textbf{\textcolor{blue}{blue}} and the \textbf{\textcolor{black}{black}}/\textbf{\textcolor{olive}{yellow}} subgroups (Panel 2). When is $\alpha$ increased even higher, the middle dimensions of ${X_i}$ are selected by contrast as these are the dimensions that have the lowest variance in the background dataset. This allows us to discriminate between the \textbf{\textcolor{black}{black}}/\textbf{\textcolor{blue}{blue}} and \textbf{\textcolor{red}{red}}/\textbf{\textcolor{olive}{yellow}} (Panel 4). We may hypothesize that there is an intermediate value of $\alpha$ that allows us to separate all four subgroups -- indeed this is the case (Panel 3). The values of $\alpha$ (besides $\alpha=0$, which is PCA) shown below were selected automatically according to Algorithm \ref{alg:cpca2}.

\begin{figure}[h]
\centering
\includegraphics[width=0.8\textwidth]{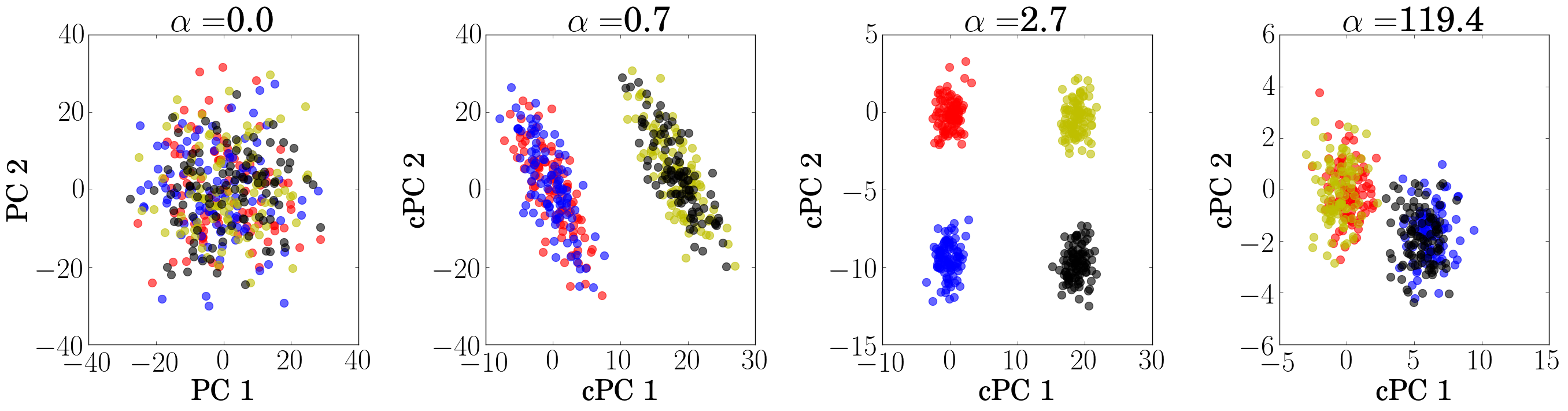}
\end{figure}

\section{Full Experimental Results of cPCA}
In this section, we include all of the figures from the cPCA analyses presented in the main body of the figure. In particular, we show the results for all three values of $\alpha$ that were automatically selected by Algorithm \ref{alg:cpca2}. We also provide the results of some additional experiments below.

\label{supp:full_results}

\subsection{Mice Protein Expression Dataset}

Here are the results for the three automatically selected values of $\alpha$, as well as PCA (corresponding to $\alpha=0$).

\begin{figure}[h]
\centering
\includegraphics[width=0.9\textwidth]{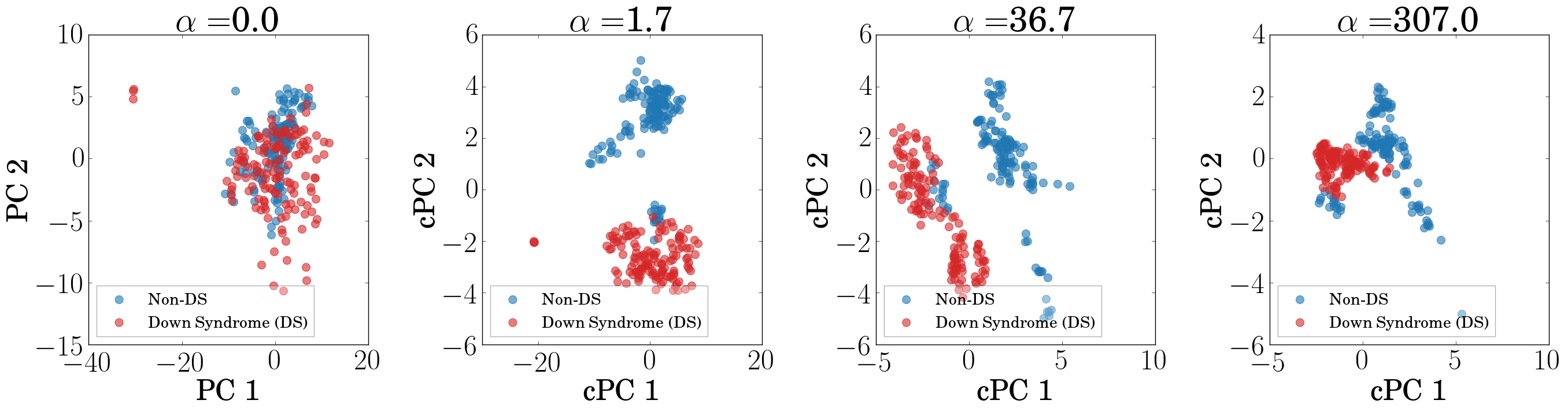}
\end{figure}

How likely is it that cPCA is discovering these clusters by chance? We can get an idea by shuffling the labels of the data and running cPCA again. A representative simulation is shown below. Because cPCA does not depend on the labels, the distribution of data points is unchanged but the labels are more randomly distributed between clusters:

\begin{figure}[H]
\centering
\includegraphics[width=0.9\textwidth]{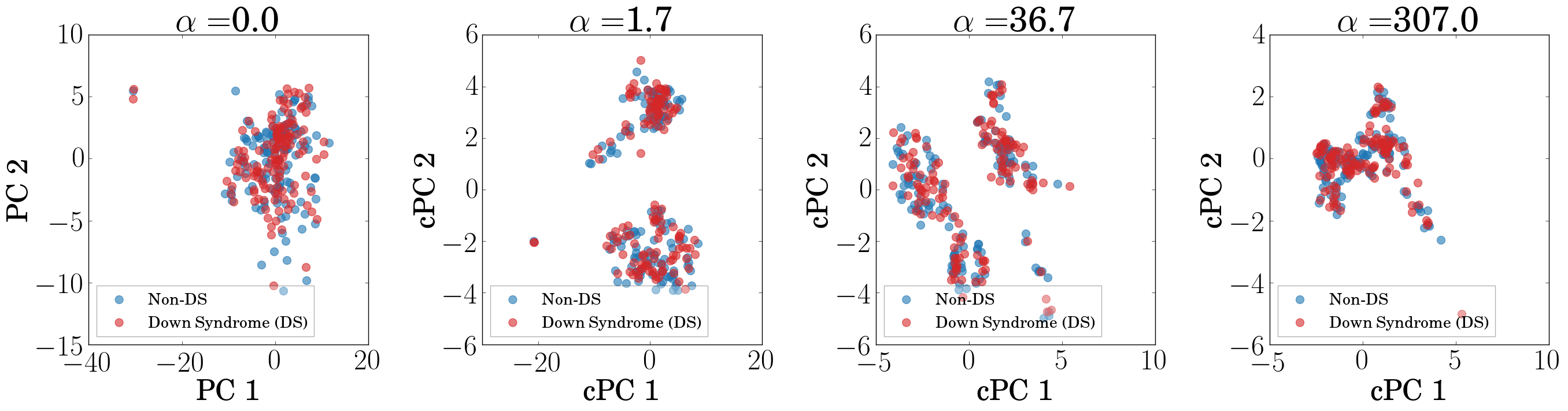}
\end{figure}

\subsection{Single Cell RNA-Seq Dataset}

For the dataset consisting of a mixture of 2 cell samples, here are the results for the three automatically selected values of $\alpha$, as well as PCA (corresponding to $\alpha=0$).

\begin{figure}[H]
\centering
\includegraphics[width=0.9\textwidth]{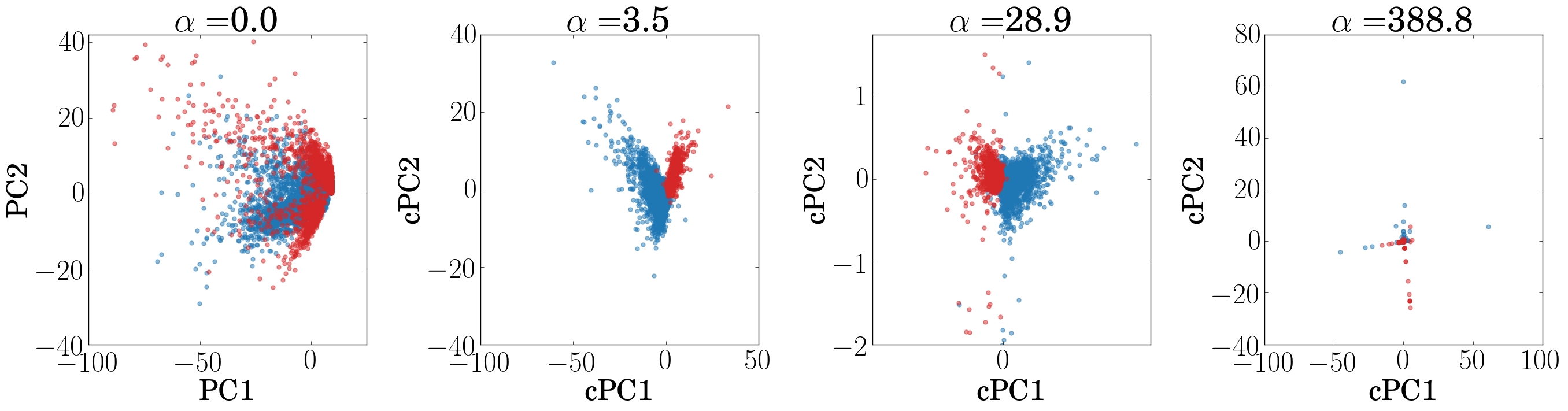}
\end{figure}

We can see more clearly the overlap between cell samples for $\alpha=28.9$ (the panel included in the main body) by plotting separately the distribution of each cell sample. See here:

\begin{figure}[H]
\centering
\subfigure{\includegraphics[width=0.23\textwidth]{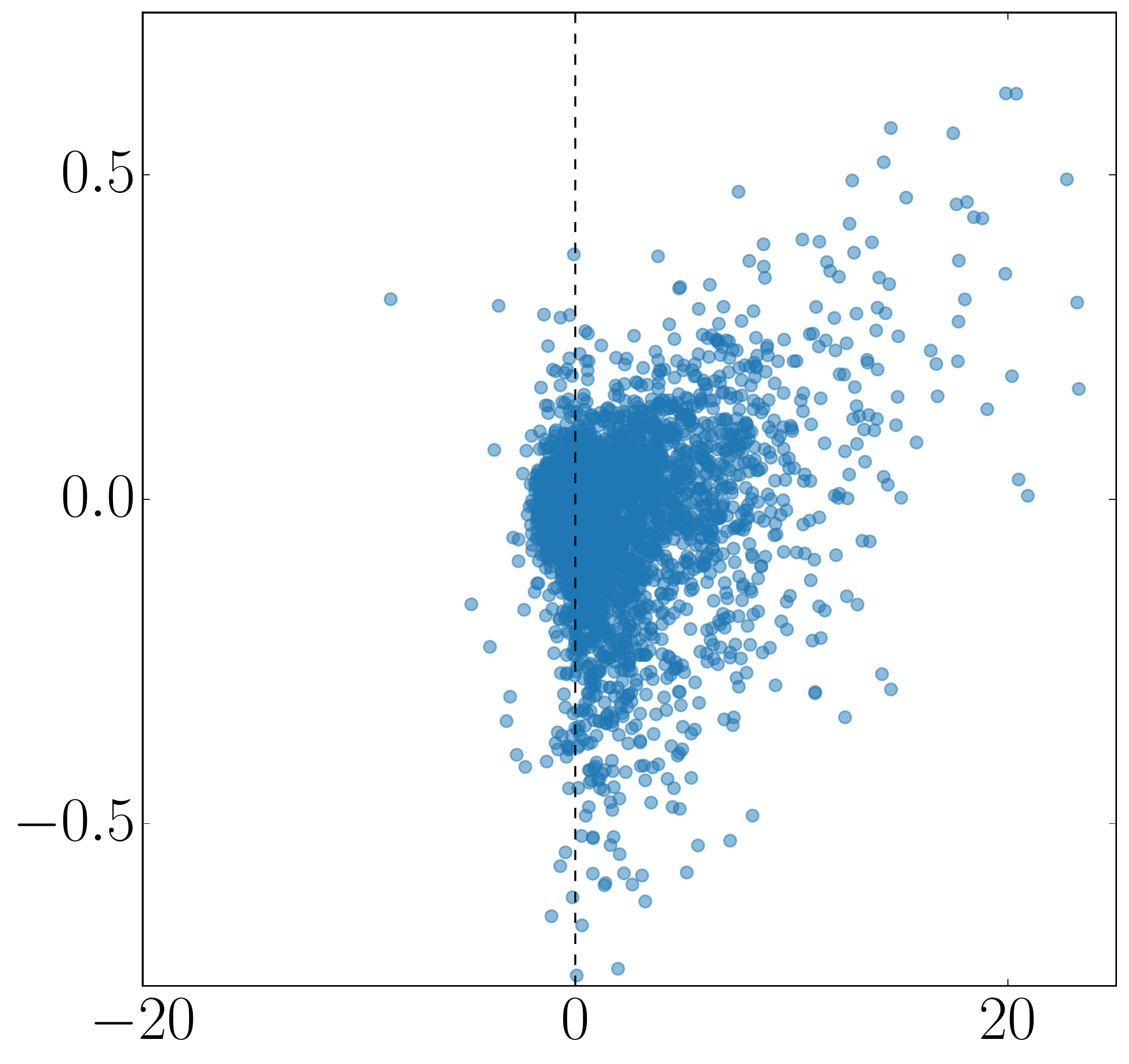}}
\subfigure{\includegraphics[width=0.23\textwidth]{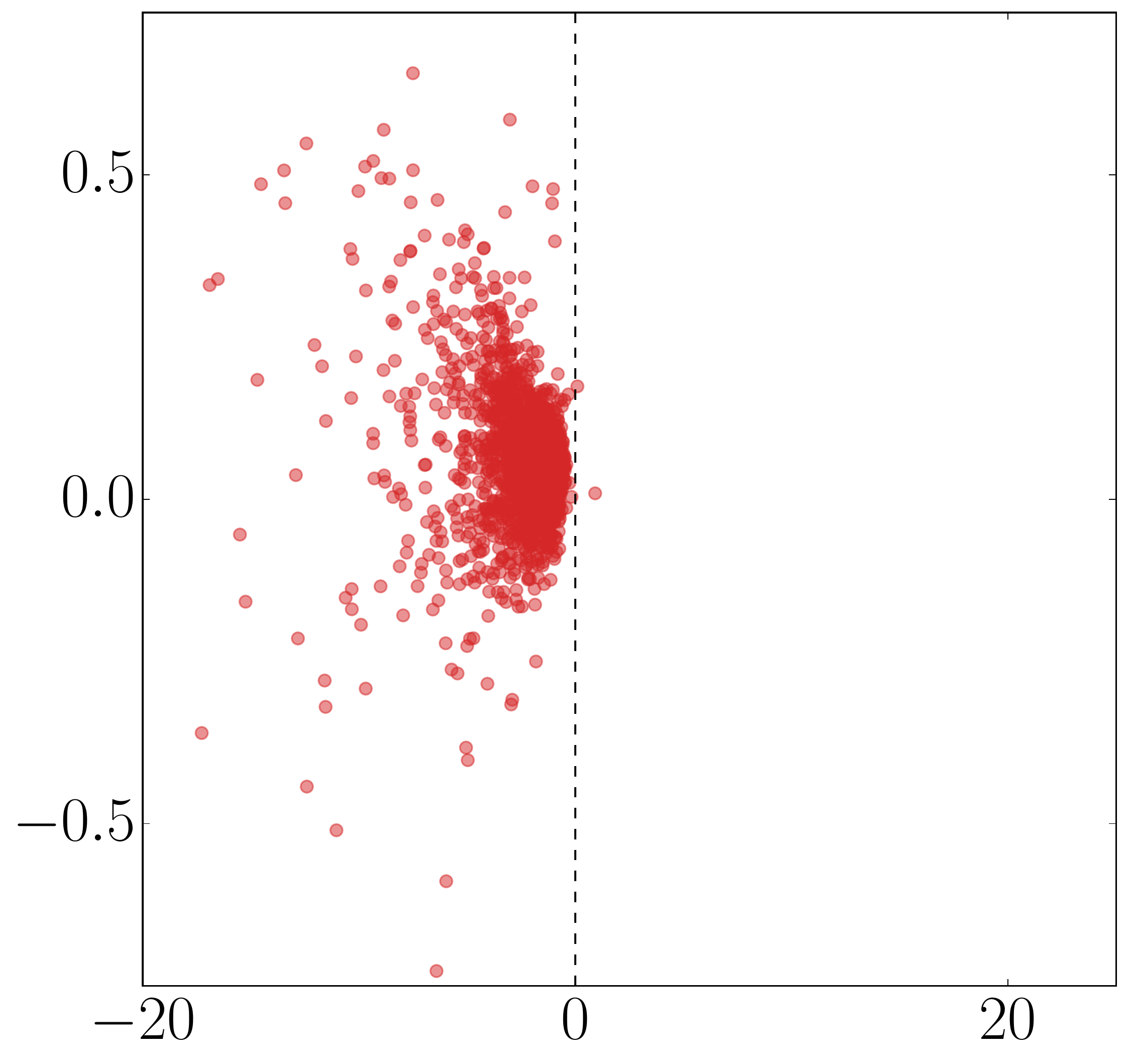}}
\end{figure}

For the dataset consisting of a mixture of 4 cell samples, here are the results for the three automatically selected values of $\alpha$, as well as PCA (corresponding to $\alpha=0$).

\begin{figure}[H]
\centering
\includegraphics[width=0.9\textwidth]{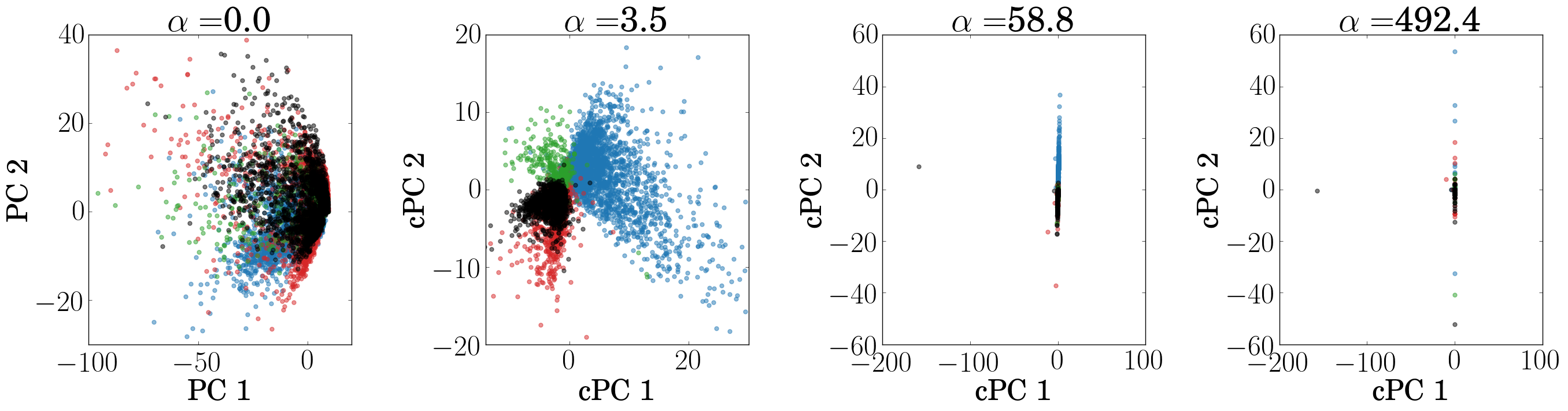}
\end{figure}

We can see more clearly the overlap between cell samples for $\alpha=3.5$ by plotting separately the distribution of each cell sample. See here:

\begin{figure}[H]
\centering
\subfigure{\includegraphics[width=0.23\textwidth]{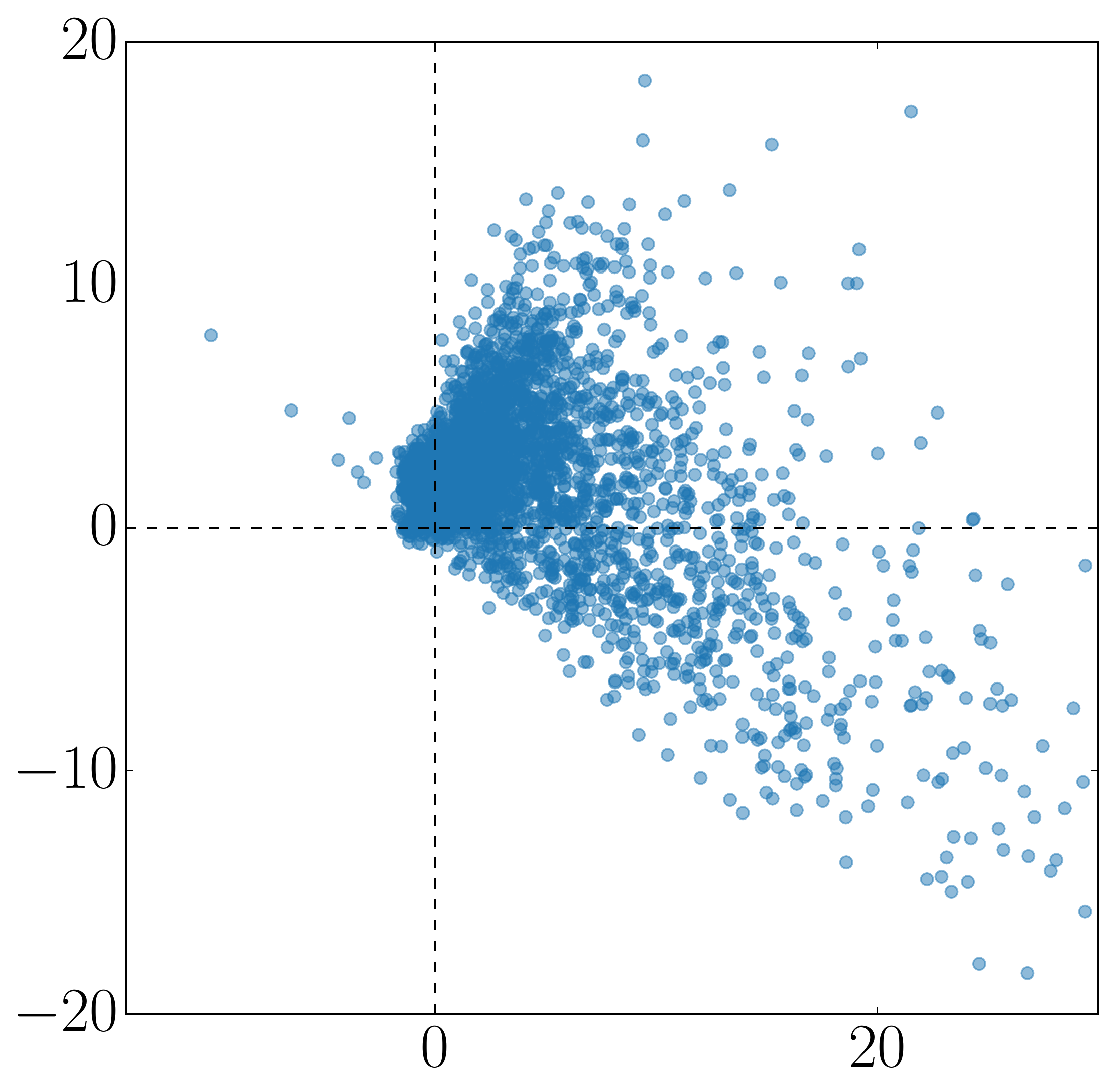}}
\subfigure{\includegraphics[width=0.23\textwidth]{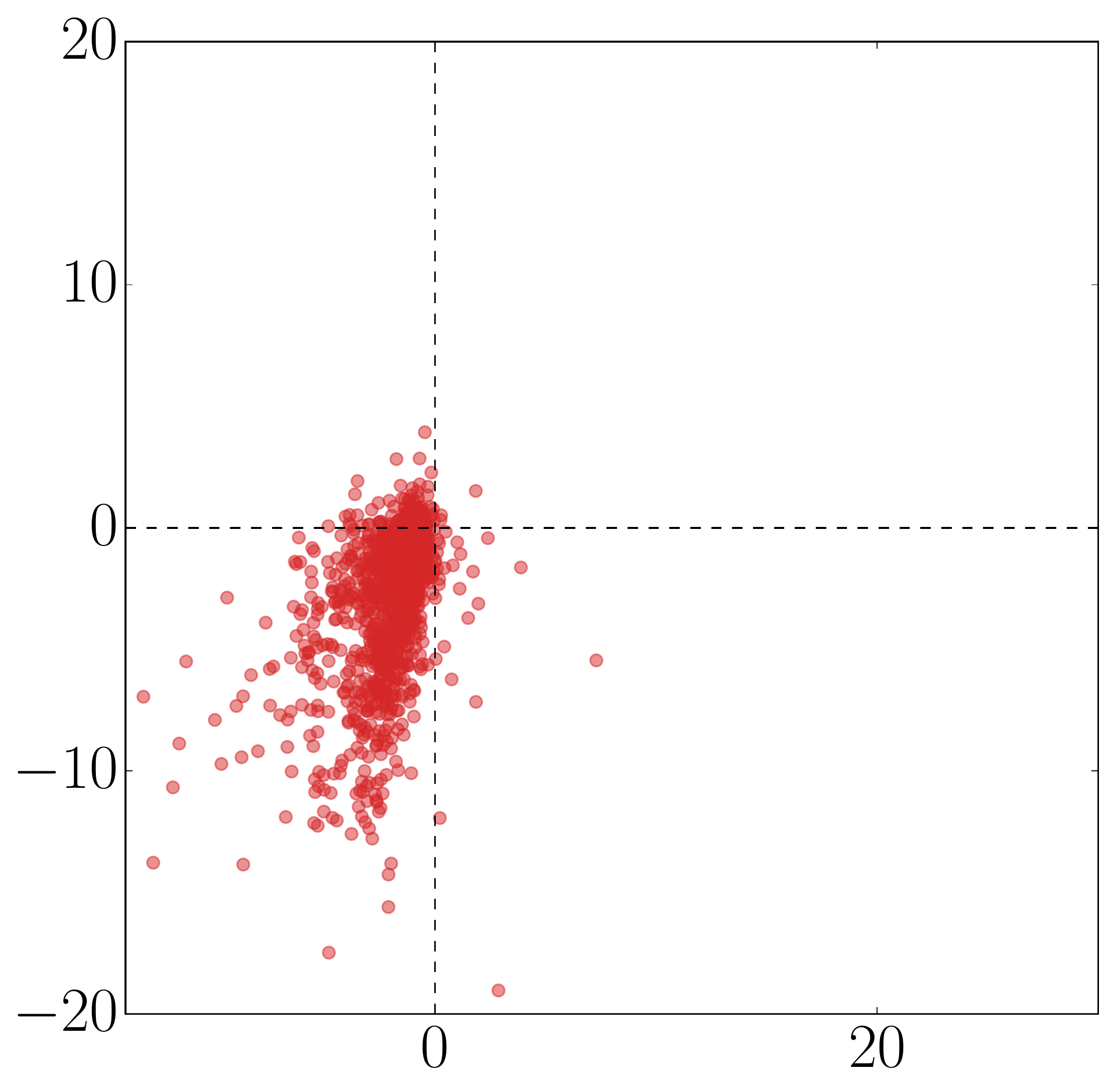}}
\subfigure{\includegraphics[width=0.23\textwidth]{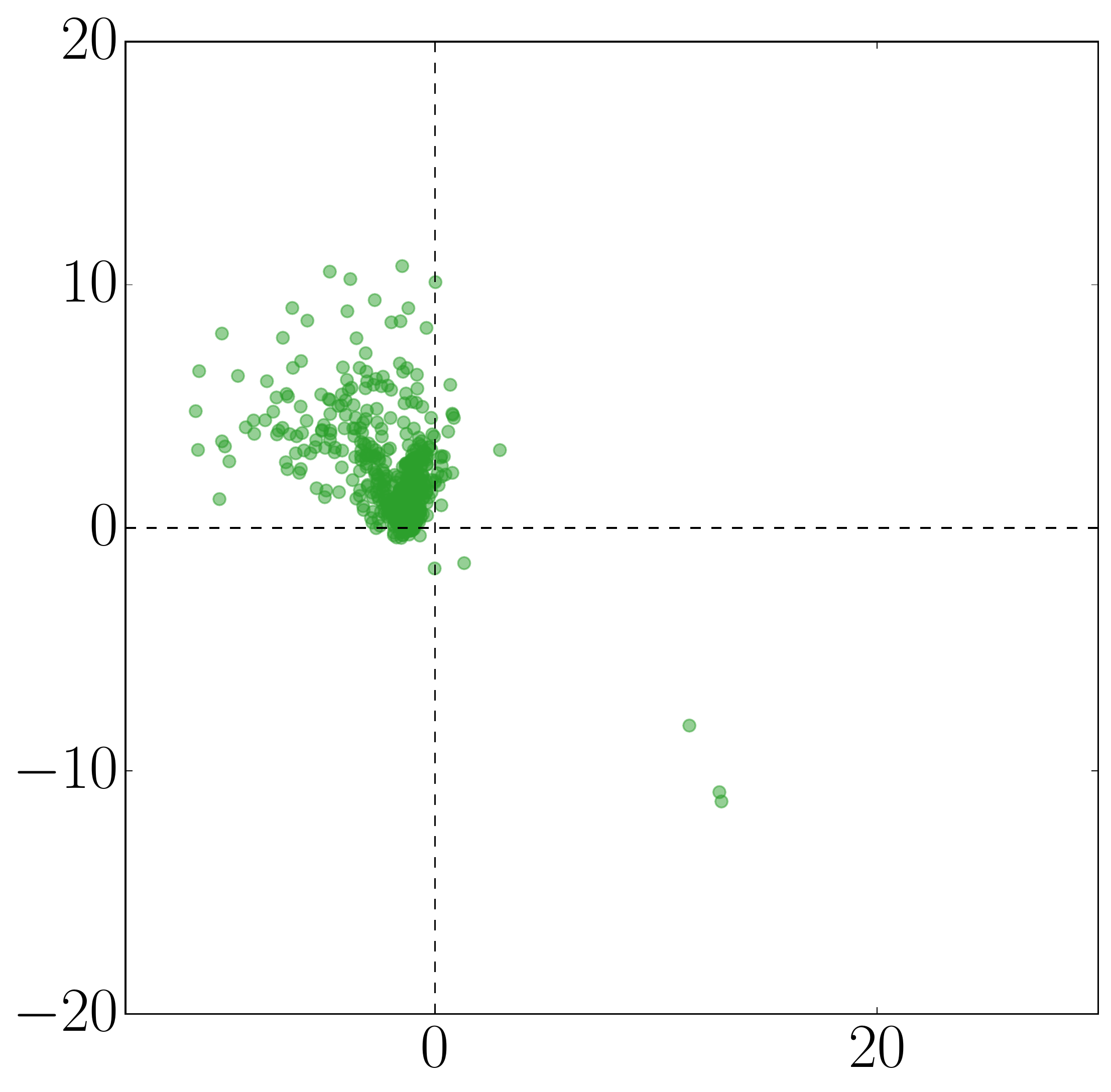}}
\subfigure{\includegraphics[width=0.23\textwidth]{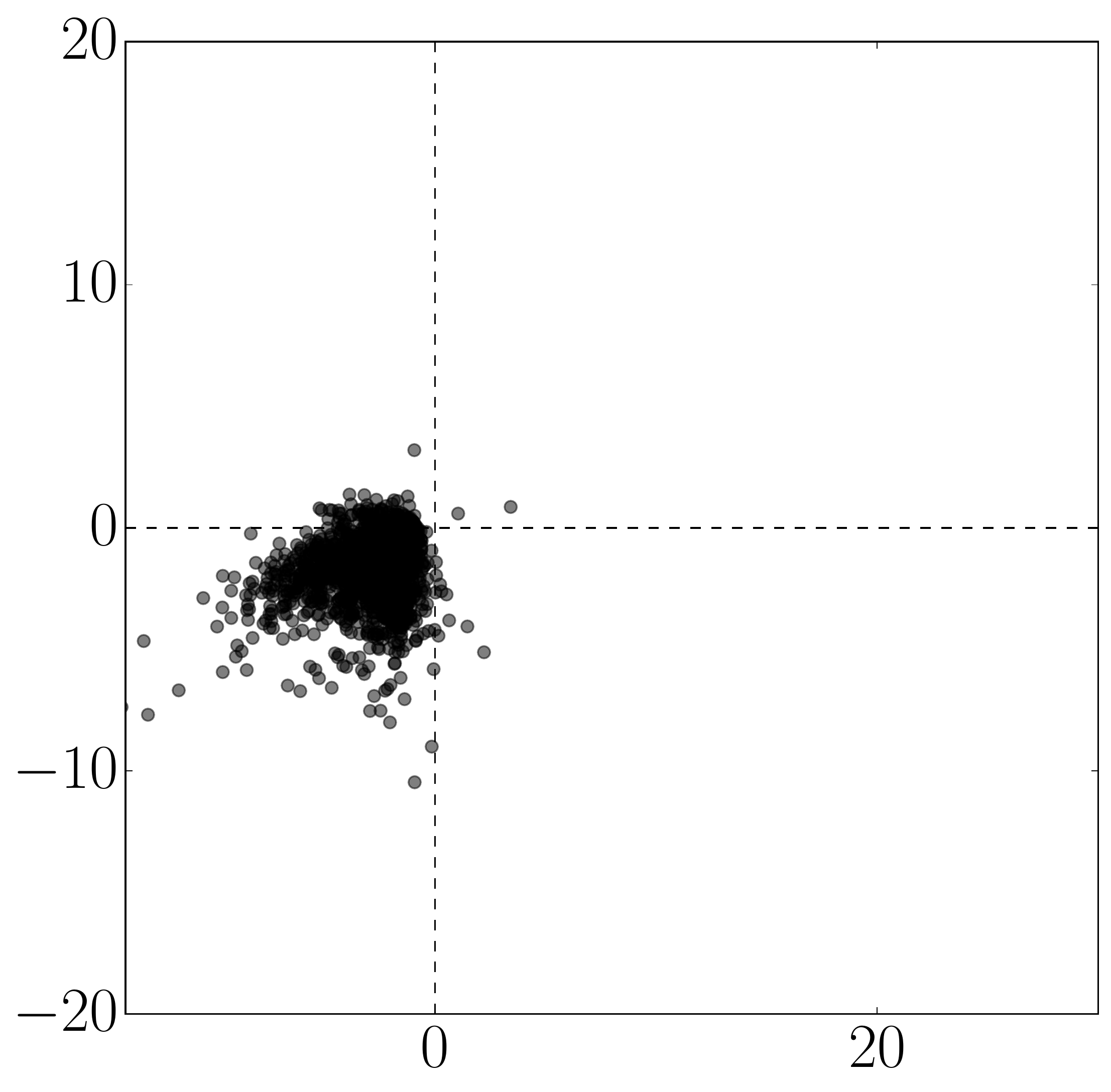}}
\end{figure}

\subsection{Mexican Ancestry Dataset}

Here are the results for the three automatically selected values of $\alpha$, as well as PCA (corresponding to $\alpha=0$).

\begin{figure}[H]
\centering
\includegraphics[width=0.9\textwidth]{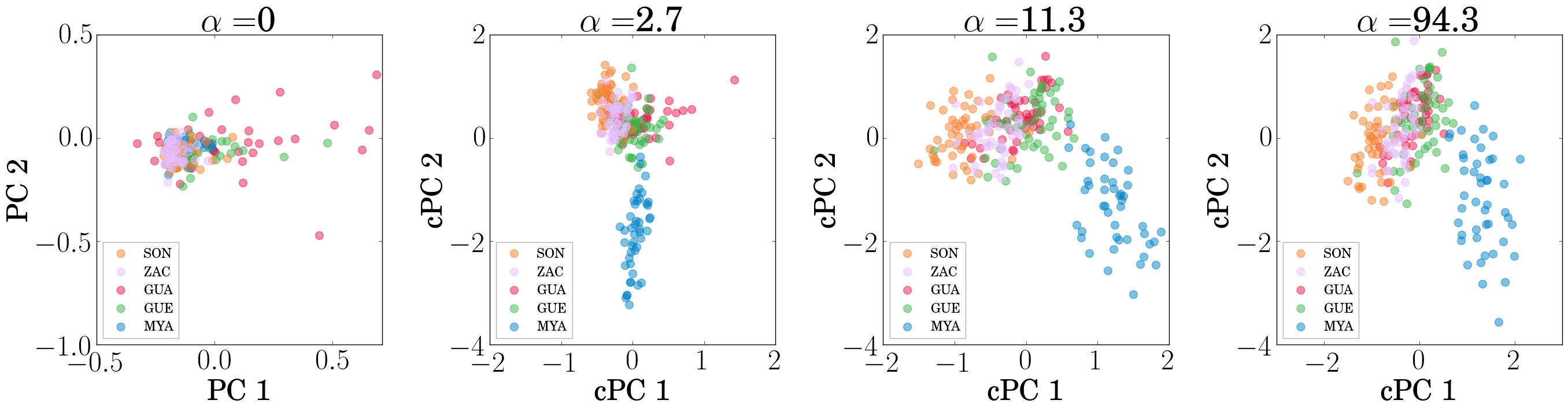}
\end{figure}

\subsection{MHealth IMU Measurements}

Here are the results for the three automatically selected values of $\alpha$, as well as PCA (corresponding to $\alpha=0$). For this example, the initial values of $\alpha$ were chosen to be 40 logarithmically spaced values from 0.1 to 1e6.

\begin{figure}[H]
\centering
\includegraphics[width=0.9\textwidth]{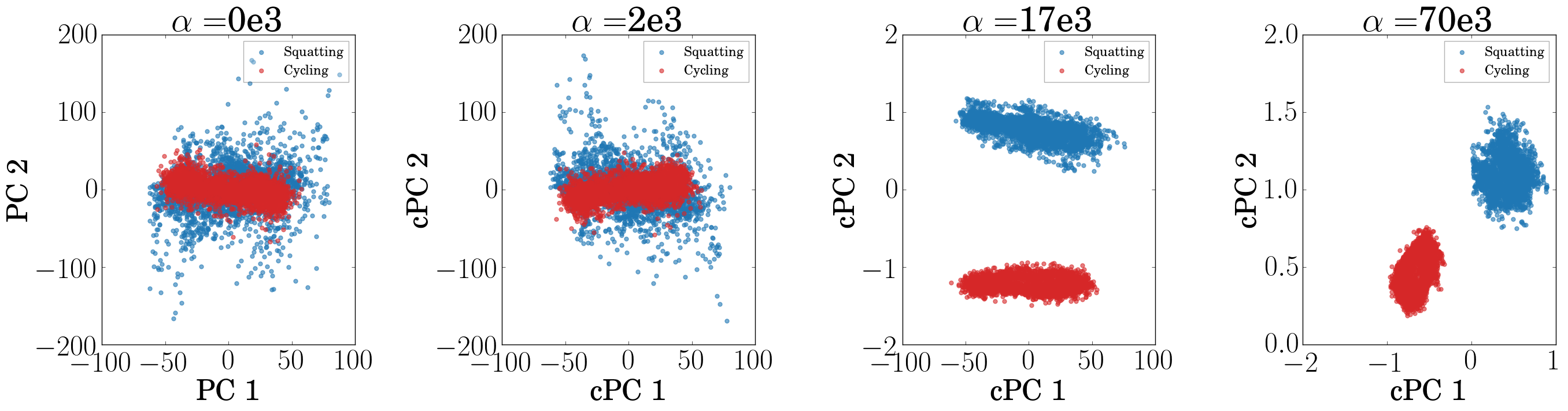}
\end{figure}

\section{Proof of Theorem \ref{thrm:opt_cPCA}}
\label{supp:proof}
\begin{proof}
Since $\mathcal{S}_\lambda$ and $\mathcal{S}_\lambda^{cPCA}$ are continuous images of $\mathcal{S}_\mathbf{v}$ and $\mathcal{S}^{cPCA}_\mathbf{v}$, it suffices to just show $\mathcal{S}^{cPCA}_\mathbf{v} = \mathcal{S}_\mathbf{v}$. 

We first show that $\mathcal{S}^{cPCA}_\mathbf{v} \subset \mathcal{S}_\mathbf{v}$.
Consider any $\mathbf{v} \in \mathcal{S}^{cPCA}_\mathbf{v}$ that is the solution of \eqref{eq:opt_obj} with alpha value $\alpha$. 
For any $\mathbf{u}\in \mathbb{R}_{unit}^d$, we have
\begin{align}
\mathbf{v}^T (C_X - \alpha C_Y) \mathbf{v} \geq \mathbf{u}^T (C_X - \alpha C_Y) \mathbf{u},
\end{align}
which can be rewritten as 
\begin{align}\label{eq:thrm_1_pf_2}
\lambda_X(\mathbf{v}) - \lambda_X (\mathbf{u})\geq \alpha ( \lambda_Y( \mathbf{v}) - \lambda_Y(\mathbf{u})). 
\end{align}
Then there are three possibilities of the relations between the variance pairs of $\mathbf{v}$ and $\mathbf{u}$: 
\begin{align*}
&1.~~ \lambda_X(\mathbf{v}) > \lambda_X (\mathbf{u}),\\
&2.~~ \lambda_X(\mathbf{v}) = \lambda_X (\mathbf{u}), ~~~~ \lambda_Y(\mathbf{v}) \leq \lambda_Y(\mathbf{u}),\\
&3.~~ \lambda_X(\mathbf{v}) < \lambda_X (\mathbf{u}), ~~~~\lambda_Y(\mathbf{v}) < \lambda_Y(\mathbf{u}).\\
\end{align*}
In all three cases, $\mathbf{u}$ can not be more contrastive than $\mathbf{v}$.
Thus $\mathbf{v} \in \mathcal{S}_\mathbf{v}$ and we can conclude that $\mathcal{S}^{cPCA}_\mathbf{v} \subset \mathcal{S}_\mathbf{v}$.

Next we show $\mathcal{S}_\mathbf{v} \subset \mathcal{S}^{cPCA}_\mathbf{v}$ by contradiction. 
Suppose there exists $\mathbf{v} \in \mathcal{S}_\mathbf{v}$ such that $\mathbf{v} \notin \mathcal{S}^{cPCA}_\mathbf{v}$. 
Since $\mathcal{S}^{cPCA}_\mathbf{v}$ and $\mathcal{S}^{cPCA}_\lambda$ are compact according to Lemma \ref{lm:compactness}, we can define
\begin{align} \label{eq:thrm_1_pf_4}
&\mathbf{v}_l = \argmax_{\mathbf{u}:~\mathbf{u}\in \mathcal{S}^{cPCA}_\mathbf{v}, \lambda_X(\mathbf{u})<\lambda_X(\mathbf{v})} \lambda_X(\mathbf{u}) \\
&\mathbf{v}_u = \argmin_{\mathbf{u}:~\mathbf{u}\in \mathcal{S}^{cPCA}_\mathbf{v}, \lambda_X(\mathbf{u})>\lambda_X(\mathbf{v})} \lambda_X(\mathbf{u}).
\end{align}
Furthermore, let $\alpha' = \frac{\lambda_X(\mathbf{v}_u)-\lambda_X(\mathbf{v}_l)}{\lambda_Y(\mathbf{v}_l)-\lambda_Y(\mathbf{v}_l)}$. 
We next show that both $\mathbf{v}_l$ and $\mathbf{v}_u$ are solutions to \eqref{eq:opt_obj} with alpha value $\alpha'$.

Since $\mathbf{v}_l$, $\mathbf{v}_u \in \mathcal{S}^{cPCA}_\mathbf{v}$, as shown previously, $\mathbf{v}_l$, $\mathbf{v}_u \in \mathcal{S}_\mathbf{v}$.
Then according to Lemma \ref{lm:cPCA_select_cond},
\begin{align*}
&\sup_{\substack{\mathbf{u}:~\mathbf{u}\in\mathcal{S}_\mathbf{v},\\\lambda_X(\mathbf{u})<\lambda_X(\mathbf{v}_l)}} \frac{\lambda_Y(\mathbf{v}_l)-\lambda_Y(\mathbf{u})}{\lambda_X(\mathbf{v}_l)-\lambda_X(\mathbf{u})} \leq \inf_{\substack{\mathbf{u}:~\mathbf{u}\in\mathcal{S}_\mathbf{v}, \\\lambda_X(\mathbf{u})>\lambda_X(\mathbf{v}_l)}} \frac{\lambda_Y(\mathbf{v}_l)-\lambda_Y(\mathbf{u})}{\lambda_X(\mathbf{v}_l)-\lambda_X(\mathbf{u})} \\&
\sup_{\substack{\mathbf{u}:~\mathbf{u}\in\mathcal{S}_\mathbf{v}, \\\lambda_X(\mathbf{u})<\lambda_X(\mathbf{v}_u)}} \frac{\lambda_Y(\mathbf{v}_u)-\lambda_Y(\mathbf{u})}{\lambda_X(\mathbf{v}_u)-\lambda_X(\mathbf{u})} \leq \inf_{\substack{\mathbf{u}:~\mathbf{u}\in\mathcal{S}_\mathbf{v}, \\\lambda_X(\mathbf{u})>\lambda)_X(\mathbf{v}_u)}} \frac{\lambda_Y(\mathbf{v}_u)-\lambda_Y(\mathbf{u})}{\lambda_X(\mathbf{v}_u)-\lambda_X(\mathbf{u})} 
\end{align*}
Then $\mathbf{v}_u$ is inside the inf term in the first equation above, and $\mathbf{v}_l$ is inside the sup term in the second equation above, both of which have the corresponding ratio $1/\alpha'$.
Then,
\begin{align}\label{eq:thrm_1_pf_3}
\sup_{\substack{\mathbf{u}:~\mathbf{u}\in\mathcal{S}_\mathbf{v},\\\lambda_X(\mathbf{u})<\lambda_X(\mathbf{v}_l)}} \frac{\lambda_Y(\mathbf{v}_l)-\lambda_Y(\mathbf{u})}{\lambda_X(\mathbf{v}_l)-\lambda_X(\mathbf{u})} \leq \frac{1}{\alpha'} \leq \inf_{\substack{\mathbf{u}:~\mathbf{u}\in\mathcal{S}_\mathbf{v},\\\lambda_X(\mathbf{u})>\lambda_X(\mathbf{v}_u)}} \frac{\lambda_Y(\mathbf{v}_u)-\lambda_Y(\mathbf{u})}{\lambda_X(\mathbf{v}_u)-\lambda_X(\mathbf{u})}
\end{align}

To show that $\mathbf{v}_l$ and $\mathbf{v}_u$ are solutions to \eqref{eq:opt_obj} with alpha value $\alpha'$, it suffices to show that $\forall \mathbf{u}\in\mathcal{S}_\mathbf{v}$, 
\begin{align*}
&\mathbf{v}_l^T (C_X -\alpha' C_Y)\mathbf{v}_l' \geq \mathbf{u}^T (C_X -\alpha' C_Y)\mathbf{u}'\\&
\mathbf{v}_u^T (C_X -\alpha' C_Y)\mathbf{v}_u' \geq \mathbf{u}^T (C_X -\alpha' C_Y)\mathbf{u}'.
\end{align*}

We consider three cases of $\mathbf{u}$.
For any $\mathbf{u}\in\mathcal{S}_\mathbf{v}$ such that $\lambda_X(\mathbf{u})<\lambda_X(\mathbf{v}_l)$, we also know $\lambda_Y(\mathbf{u})<\lambda_Y(\mathbf{v}_l)$.
According to \eqref{eq:thrm_1_pf_3},
\begin{align*}
\frac{\lambda_Y(\mathbf{v}_l)-\lambda_Y(\mathbf{u})}{\lambda_X(\mathbf{v}_l)-\lambda_X(\mathbf{u})} \leq \frac{1}{\alpha'},
\end{align*}
which is equivalent to 
\begin{align*}
\mathbf{v}_l^T (C_X -\alpha' C_Y)\mathbf{v}_l' \geq \mathbf{u}^T (C_X -\alpha' C_Y)\mathbf{u}'.
\end{align*}
Moreover, since $\frac{1}{\alpha'} = \frac{\lambda_Y(\mathbf{v}_u)-\lambda_Y(\mathbf{v}_l)}{\lambda_X(\mathbf{v}_u)-\lambda_X(\mathbf{v}_l)}$, we also have that 
\begin{align*}
\frac{\lambda_Y(\mathbf{v}_u)-\lambda_Y(\mathbf{u})}{\lambda_X(\mathbf{v}_u)-\lambda_X(\mathbf{u})} \leq \frac{1}{\alpha'},
\end{align*}
giving that 
\begin{align*}
\mathbf{v}_u^T (C_X -\alpha' C_Y)\mathbf{v}_u' \geq \mathbf{u}^T (C_X -\alpha' C_Y)\mathbf{u}'.
\end{align*}

Second, the same reasoning can be applied to the case of $\mathbf{u} \in \mathcal{S}_\mathbf{v}$ such that $\lambda_X(\mathbf{u})>\lambda_X(\mathbf{v}_u)$ 

Third, for any $\mathbf{u}\in \mathcal{S}_\mathbf{v}$ such that $\lambda_X(\mathbf{v}_l)<\lambda_X(\mathbf{u})<\lambda_X(\mathbf{v}_u)$, by definition \eqref{eq:thrm_1_pf_3}, $\mathbf{u} \notin \mathcal{S}^{cPCA}_\mathbf{v}$, and hence can not be the solution to \eqref{eq:opt_obj} with alpha value $\alpha'$. 
Therefore, $\mathbf{v}_l$ and $\mathbf{v}_u$ are solutions to \eqref{eq:opt_obj} with alpha value $\alpha'$.

Then both  $\mathbf{v}_l$ and $\mathbf{v}_u$ are eigenvectors of $C_X-\alpha' C_Y$ with the same eigenvalue. 
Then there exists $\mathbf{v}'$ in this eigenspace such that $\lambda_X(\mathbf{v}_l) < \lambda_X(\mathbf{v}') <\lambda_X(\mathbf{v}_u)$. 
We note that it is also the solution to \eqref{eq:opt_obj} with alpha value $\alpha'$ and is hence in $\mathcal{S}^{cPCA}_\mathbf{v}$. 
This contradicts the definition \eqref{eq:thrm_1_pf_4}, which completes the proof. 
\end{proof}

\section{Ancillary Lemmas}
\begin{lemma} \label{lm:compactness}
$\mathcal{S}^{cPCA}_\mathbf{v}$ and $\mathcal{S}^{cPCA}_\lambda$ are compact.
\end{lemma}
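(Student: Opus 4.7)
The plan is to derive both compactness claims from the fact that $\mathbb{R}^d_{\text{unit}}$ is compact (closed and bounded in $\mathbb{R}^d$). Since $\mathcal{S}^{cPCA}_\mathbf{v}\subset \mathbb{R}^d_{\text{unit}}$, it suffices to prove $\mathcal{S}^{cPCA}_\mathbf{v}$ is closed, and then $\mathcal{S}^{cPCA}_\lambda$ is automatically compact as the continuous image of the compact $\mathcal{S}^{cPCA}_\mathbf{v}$ under the polynomial map $\mathbf{v}\mapsto (\lambda_X(\mathbf{v}),\lambda_Y(\mathbf{v}))$.

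To show closedness, I would fix $f_\alpha(\mathbf{v}) \eqdef \mathbf{v}^T(C_X-\alpha C_Y)\mathbf{v}$, which is jointly continuous in $(\alpha,\mathbf{v})$, and take any convergent sequence $\mathbf{v}_n\to\mathbf{v}$ with $\mathbf{v}_n\in\mathcal{S}^{cPCA}_\mathbf{v}$, so that each $\mathbf{v}_n$ maximizes $f_{\alpha_n}$ for some $\alpha_n>0$. Passing to a subsequence, $\alpha_n\to\alpha^*\in[0,\infty]$. The easy case is $\alpha^*\in(0,\infty)$: taking $n\to\infty$ in the inequality $f_{\alpha_n}(\mathbf{v}_n)\ge f_{\alpha_n}(\mathbf{u})$ for arbitrary $\mathbf{u}\in\mathbb{R}^d_{\text{unit}}$ and using joint continuity yields $f_{\alpha^*}(\mathbf{v})\ge f_{\alpha^*}(\mathbf{u})$, so $\mathbf{v}\in \mathcal{S}^{cPCA}_\mathbf{v}$.

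The main obstacle is the two boundary cases $\alpha^*=0$ and $\alpha^*=\infty$, where the limiting $\alpha^*$ is not admissible in the definition. I would handle these by a matrix-perturbation argument on the pencil $C_X-\alpha C_Y$. For $\alpha^*=0$, standard analytic perturbation of symmetric matrices implies that on a sufficiently small interval $(0,\alpha_0)$ the top eigenvector of $C_X-\alpha C_Y$ is locally constant: it is a fixed eigenvector of $C_X$ in its top eigenspace, selected by the first-order correction $-\mathbf{v}^T C_Y\mathbf{v}$ (i.e., the one minimizing $\lambda_Y$ within the top $C_X$-eigenspace). Hence for all $n$ large enough, $\mathbf{v}_n$ is this same vector and equals $\mathbf{v}$, witnessed by any fixed $\alpha\in(0,\alpha_0)$, so $\mathbf{v}\in\mathcal{S}^{cPCA}_\mathbf{v}$. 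The case $\alpha^*=\infty$ is symmetric: reparameterize $\beta=1/\alpha$ so that maximizing $f_\alpha$ is equivalent (up to positive scaling) to maximizing $\beta C_X - C_Y$, and apply the same perturbation argument at $\beta=0$.

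Putting the cases together gives $\mathbf{v}\in\mathcal{S}^{cPCA}_\mathbf{v}$ in all scenarios, so $\mathcal{S}^{cPCA}_\mathbf{v}$ is closed, hence compact, and $\mathcal{S}^{cPCA}_\lambda$ follows. The hard part is genuinely the boundary analysis; everything else is just compactness-of-the-sphere plus joint continuity. If degenerate top eigenvalues cause $\argmax$ to be set-valued (rather than a single vector as the notation suggests), the same argument works with $\argmax$ interpreted as the full set of maximizers, since the proof only uses the defining inequality $f_{\alpha_n}(\mathbf{v}_n)\ge f_{\alpha_n}(\mathbf{u})$.
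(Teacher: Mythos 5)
Your reduction to closedness of $\mathcal{S}^{cPCA}_\mathbf{v}$ inside the compact unit sphere, and your treatment of the case $\alpha^*\in(0,\infty)$ by passing to the limit in $f_{\alpha_n}(\mathbf{v}_n)\ge f_{\alpha_n}(\mathbf{u})$, is exactly the paper's argument (the paper phrases it as $\lim_n \max_{\mathbf{u}}\mathbf{u}^T(C_X-\alpha_n C_Y)\mathbf{u}=\max_{\mathbf{u}}\mathbf{u}^T(C_X-\alpha C_Y)\mathbf{u}$). You are also right that the genuine difficulty is the boundary cases $\alpha^*=0$ and $\alpha^*=\infty$: the paper simply asserts that the $\alpha_n$ converge to some admissible $\alpha$ and never confronts them, so you have correctly located the soft spot in the published proof.

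However, your resolution of the boundary case does not work. The claim that the top eigenvector of $C_X-\alpha C_Y$ is \emph{locally constant} on some interval $(0,\alpha_0)$ is false in general: the first-order correction to the eigenvector is proportional to the component of $C_Y\mathbf{v}(0)$ orthogonal to the top eigenspace of $C_X$, which is generically nonzero, so the eigenvector genuinely moves with $\alpha$. Concretely, take $d=2$, $C_X=\mathrm{diag}(2,1)$ and $C_Y=\bigl(\begin{smallmatrix}1&\epsilon\\ \epsilon&1\end{smallmatrix}\bigr)$ with $0<\epsilon<1$. For every $\alpha>0$ the matrix $C_X-\alpha C_Y$ has off-diagonal entry $-\alpha\epsilon\neq 0$, so its unique (up to sign) top eigenvector $\mathbf{v}_\alpha$ is never $e_1$; yet $\mathbf{v}_\alpha\to e_1$ as $\alpha\downarrow 0$, and $e_1\notin\mathcal{S}^{cPCA}_\mathbf{v}$ because it maximizes the objective for no positive $\alpha$. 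Thus $\mathcal{S}^{cPCA}_\mathbf{v}$, as literally defined with strict $\alpha>0$, is not closed, and no perturbation argument can close the gap: the statement only becomes true if the admissible range of $\alpha$ is taken to be $[0,\infty]$ (equivalently, if the set is replaced by its closure). The same defect is present, silently, in the paper's own proof, which assumes the limiting $\alpha$ is admissible; your attempt is more honest about where the problem lies, but the fix you propose rests on a false perturbation claim rather than repairing the definition.
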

\begin{proof} (Proof of Lemma \ref{lm:compactness})
Consider any sequence of directions $\mathbf{v}_n$ in $\mathcal{S}^{cPCA}_\mathbf{v}$ that converges to $\mathbf{v}$. 
There exists a corresponding sequence of alpha's $\alpha_n$ with limit $\alpha$, where $\mathbf{v}_n$ is the solution of \eqref{eq:opt_obj} with $\alpha_n$.
Then 
\begin{align*}
& \mathbf{v}^T(C_X - \alpha C_Y) \mathbf{v} = \lim_{n\to\infty} \mathbf{v}_n^T(C_X - \alpha_n C_Y) \mathbf{v}_n \\&
= \lim_{n\to\infty} \max_{\mathbf{u} \in \mathbb{R}_{unit}^d}\mathbf{u}^T(C_X - \alpha_n C_Y) \mathbf{u} \\&
= \max_{\mathbf{u} \in \mathbb{R}_{unit}^d}\mathbf{u}^T(C_X - \alpha C_Y) \mathbf{u},
\end{align*}
giving that $\mathbf{v} \in \mathcal{S}^{cPCA}_\mathbf{v}$.
Hence $\mathcal{S}^{cPCA}_\mathbf{v}$ is compact.
Finally, being the continuous image of a compact set, $\mathcal{S}^{cPCA}_\lambda$ is also compact.
\end{proof}

\begin{lemma} \label{lm:cPCA_select_cond}
If $\mathbf{v} \in \mathcal{S}_\mathbf{v}$ and $\mathbf{v}$ is the solution to \eqref{eq:opt_obj} with value $\alpha$, then
\begin{align} \label{eq:lm_cPCA_select_cond}
\sup_{\substack{\mathbf{u}:~\mathbf{u}\in\mathcal{S}_\mathbf{v}, \\ \lambda_X(\mathbf{u})<\lambda_X(\mathbf{v})}} \frac{\lambda_Y(\mathbf{v})-\lambda_Y(\mathbf{u})}{\lambda_X(\mathbf{v})-\lambda_X(\mathbf{u})}\leq \frac{1}{\alpha} \leq \inf_{\substack{\mathbf{u}:~\mathbf{u}\in\mathcal{S}_\mathbf{v},\\\lambda_X(\mathbf{u})>\lambda_X(\mathbf{v})}} \frac{\lambda_Y(\mathbf{v})-\lambda_Y(\mathbf{u})}{\lambda_X(\mathbf{v})-\lambda_X(\mathbf{u})}. 
\end{align}
\end{lemma}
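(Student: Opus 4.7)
The plan is to exploit two ingredients simultaneously: (i) the first-order optimality inequality from the cPCA objective with parameter $\alpha$, and (ii) the fact that both $\mathbf{v}$ and any competing $\mathbf{u}$ lie in $\mathcal{S}_\mathbf{v}$, which forces the differences $\lambda_X(\mathbf{v})-\lambda_X(\mathbf{u})$ and $\lambda_Y(\mathbf{v})-\lambda_Y(\mathbf{u})$ to have the same sign. Combining these two facts immediately pins the ratio on the correct side of $1/\alpha$.

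Concretely, since $\mathbf{v}$ maximizes $\mathbf{u}^T(C_X-\alpha C_Y)\mathbf{u}$ over $\mathbb{R}^d_{\text{unit}}$, for every unit vector $\mathbf{u}$ we have
\begin{equation*}
\lambda_X(\mathbf{v})-\lambda_X(\mathbf{u}) \;\geq\; \alpha\bigl(\lambda_Y(\mathbf{v})-\lambda_Y(\mathbf{u})\bigr). \tag{$\star$}
\end{equation*}
First I would fix an arbitrary $\mathbf{u}\in\mathcal{S}_\mathbf{v}$ with $\lambda_X(\mathbf{u})<\lambda_X(\mathbf{v})$. Then $\lambda_X(\mathbf{v})>\lambda_X(\mathbf{u})$, and if $\lambda_Y(\mathbf{v})\leq \lambda_Y(\mathbf{u})$ held, clause~(2) of Definition~\ref{def:contrastiveness} would give $\mathbf{v}\succ\mathbf{u}$, contradicting $\mathbf{u}\in\mathcal{S}_\mathbf{v}$. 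Hence $\lambda_Y(\mathbf{v})-\lambda_Y(\mathbf{u})>0$, and dividing $(\star)$ by the positive quantity $\lambda_X(\mathbf{v})-\lambda_X(\mathbf{u})$ yields
\begin{equation*}
\frac{\lambda_Y(\mathbf{v})-\lambda_Y(\mathbf{u})}{\lambda_X(\mathbf{v})-\lambda_X(\mathbf{u})} \;\leq\; \frac{1}{\alpha}.
\end{equation*}
Taking the supremum over all admissible $\mathbf{u}$ establishes the left-hand inequality of \eqref{eq:lm_cPCA_select_cond}.

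For the right-hand inequality, I would take $\mathbf{u}\in\mathcal{S}_\mathbf{v}$ with $\lambda_X(\mathbf{u})>\lambda_X(\mathbf{v})$. A symmetric argument (again using $\mathbf{u}\in\mathcal{S}_\mathbf{v}$ together with $\mathbf{v}\in\mathcal{S}_\mathbf{v}$, but this time swapping roles to rule out $\mathbf{u}\succ\mathbf{v}$) forces $\lambda_Y(\mathbf{u})>\lambda_Y(\mathbf{v})$, so both sides of $(\star)$ are now negative; dividing by $\lambda_X(\mathbf{v})-\lambda_X(\mathbf{u})<0$ reverses the inequality to give
\begin{equation*}
\frac{\lambda_Y(\mathbf{v})-\lambda_Y(\mathbf{u})}{\lambda_X(\mathbf{v})-\lambda_X(\mathbf{u})} \;\geq\; \frac{1}{\alpha},
\end{equation*}
and taking the infimum completes the bound.

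There is no real obstacle here; the only subtlety is making sure the sign deductions on $\lambda_Y(\mathbf{v})-\lambda_Y(\mathbf{u})$ are justified before dividing, since otherwise the direction of the inequality could flip incorrectly. I would also note up front that if either supremum/infimum is over an empty set, the corresponding inequality is vacuous (taking the conventions $\sup\emptyset=-\infty$ and $\inf\emptyset=+\infty$), so the statement is automatically true in degenerate cases.
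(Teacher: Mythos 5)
Your proposal is correct and follows essentially the same route as the paper's proof: apply the optimality inequality $\lambda_X(\mathbf{v})-\lambda_X(\mathbf{u})\geq \alpha(\lambda_Y(\mathbf{v})-\lambda_Y(\mathbf{u}))$, use membership of both directions in $\mathcal{S}_\mathbf{v}$ to control the sign of the denominator, and divide. The only difference is cosmetic: you derive the sign fact explicitly from Definition~\ref{def:contrastiveness} by contradiction, whereas the paper simply asserts that on $\mathcal{S}_\mathbf{v}$ the orderings of $\lambda_X$ and $\lambda_Y$ agree.
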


\begin{proof} (Proof of Lemma \ref{lm:cPCA_select_cond})
For any $\mathbf{u} \in \mathcal{S}_\mathbf{v}$, we have 
\begin{align*}
\mathbf{v}^T (C_X - \alpha C_Y) \mathbf{v} \geq \mathbf{u}^T (C_X - \alpha C_Y) \mathbf{u},
\end{align*}
which is equivalent to 
\begin{align}\label{eq:pf_cPCA_select_cond_1}
\lambda_X(\mathbf{v}) - \lambda_X(\mathbf{u})\geq \alpha (\lambda_Y(\mathbf{v}) - \lambda_Y(\mathbf{u})).
\end{align}
Since $\mathbf{v}, \mathbf{u} \in \mathcal{S}_\mathbf{v}$, $\lambda_X(\mathbf{v})>\lambda_X(\mathbf{u})$ implies $\lambda_Y(\mathbf{v})>\lambda_Y(\mathbf{u})$ and vice versa. 
As \eqref{eq:pf_cPCA_select_cond_1} holds for all $\mathbf{u}\in\mathcal{S}_\mathbf{v}$, this gives \eqref{eq:lm_cPCA_select_cond}.

\end{proof}

\section{Derivation for Kernel cPCA \label{suppsec: kernelcPCA}}
Assume for the moment that the mapped data, $\Phi(X_1),\cdots,\Phi(X_n)$, $\Phi(Y_1),\cdots,\Phi(Y_m)$, is centered i.e., $\sum_{i=1}^n \Phi(X_i)=\sum_{j=1}^m \Phi(Y_j)=0$. 
The non-centered case will be considered in the end. 
The covariance matrices for the target data and background data are
\begin{align*}
\bar{A} = \frac{1}{n} \sum_{i=1}^n \Phi(X_i)\Phi(X_i)^T,~~~~\bar{B} = \frac{1}{m} \sum_{j=1}^m \Phi(Y_j)\Phi(Y_j)^T. 
\end{align*}
The contrastive components should satisfy
\begin{align}\label{eq:supp_kernelcPCA_1}
\lambda\mathbf{v} = (\bar{A}-\alpha \bar{B}) \mathbf{v},
\end{align}
where the $k$-th eigenvector corresponds to the $k$-th contrastive principal component. 
Let $N=n+m$ and define the data $Z_1,\cdots,Z_N$ as  
\begin{align*}
Z_l = \left\{ \begin{array}{cc}
X_l, & if~1\leq l \leq n \\
Y_{l-n} & otherwise
\end{array}\right..
\end{align*}

As all contrastive principal components $\mathbf{v}$ lie in the span of $\Phi(Z_1,\cdots,Z_N)$, there exists $\mathbf{a}=(a_1,\cdots,a_l) \in \mathbb{R}^N$ such that $\mathbf{v}$ can be written as  
\begin{align}\label{eq:supp_kernelcPCA_2}
\mathbf{v} = \sum_{k=1}^N a_k \Phi(Z_k).
\end{align}
Also, instead of \eqref{eq:supp_kernelcPCA_1}, we can consider the equivalent system
\begin{align}\label{eq:supp_kernelcPCA_3}
\lambda \Phi(Z_l) \cdot \mathbf{v} = \Phi(Z_l)\cdot(\bar{A}-\alpha \bar{B}) \mathbf{v},~~~~l=1,\cdots,N.
\end{align}

Substituting \eqref{eq:supp_kernelcPCA_2} into \eqref{eq:supp_kernelcPCA_3}, we have
\begin{align}\label{eq:supp_kernelcPCA_4}
\lambda \Phi(Z_l) \cdot \sum_{k=1}^N a_k \Phi(Z_k) = \Phi(Z_l)\cdot(\bar{A}-\alpha \bar{B}) \sum_{k=1}^N a_k \Phi(Z_k),~~~~for~l=1,\cdots,N.
\end{align}

Define the $N\times N$ kernel matrix $K$ by 
\begin{align}\label{eq:supp_kernelcPCA_5}
K_{ij} = \Phi(Z_i) \cdot \Phi(Z_j),
\end{align}
and further define the $N\times N$ matrices $K^A, K^B$ by 
\begin{align*}
&K^A_{ij} = \left\{ \begin{array}{cc}
K_{ij}, & if~1\leq i \leq n \\
0 & otherwise
\end{array}\right.,\\
&K^B_{ij} = \left\{ \begin{array}{cc}
0, & if~1\leq i \leq n \\
K_{ij} & otherwise
\end{array}\right.
\end{align*}

Stacking all $N$ equations together, the LHS of \eqref{eq:supp_kernelcPCA_4} is equal to $\lambda K \mathbf{a}$. 
It is also not hard to verify the RLS is equal to $K (\frac{1}{n} K^A - \frac{\alpha}{m} K^B) \mathbf{a}$.
The we can rewrite the linear system \eqref{eq:supp_kernelcPCA_4} as 
\begin{align}\label{eq:supp_kernelcPCA_6}
\lambda K \mathbf{a} = K (\frac{1}{n} K^A - \frac{\alpha}{m} K^B) \mathbf{a}.
\end{align}
To find the solution of \eqref{eq:supp_kernelcPCA_6}, we solve the eigenvalue problem 
\begin{align}\label{eq:supp_kernelcPCA_7}
\lambda \mathbf{a} = (\frac{1}{n} K^A - \frac{\alpha}{m} K^B) \mathbf{a}
\end{align}
for non-zero eigenvalues. Clearly all solutions of \eqref{eq:supp_kernelcPCA_7} do satisfy \eqref{eq:supp_kernelcPCA_6}. Also, the solutions of \eqref{eq:supp_kernelcPCA_7} and those of \eqref{eq:supp_kernelcPCA_6} differ up to a term lying in the null space of $K$. Since the projection of the data on $\mathbf{v}$ is 
\begin{align}\label{eq:supp_kernelcPCA_8}
[\Phi(Z_1)\cdot \mathbf{v},\cdots,\Phi(Z_N)\cdot \mathbf{v}]^T = K \mathbf{a},
\end{align} 
any term lying in the null space of $K$ does not affect the projected result. Hence to solve \eqref{eq:supp_kernelcPCA_6}, we can equivalently solve \eqref{eq:supp_kernelcPCA_7}. 
Finally, to impose the constraint that $\Vert \mathbf{v} \Vert=1$, we equivalently require 
\begin{align}
\mathbf{a}^T K \mathbf{a} =1. 
\end{align}

Finally, as mentioned before, the projection of the data onto the $q$-th contrastive principal component can be written as $K \mathbf{a}^{(q)}$ as \eqref{eq:supp_kernelcPCA_8}.

The centering assumption can be dropped as follows. 
Now assume that $\Phi(X_i)$ and $\Phi(Y_j)$ has some general mean $\mu_X=\frac{1}{n}\sum_{i=1}^n \Phi(X_i)$ and $\mu_Y=\frac{1}{m}\sum_{j=1}^m \Phi(Y_j)$. 
Let the non-centered kernel matrix $K$ be the same as \eqref{eq:supp_kernelcPCA_5}, and let it be partitioned into 
\begin{align}
K = \left[ \begin{array}{cc}
K_X& K_{XY}\\
K_{YX} & K_Y
\end{array}\right],
\end{align}
according to if the elements $Z_i$ and $Z_j$ belong to the target or the background data. 
Then the kernel matrix $K$ can centered as 
\begin{align}
K_{center} = \left[ \begin{array}{cc}
K_{X,center}& K_{XY,center} \\
K_{YX,center} & K_{Y,center}
\end{array}\right],
\end{align}
where 
\begin{align*}
& K_{X,center} = K_X - \mathbf{1}_n K_X - K_X \mathbf{1}_n + \mathbf{1}_n K_X \mathbf{1}_n \\
& K_{XY,center} = K_{YX} - \mathbf{1}_m K_{YX} - K_{YX} \mathbf{1}_n + \mathbf{1}_m K_{YX} \mathbf{1}_n \\
& K_{YX,center} = K_{YX} - \mathbf{1}_m K_{YX} - K_{YX} \mathbf{1}_n + \mathbf{1}_m K_{YX} \mathbf{1}_n \\
& K_{Y,center} = K_Y - \mathbf{1}_m K_Y - K_Y \mathbf{1}_m + \mathbf{1}_m K_X \mathbf{1}_m,
\end{align*}
and $\mathbf{1}_n$ and $\mathbf{1}_m$ has all elements $\frac{1}{n}$ and $\frac{1}{m}$ respectively.

\end{document}